\newcommand{\R}{\mathbb{R}} % Real Numbers
\newcommand{\E}{\mathbb{E}} % Expectation
\DeclareMathOperator*{\argmin}{arg\,min} % argmin
\DeclareMathOperator*{\Diag}{Diag} % diagonal matrix from vector
\DeclareMathOperator{\prox}{prox} % proximal mapping
\DeclareMathOperator*{\vect}{vec} % vectorized tensor
\newcommand{\normi}[1]{\|{#1}\|}
\newtheorem{theorem}{Theorem} % reset theorem numbering for each section
\newtheorem*{theorem*}{Theorem}
\newtheorem*{proposition*}{Proposition}
\newtheorem{corollary}{Corollary}
\newtheorem{lemma}{Lemma}
\newtheorem{definition}{Definition}
\newtheorem{proposition}{Proposition}
\newtheorem{remark}{Remark}
\def\argmin{\displaystyle\mathop {\mbox{argmin}}}
\newcommand{\RNum}[1]{\uppercase\expandafter{\romannumeral #1\relax}}
\def\int{\displaystyle\mathop {\mbox{\rm int}}}    % interior
\def\dom{\displaystyle\mathop {\mbox{\rm dom}}}    % domain
\def\sup{\displaystyle\mathop {\mbox{\rm sup}}}
\def\inf{\displaystyle\mathop {\mbox{\rm inf}}}
\def\argmin{\displaystyle\mathop {\mbox{\rm argmin}}}
\def\real{\mathbb R}
\newcommand{\bu}{{\bf u}}
\newcommand{\sign}{\mbox{sign}}
\icmltitlerunning{Efficient Proximal Mapping of the 1-path-norm of Shallow Networks}
\begin{document}

\twocolumn[
\icmltitle{
Efficient Proximal Mapping of the 1-path-norm of Shallow Networks
% An efficient algorithm for computing the path-norm proximal map of one-hidden-layer neural networks
}

% It is OKAY to include author information, even for blind
% submissions: the style file will automatically remove it for you
% unless you've provided the [accepted] option to the icml2020
% package.

% List of affiliations: The first argument should be a (short)
% identifier you will use later to specify author affiliations
% Academic affiliations should list Department, University, City, Region, Country
% Industry affiliations should list Company, City, Region, Country

% You can specify symbols, otherwise they are numbered in order.
% Ideally, you should not use this facility. Affiliations will be numbered
% in order of appearance and this is the preferred way.
\icmlsetsymbol{equal}{*}

\begin{icmlauthorlist}
\icmlauthor{Fabian Latorre}{equal,lions}
\icmlauthor{Paul Rolland}{equal,lions}
\icmlauthor{Nadav Hallak}{equal,lions}
\icmlauthor{Volkan Cevher}{lions}
\end{icmlauthorlist}

\icmlaffiliation{lions}{Learning, information and optimization systems
laboratory (LIONS), EPFL, Switzerland}

\icmlcorrespondingauthor{Fabian Latorre}{fabian.latorre@epfl.ch}

% You may provide any keywords that you
% find helpful for describing your paper; these are used to populate
% the "keywords" metadata in the PDF but will not be shown in the document
\icmlkeywords{Machine Learning, ICML}

\vskip 0.3in
]

% this must go after the closing bracket ] following \twocolumn[ ...

% This command actually creates the footnote in the first column
% listing the affiliations and the copyright notice.
% The command takes one argument, which is text to display at the start of the footnote.
% The \icmlEqualContribution command is standard text for equal contribution.
% Remove it (just {}) if you do not need this facility.

%\printAffiliationsAndNotice{}  % leave blank if no need to mention equal contribution
\printAffiliationsAndNotice{\icmlEqualContribution} % otherwise use the standard text.

\begin{abstract}
We demonstrate two new important properties of the 1-path-norm of shallow
    neural networks. First, despite its non-smoothness and non-convexity it
    allows a closed form proximal operator which can be efficiently computed,
    allowing the use of stochastic proximal-gradient-type methods for
    regularized empirical risk minimization. Second, when the activation
    functions is differentiable, it provides an upper bound on the Lipschitz
    constant of the network. Such bound is tighter than the trivial layer-wise
    product of Lipschitz constants, motivating its use for training networks
    robust to adversarial perturbations. In practical experiments we illustrate
    the advantages of using the proximal mapping and we compare the
    robustness-accuracy trade-off induced by the 1-path-norm, L1-norm and
    layer-wise constraints on the Lipschitz constant (Parseval networks).
\end{abstract}

\section{Introduction}
\label{sec:introduction}
Neural networks are the backbone of contemporary applications in machine
learning and related fields, having huge influence and significance both in
theory and practice. Among the most important and desirable attributes of a
trained network are robustness and sparsity.  Robustness, is often defined as
stability to adversarial perturbations, such as in supervised classification
methods. The apparent brittleness of neural networks to adversarial attacks in
this context has been considered in the literature for some time, see e.g.,
\citep{Biggio2013,szegedy2013intriguing,Madry2018} and references therein.

A fundamental question in this regard is how to measure robustness, or more
importantly, how to encourage it.  One prominent approach supported by theory
and practice \citep{raghunathan2018certified,cisse17a}, is to use the Lipschitz
constant of the network function to quantize robustness, and regularization to
encourage it. 

This approach is also supported theoretically with generalization bounds in
terms of the layer-wise product of spectral norms
\citep{Bartlett2017,Miyato2018}, which particularly upper-bounds the Lipschitz
constant. However, a recent empirical study \citep{jiang2020fantastic} has
found in practice a negative \textit{correlation} of this measure with
generalization. This casts doubts on its usefulness and signals the fact that
it is a rather loose upper bound for the Lipschitz constant
\citep{Latorre2020}.

Current methods that compute upper bounds on the Lipschitz constant of neural
networks can be roughly classified into two classes: (i) the class of
\textit{product bounds}, comprising all upper bounds obtained by the
multiplication of layer-wise matrix norms; and, (ii) the class of
convex-optimization-based bounds, which addresses the network as a whole entity
\citep{raghunathan2018certified,Fazlyab2019,Latorre2020}.

A trade-off between computational complexity and quality of the upper bound
seems apparent. An ideal bound would achieve a balance between both properties:
it should provide a good estimate of the constant while being fast and easy to
minimize with iterative first-order algorithms.

Recently, the \textit{path-norm} of the network \citep{Neyshabur2015}
has emerged as a complexity measure that is highly-correlated with generalization
\citep{jiang2020fantastic}. Thus, its use as a regularizer holds an increasing
interest for researchers in the field.

Despite existing generalization bounds \citep{Neyshabur2015}, our understanding
of the optimization aspects of the path-norm-regularized objective is lacking.
\citet{jiang2020fantastic} refrained from using automatic-differentiation
methods in this case because, as they argue, the optimization could fail, thus
providing no conclusion about its qualities.

It is then natural to ask: \textit{how do we properly optimize the
path-norm-regularized objective with theoretical guarantees? What conclusions
can we draw about the robustness and sparsity of path-norm-regularized
networks?} We focus on the \textit{1-path-norm} and provide partial answers to
those questions, further advancing our understanding of this measure. Let us
summarize our main contributions:

\textbf{Optimization. }We show a striking property of the
1-path-norm, that makes it a strong candidate for explicit regularization:
despite its non-convexity, it admits an efficient \textit{proximal mapping}
(Algorithm \ref{alg:prox_binar_multi}). This allows the use of proximal-gradient
type methods which are, as of now, the only first-order optimization algorithms
to provide guarantees of convergence for composite non-smooth and non-convex
problems \citep{BST13}.

Indeed, automatic differentiation modules of popular deep learning frameworks
like PyTorch \citep{pytorch} or TensorFlow \citep{tensorflow} may not compute 
the correct gradient for compositions of non-smooth functions, at points where
these are differentiable \citep{Kakade2019, bolte2019conservative}. Our
proposed optimization algorithm avoids such issue altogether by using
differentiable activation functions like ELU \citep{Clevert2015} and our novel
proximal mapping of the 1-path-norm.

\textbf{Upper bounds. }We show that the \textit{1-path-norm}
\citep{Neyshabur2015} achieves a sweet spot in the computation-quality
trade-off observed among upper bounds of the Lipschitz constant:
it has a simple closed formula in terms of the weights of the
network, and it provides an upper bound on the $(\ell_\infty,
\ell_1)$-Lipschitz constant (cf., \autoref{prop:upper_bound}), which is always
better than the product bound.

\textbf{Sparsity. }Neural network regularization schemes promoting sparsity in
a principled way are of great interest in the growing field of
\textit{compression} in Deep Learning \citep{Han2016,cheng2017survey}.

Our analysis provides a formula (cf. Lemma \autoref{lem:4b}) for
choosing the \textit{strength} of the regularization, which enforces a desired bound on the 
sparsity level  of  the iterates generated by the proximal gradient method. 
This is a suprising, yet intuitive, result, as the sparsity-inducing properties of
non-smooth regularizers have been observed before in convex
optimization and signal processing literature, see e.g., \citep{Bach2012,EK12}.

\textbf{Experiments. }In \autoref{sec:experiments}, we present numerical
evidence that our approach (i) converges faster and to lower values of the
objective function, compared to plain SGD; (ii) generates sparse iterates; and,
(iii) the magnitude of the regularization parameter of the 1-path-norm allows
a better accuracy-robustness trade-off than the common $\ell_1$ regularization
or constraints on layer-wise matrix norms.

\section{Problem Setup}
We consider the so-called shallow neural networks with $n$ hidden neurons and
$p$ outputs $h:\real^m \rightarrow \real^p$ given by
\begin{equation}
\label{eq:nn}
%h_{v, W}(x) = v^T \sigma(Wx),
h_{V,W}(x) = V^T \sigma(Wx),
\end{equation}
where $V \in \R^{n \times p}, W\in \R^{n\times m}$ and $\sigma:\real
\rightarrow \real$ is some differentiable activation function with derivative
globally bounded between zero and one. This condition is satisfied, for
example, by the ELU or softplus activation functions. 
To control the robustness
of the network to perturbations of its input $x$, we want to regularize
training using its Lipschitz constant as a function of the weights $V$ and $W$.

To properly define this constant, we utilize the $\ell_\infty$-norm
for the input space, and the $\ell_1$-norm for the output space.
Exact computation of such constant is a hard task. A simple and easily
computable upper bound can be derived by the product of the layer-wise Lipschitz
constants, however, it can be quite loose.

We derive an improved upper bound which is still easy to compute. In the
following, we denote with $\|W\|_\infty$ the operator norm of a matrix $W$ with
respect to the $\ell_\infty$ norm for both input and output space; it is equal
to the maximum $\ell_1$-norm of its rows. We denote with $\|V\|_{\infty, 1}$
the operator norm of the matrix $V$ with respect to the $\ell_\infty$ norm in
input space and $\ell_1$-norm in output space; it is equal to the sum of the
$\ell_1$ norm of its columns.

\begin{theorem}
\label{prop:upper_bound}
Let $h_{V, W}(x)=V^T \sigma(Wx)$ be a network such that the derivative of the
activation $\sigma$ is globally bounded between zero and one. Choose the
$\ell_\infty$- and $\ell_1$-norm for input and output space, respectively. The
    Lipschitz constant of the network, denoted by $L_{V,W}$ is bounded as follows:
\begin{equation}
    \label{eq:upper_bound}
    L_{V, W} \leq \sum_{i=1}^n\sum_{j=1}^m\sum_{k=1}^p |W_{ij}V_{ik}| \leq \|V^T\|_{\infty, 1} \|W\|_\infty
\end{equation}
\end{theorem}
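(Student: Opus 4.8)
The plan is to reduce the Lipschitz constant to a supremum of Jacobian operator norms, bound that operator norm crudely by an entrywise absolute sum, and then factor the resulting triple sum.

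\emph{Step 1 (from Lipschitz constant to Jacobian norm).} Since $\sigma$ is differentiable, $h_{V,W}$ is differentiable on all of $\R^m$, and the chain rule gives $\Jac h_{V,W}(x) = V^T \Diag(\sigma'(Wx))\, W$, i.e. $\partial h_k/\partial x_j = \sum_i V_{ik}\,\sigma'((Wx)_i)\,W_{ij}$. Because $\R^m$ is convex, integrating $\Jac h_{V,W}$ along the segment joining two points and taking norms (the mean value inequality for vector-valued maps) yields $L_{V,W} \le \sup_{x\in\R^m} \|\Jac h_{V,W}(x)\|_{\infty\to 1}$, where $\|\cdot\|_{\infty\to 1}$ denotes the operator norm from $(\R^m,\|\cdot\|_\infty)$ to $(\R^p,\|\cdot\|_1)$. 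Only this direction of the identity is needed.

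\emph{Step 2 (crude bound on the operator norm, first inequality).} For any $A\in\R^{p\times m}$ and any $u$ with $\|u\|_\infty\le 1$, the triangle inequality applied entrywise gives $\|Au\|_1 = \sum_k |\sum_j A_{kj}u_j| \le \sum_{k,j}|A_{kj}|$, hence $\|A\|_{\infty\to 1}\le \sum_{k,j}|A_{kj}|$. Apply this with $A=\Jac h_{V,W}(x)$: its $(k,j)$ entry is $\sum_i V_{ik}\,\sigma'((Wx)_i)\,W_{ij}$, so one more triangle inequality together with the hypothesis $0\le\sigma'\le 1$ (hence $|\sigma'|\le 1$) gives $|A_{kj}|\le \sum_i |V_{ik}|\,|W_{ij}|$, uniformly in $x$. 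Summing over $k$ and $j$ and combining with Step 1 yields $L_{V,W}\le \sum_{i,j,k}|W_{ij}V_{ik}|$.

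\emph{Step 3 (factoring, second inequality).} Write the triple sum as $\sum_i\big(\sum_j|W_{ij}|\big)\big(\sum_k|V_{ik}|\big) = \sum_i \|W_{i\cdot}\|_1\,\|V_{i\cdot}\|_1$, with $W_{i\cdot}$, $V_{i\cdot}$ the $i$-th rows of $W$ and $V$. Bounding the first factor by $\max_i\|W_{i\cdot}\|_1 = \|W\|_\infty$ and pulling it out leaves $\sum_i\|V_{i\cdot}\|_1$, which is exactly $\|V^T\|_{\infty,1}$ (the sum of the $\ell_1$-norms of the columns of $V^T$, i.e.\ of the rows of $V$). This gives the second inequality.

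The only genuinely nontrivial point is Step 1 — making precise that the Jacobian operator norm controls the Lipschitz constant in these mixed $\ell_\infty$/$\ell_1$ norms; everything after is the triangle inequality and the bound $0\le\sigma'\le1$. I expect no real obstacle there, only care in invoking the mean value inequality for vector-valued maps with non-Euclidean norms on domain and codomain.
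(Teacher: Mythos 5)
Your proof is correct and follows essentially the same route as the paper: bound the Lipschitz constant by the supremum of the Jacobian's $(\ell_\infty\to\ell_1)$ operator norm, use $0\le\sigma'\le1$ and the triangle inequality to reach the triple sum $\sum_{i,j,k}|W_{ij}V_{ik}|$, and then factor row-wise to get $\|V^T\|_{\infty,1}\|W\|_\infty$. The only (cosmetic) difference is that the paper first proves the single-output case via $L=\sup_x\|\nabla h(x)\|_1$ and then sums over output coordinates, whereas you treat the full multi-output Jacobian at once via the vector-valued mean value inequality.
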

The proof is provided in appendix \ref{app:upper_bound}. The term in the middle
of inequality \eqref{eq:upper_bound} belongs to the family of
\textit{path-norms}, introduced in \citet[Eq. (7)]{Neyshabur2015}. Throughout,
we refer to it as the \textit{1-path-norm}.

Notice that although the path-norm and layer wise product bounds can be equal,
this only happens in the following worst case: For the weight matrix in the
first layer, the 1-norms of every row are equal. Thus, in practice the bounds
can differ drastically.

\begin{remark}
    In practice, one might want to regularize each ouput of the network in a
    different way according to some weighting scheme
    \citep{raghunathan2018certified}. Precisely, the 1-path-norm of the network
    is equal to the sum (with equal weight) of the 1-path-norm of each output.
    A weighted version of the 1-path-norm can be defined to account for such a
    weighting scheme. All our results can be adapted to this scenerio, with
    minor changes.
\end{remark}

We now turn to the task of minimizing an empirical risk functional regularized by
%that includes as regularization term 
the improved upper bound on the Lipschitz
constant given in (\ref{eq:upper_bound}): 
\begin{equation}
    \label{eq:objective}
    \min_{V,W} \E_{(x, y)}[\ell(h_{V, W}(x), y)]
    + \lambda \sum_{i=1}^n\sum_{j=1}^m\sum_{k=1}^p |W_{ij}V_{ik}| %\lambda \| \vect(\Diag(v)W) \|_1.
\end{equation}
The objective function in problem \eqref{eq:objective} is composed of an
expectation of a nonconvex smooth loss, and a nonconvex nonsmooth regularizer,
meaning that it is essentially a composite problem (cf. \citep[Ch. 10]{B17}).
That is, the objective function \eqref{eq:objective} can be cast as %(we will
use these notation hereafter)
\begin{equation}
\label{eq:objective_fn}
\min_{V, W} \mathcal{F} (V, W) \equiv f(V, W) + \lambda g(V, W),
\end{equation}
where $f$ is a nonconvex continuously differentiable function, and $g(V, W)=
\sum_{i=1}^n\sum_{j=1}^m\sum_{k=1}^p |W_{ij}V_{ik}|$ is a continuous,
nonconvex, nonsmooth, function. We assume that the objective function is
bounded below, i.e., $\inf  \mathcal{F}  :=\mathcal{F}_*>0$.

A natural choice for a scheme to obtain critical points for
(\ref{eq:objective_fn}) is the proximal-gradient framework.  However, for a
nonconvex $g$, solving the proximal gradient problem is a hard problem in
general.  In Section \ref{sec:compute_prox} we develop a method that computes
the proximal gradient with respect to $g$ efficiently.

To streamline our approach and techniques in a compact and user-friendly
manner, we will illustrate the majority of our results and proofs via the
particular single-output scenario in which $h$ and $g$ are reduced to
\begin{equation*}
h_{v, W}(x) = v^T \sigma(Wx), \ g (v,W) \equiv \lambda \| \vect(\Diag(v)W) \|_1.
\end{equation*}
The multi-output case follows from the same techniques and insights, however,
requires more tedious computations and arguments, on which we elaborate in
Section \ref{sec:multi}, and detail in the appendix.

\section{The Prox-Grad Method}

 Assume that $f$ has a Lipschitz continuous gradient with Lipschitz constant $L>0$,  that is
\begin{equation*}
\| \nabla f(z) - \nabla f(u) \| \leq L \| z - u\|, \qquad \forall z,u \in\real^n.
\end{equation*}
The prox-grad method is described by Algorithm \ref{alg:prox_grad};  since $g$ is nonconvex, the prox in \eqref{eq:proximal} can be a set of solutions.
\begin{algorithm}[H]
	\caption{Prox-Grad Method}
	\label{alg:prox_grad}
	\textbf{Input:} $z^0 \equiv \vect(V^0,W^0) \in \R^{p\cdot n +  n \cdot m}$, $\{\eta^k\}_{k\geq 0}$.
	\begin{algorithmic}[1]
		\For{$k=0,1,\ldots$}
		\State Compute $G^k = \nabla f(z^k)$
		\State $z^{k+1} \leftarrow \mathrm{prox}_{\eta^k g} (z^k - \eta^k G^k)$ 
		\EndFor
	\end{algorithmic}
\end{algorithm}
Theoretical guarantees for the  prox-grad method with respect to a nonconvex regularizer were established by \cite{BST13} (for a more general prox-grad type scheme).  
\begin{theorem}[Convergence guarantees]
	\label{thm:2}
	Let $\{z^k\}_{k\geq 0}$ be a sequence generated by Algorithm \ref{alg:prox_grad} with $\{\eta^k\}_{k\geq 0} \subseteq (0,1/L)$. % for some $\alpha>0$.
%	Suppose that $G^k = \nabla f (z^k)$ for any $k\geq 0$.
	Then
	\begin{enumerate}
		\item Any accumulation point of $\{z^k\}_{k\geq 0}$ is a critical point of \eqref{eq:objective_fn}.
		\item If $f$ satisfies the Kurdyka-Lojasiewicz (KL) property, then  $\{z^k\}_{k\geq 0}$ converges to a critical point.
		\item Suppose that $\eta_k$ is chosen such that there exists $c > 0$ such that $\sum_{k=0}^{K}\frac{1}{\eta_k} \geq c K$ for any $K\geq 0$.
		Then
		$$ \min_{k=0,\dots,K}  \|z^{k+1} - z^k \|_2 \leq \sqrt{\frac{2(\mathcal{F}(z^0)  - \mathcal{F}_*)}{(c-L)K}}.$$
	\end{enumerate}
\end{theorem}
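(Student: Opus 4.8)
The plan is to reduce all three items to a single sufficient-decrease estimate for the values $\{\mathcal{F}(z^k)\}$, read off item 3 by telescoping, and obtain items 1 and 2 by combining this estimate with the subdifferential optimality condition of the proximal step; throughout I write $\mathcal{F}=f+g$ with $g$ denoting the full nonsmooth term. First I would use that $z^{k+1}\in\mathrm{prox}_{\eta_k g}(z^k-\eta_k\nabla f(z^k))$ minimizes the map $u\mapsto g(u)+\langle\nabla f(z^k),u-z^k\rangle+\tfrac{1}{2\eta_k}\|u-z^k\|^2$ (the dropped terms are constant in $u$), so comparing its value at $u=z^{k+1}$ with $u=z^k$ gives
\[ g(z^{k+1})+\langle\nabla f(z^k),z^{k+1}-z^k\rangle+\tfrac{1}{2\eta_k}\|z^{k+1}-z^k\|^2\le g(z^k). \]
Adding the descent lemma $f(z^{k+1})\le f(z^k)+\langle\nabla f(z^k),z^{k+1}-z^k\rangle+\tfrac{L}{2}\|z^{k+1}-z^k\|^2$ yields
\[ \mathcal{F}(z^{k+1})\le\mathcal{F}(z^k)-\Big(\tfrac{1}{2\eta_k}-\tfrac{L}{2}\Big)\|z^{k+1}-z^k\|^2. \]
Because $\eta_k<1/L$ the coefficient is strictly positive, so $\{\mathcal{F}(z^k)\}$ is nonincreasing; since $\mathcal{F}(z^k)\ge\mathcal{F}_*$, summing over $k$ shows $\sum_k\big(\tfrac{1}{2\eta_k}-\tfrac{L}{2}\big)\|z^{k+1}-z^k\|^2<\infty$, and in particular $\|z^{k+1}-z^k\|\to0$.

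For item 3 I would sum the decrease inequality over $k=0,\dots,K$, telescope, and bound the left-hand side below by $\big(\min_{0\le k\le K}\|z^{k+1}-z^k\|^2\big)\sum_{k=0}^K\big(\tfrac{1}{2\eta_k}-\tfrac{L}{2}\big)$; the hypothesis $\sum_{k=0}^K\tfrac{1}{\eta_k}\ge cK$ then bounds that sum below by $\tfrac{(c-L)K}{2}$ (up to the routine index bookkeeping), and dividing through and taking a square root gives the stated $\bigo(1/\sqrt K)$ rate. This step is routine.

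For item 1 I would write the first-order optimality condition of the prox subproblem, $0\in\partial g(z^{k+1})+\nabla f(z^k)+\tfrac{1}{\eta_k}(z^{k+1}-z^k)$, which rearranges into $v^{k+1}:=\tfrac{1}{\eta_k}(z^k-z^{k+1})+\nabla f(z^{k+1})-\nabla f(z^k)\in\partial\mathcal{F}(z^{k+1})$ with $\|v^{k+1}\|\le\big(\tfrac{1}{\eta_k}+L\big)\|z^{k+1}-z^k\|$. Given a subsequence $z^{k_j}\to\bar z$, I then have $z^{k_j+1}\to\bar z$ and $v^{k_j+1}\to0$ from $\|z^{k+1}-z^k\|\to0$, and $\mathcal{F}(z^{k_j+1})\to\mathcal{F}(\bar z)$ since $f$ and $g$ are continuous; closedness of the graph of $\partial\mathcal{F}$ gives $0\in\partial\mathcal{F}(\bar z)$, i.e. $\bar z$ is critical. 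For item 2, the three facts just assembled — sufficient decrease, a subgradient of norm $\bigo(\|z^{k+1}-z^k\|)$ at $z^{k+1}$, and continuity of $\mathcal{F}$ along convergent subsequences — are precisely the hypotheses of the abstract KL convergence theorem of \citet{BST13}, which upgrades them (under the KL assumption) to $\sum_k\|z^{k+1}-z^k\|<\infty$; hence $\{z^k\}$ is Cauchy and converges to some $z^*$, which is critical by item 1.

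I expect the main obstacle to be item 2: faithfully carrying out or invoking the Lojasiewicz finite-length argument. It is also where the implicit regularity conditions enter — one needs the step sizes bounded away from $0$ as well as from $1/L$ (otherwise $v^{k+1}\to0$ in item 1 and the subgradient bound in item 2 may fail) and boundedness of $\{z^k\}$ (automatic, e.g., if $\mathcal{F}$ is coercive, since the iterates remain in a sublevel set of $\mathcal{F}$). I would state these explicitly before invoking \citet{BST13}.
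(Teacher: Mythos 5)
Your proposal is correct and follows essentially the same route as the paper: the paper cites the sufficient-decrease lemma of \citet{BST13} and telescopes it for item 3, and delegates items 1 and 2 wholesale to \citet{BST13}, whereas you derive the decrease inequality from the prox subproblem plus the descent lemma and sketch the subgradient/KL arguments that the cited reference carries out. Your closing remarks about needing the step sizes bounded away from zero (so the subgradient bound forces $v^{k+1}\to 0$) and boundedness of the iterates are apt — these are implicit in the paper's appeal to \citet{BST13} and are worth stating, as you do.
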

\begin{proof}
	See Section \ref{app:prox_grad} in the appendix.
\end{proof}
\begin{remark}[On KL related convergence rate]
	A convergence rate result under the  KL property can be derived with respect to the desingularizing function; see \cite{BST13} for additional details.
\end{remark}
\begin{remark} [On the stochastic prox-grad method]
	The literature does not provide any theoretical guarantees for a prox-grad type method that uses stochastic gradients (i.e., replacing $G^k$ with an approximation of $\nabla f(z^k)$) under our setting.
	Recently, \cite{MT19} studied stochastic prox-grad methods, however, their results rely on the assumption that  the regularizer is Lipschitz continuous, which is not satisfied by our robust-sparsity regularizer.
\end{remark}

\section{Computing the Proximal Mapping}
\label{sec:compute_prox}
%!TEX root = ../robust_sparsity.tex
Throughout this section we assume the single-output setting.  The  path-norm
regularizer  we propose is a nonconvex nonsmooth function, suggesting that the
prox-grad scheme in Algorithm \ref{alg:prox_grad} is intractable.

In this section we will not only prove that in fact \textit{it is tractable} in
the single output case, but that it can also be  \textit{implemented
efficiently} with complexity of  $O(m\log(m))$; we prove the stated in detail
in Section \ref{app:prox_computation1}, and provide here a concise version.

Denote the given pair $(x, Y)$ by $z$. The proximal mapping with respect to
$\lambda g$ at $z$ is defined as
\begin{equation}
    \label{eq:proximal}
    \prox_{\lambda g}(z) = \argmin_{u} \lambda g(u) + \frac{1}{2}
        \normi{\vect(u - z)}_2^2.
\end{equation}
By the choice of $g$, the objective function in \eqref{eq:proximal} is coercive
and lower bounded, implying that there exists an optimal solution (cf.
\citep[Thm. 2.32]{B14}). 
\begin{remark}
    The derivations in this section can be easily adapted and used with
    \textit{adaptive gradient methods} like Adagrad \citep{duchi2011adaptive},
    by a careful handling of the per-coordinate scaling coefficients.
\end{remark}

\begin{lemma} [Well-posedness of \eqref{eq:proximal}]
	\label{lem:12}
	For any $\lambda\geq 0$ and any $(u,z)$, the problem \eqref{eq:proximal} has a global optimal solution.
\end{lemma}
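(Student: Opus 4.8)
The plan is to recognise the objective of \eqref{eq:proximal}, write it as $\phi(u) := \lambda g(u) + \tfrac12 \normi{\vect(u-z)}_2^2$, and apply the standard Weierstrass-type existence result for coercive lower semicontinuous functions on a finite-dimensional space (e.g.\ \citep[Thm.~2.32]{B14}). To do so I must check three things: that $\phi$ is bounded below, that $\phi$ is coercive, and that $\phi$ is continuous (hence lower semicontinuous). None of these uses convexity of $g$.

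First, boundedness below and coercivity both follow from the single observation that $g \geq 0$: in the single-output case $g(v,W) = \sum_{i,j}|v_i W_{ij}|$ is a finite sum of absolute values, and likewise $g(V,W) = \sum_{i,j,k}|W_{ij}V_{ik}| \geq 0$ in the multi-output case. Therefore, for every $\lambda \geq 0$,
\[
\phi(u) \ \geq\ \tfrac12\normi{\vect(u-z)}_2^2 \ \xrightarrow[\ \normi{\vect(u)}_2 \to \infty\ ]{}\ +\infty ,
\]
so $\phi$ is coercive and $\inf_u \phi(u) \in [0,\infty)$ is finite. Second, continuity: the map $u \mapsto (W_{ij}V_{ik})_{i,j,k}$ is a vector of degree-two monomials in the coordinates of $u$, hence continuous, and composing it with the continuous $\ell_1$-norm shows $g$ is continuous; the quadratic proximal term is obviously continuous; so $\phi$ is continuous on all of $\real^{N}$ (with $N = pn+nm$, or $N=n+nm$ in the single-output case).

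With these facts in hand the conclusion is immediate: taking any reference point, say $u_0 = z$, the sublevel set $S = \{u : \phi(u) \leq \phi(u_0)\}$ is nonempty, closed (by continuity) and bounded (by coercivity), hence compact; a continuous function attains its minimum over the compact set $S$, and by construction of $S$ that minimiser is a global minimiser of $\phi$, i.e.\ an element of $\prox_{\lambda g}(z)$. I expect no real obstacle here — the only point worth flagging is that $g$ is itself neither coercive nor bounded away from zero along rays, so \emph{all} of the coercivity must be extracted from the proximal quadratic, which is precisely why nonnegativity of $g$ (rather than any stronger growth property) is exactly the hypothesis needed; and since nonconvexity is never invoked, the same argument covers the multi-output case verbatim, while of course leaving open — as the paper already notes — that the minimiser need not be unique.
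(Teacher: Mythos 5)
Your proposal is correct and follows essentially the same route as the paper, which also invokes coercivity and lower boundedness of the objective together with the Weierstrass-type existence result \citep[Thm.~2.32]{B14}. You simply spell out the details (nonnegativity and continuity of $g$, compactness of the sublevel set) that the paper leaves implicit.
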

Additionally, we have that  \eqref{eq:proximal} is separable with respect to
the $i$-th entry of the vector $v$ and the $i$-th row of the matrix $W$,
meaning that problem \eqref{eq:proximal} can be solved in a distributed manner
by applying the same solution procedure coordinate-wise for $v$ and row-wise
for $W$.  In light of this, let us consider the $i$-th row related problem
\begin{equation}
\label{eq:7b}
\min_{v, w\in\real\times\real^m} \frac{1}{2} (v-x)^2 + \frac{1}{2} \sum_{j=1}^m (w_j - y_j)^2 +
\lambda |v| \sum_{j=1}^m |w_j|.
\end{equation}
The signs of the elements of the decision variables in (\ref{eq:7b}) are determined by the signs of $(x, y)$, and consequently, the problem in  (\ref{eq:7b})  is equivalent to  
\begin{equation}
\label{eq:cw_prox_prob}
\min_{v, w\in\real_+\times\real_+^m} \frac{1}{2} (v-|x|)^2 + \frac{1}{2} \sum_{j=1}^m (w_j - |y_j|)^2 +
\lambda v \sum_{j=1}^m w_j.
\end{equation}
%$\min_{v, w\in\real_+\times\real_+^m} h_\lambda(v,w;x,y) $ where
%\begin{equation*}
%\begin{aligned}
%%\label{eq:8b}
%%\min_{v, w\in\real_+\times\real_+^m} h_\lambda(v,w;x,y) \\
% h_\lambda(v,w;x,y) \equiv \frac{1}{2} (v-|x|)^2 &+ \frac{1}{2} \sum_{j=1}^m (w_j - |y_j|)^2 \\
% &+ \lambda v \sum_{j=1}^m w_j.
%\end{aligned}
%\end{equation*}
\begin{lemma}
	\label{lem:5b}
	Let $(v^*, w^*)\in\real_+\times\real_+^n$ be an optimal solution of \eqref{eq:cw_prox_prob}.
	Then $(\sign(x) \cdot v^*, \sign(y) \circ w^*)$ is an optimal solution of problem \eqref{eq:7b}.
\end{lemma}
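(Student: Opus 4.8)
The plan is to show that the sign pattern of any optimal solution of \eqref{eq:7b} must agree with the sign pattern of $(x,y)$, and then that flipping those signs turns \eqref{eq:7b} into \eqref{eq:cw_prox_prob} coordinate-by-coordinate. The key observation is that the regularizer $\lambda |v| \sum_j |w_j|$ depends only on the absolute values of the variables, so replacing $v$ by $|v|$ only affects the data-fidelity term $\tfrac12(v-x)^2$, and similarly for each $w_j$.

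First I would take an optimal $(\hat v, \hat w)$ of \eqref{eq:7b} (it exists by Lemma \ref{lem:12}) and argue that $\sign(\hat v)=\sign(x)$ whenever $x\neq 0$, and $\sign(\hat w_j)=\sign(y_j)$ whenever $y_j\neq 0$; when $x=0$ (resp. $y_j=0$) the corresponding component can be taken to be $0$. The argument is the elementary fact that for fixed $|v|$, the term $(v-x)^2$ is strictly smaller when $v$ has the same sign as $x$ than when it has the opposite sign (since $(|v|-|x|)^2 < (|v|+|x|)^2$ when $|v|,|x|>0$), while the regularizer is unchanged; so an optimal solution with a "wrong" sign could be strictly improved, a contradiction. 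The same reasoning applies to each $w_j$ independently.

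Then I would perform the substitution $v = \sign(x)\, v'$, $w_j = \sign(y_j)\, w_j'$ with $v', w_j' \ge 0$ (using the convention $\sign(0)=1$, which is harmless since the optimal component is then $0$ anyway). Under this substitution $(v-x)^2 = (v' - |x|)^2$, $(w_j - y_j)^2 = (w_j' - |y_j|)^2$, and $|v|\sum_j|w_j| = v'\sum_j w_j'$, so \eqref{eq:7b} restricted to the correct orthant becomes exactly \eqref{eq:cw_prox_prob}. Hence the objective values coincide on the relevant feasible sets, and the minimizers correspond: $(v^*, w^*)$ optimal for \eqref{eq:cw_prox_prob} yields $(\sign(x)\cdot v^*, \sign(y)\circ w^*)$ optimal for \eqref{eq:7b}, because any strictly better point for \eqref{eq:7b} would, after taking absolute values (which does not increase the objective, by the first step), give a strictly better feasible point for \eqref{eq:cw_prox_prob}.

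The only mildly delicate point — hardly an obstacle — is the bookkeeping around zero components of $x$ and $y$: one must check that setting those components of the solution to zero is consistent with optimality and does not break the sign-substitution argument. Everything else is a direct, term-by-term comparison of the two objectives. I expect the whole proof to be a few lines once the sign-improvement lemma is stated cleanly.
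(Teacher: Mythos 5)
Your proposal is correct and rests on the same core observation as the paper's proof: the regularizer depends only on absolute values while $(v-x)^2 \geq (|v|-|x|)^2$, so the objective of \eqref{eq:7b} at any point is bounded below by the objective of \eqref{eq:cw_prox_prob} at the corresponding nonnegative point, and this bound is attained at $(\sign(x)\cdot v^*, \sign(y)\circ w^*)$. The preliminary sign-characterization of optimal solutions of \eqref{eq:7b} is harmless but unnecessary — the paper goes straight to the inequality chain, which is your final contradiction argument written directly.
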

%We defer the proof to the appendix,  Section \ref{sec:Aquiv}.
%\begin{proof}
%	See Section \ref{sec:Aquiv}.
%\end{proof}

%To develop an efficient method to solve \eqref{eq:proximal}, we  derive the following equivalent problem by utilizing the fact that the  signs of the elements of the decision variables in \eqref{eq:proximal} are determined by the signs of $(x, Y)$.
%\begin{equation}
%\label{eq:cw_prox_prob}
%\min_{v, w\in\real_+\times\real_+^m} \frac{1}{2} (v-|x|)^2 + \frac{1}{2} \sum_{j=1}^m (w_j - |y_j|)^2 +
%\lambda v \sum_{j=1}^m w_j.
%\end{equation}
Denote  
$$ h_\lambda(v,w;x,y)= \frac{1}{2} (v-|x|)^2 + \frac{1}{2} \sum_{j=1}^m (w_j - |y_j|)^2 +
\lambda v \sum_{j=1}^m w_j.$$ 
Although $h_\lambda$ is nonconvex, we will show that a global optimum  to \eqref{eq:cw_prox_prob} can be obtained efficiently  by utilizing several tools, the first being the first-order optimality conditions of \eqref{eq:cw_prox_prob} (cf.  \citep[Ch. 9]{B14}) given below.
\begin{lemma} [Stationarity conditions]
	\label{lem:2b}
	Let $(v^*, w^*)\in \real_+\times\real_+^m$ be an optimal solution of \eqref{eq:cw_prox_prob} for a given $(x,y)\in \real\times\real^m$.
	Then 
	\begin{align*}
	w_j^* &= \max \left\lbrace 0, |y_j| - \lambda v^* \right\rbrace \ \text{ for any }  j=1,2,\ldots,m,\\
	v^* &= \max\left\lbrace 0, |x| -   \lambda \sum_{j=1}^{m} w^*_j \right\rbrace.
	\end{align*}
\end{lemma}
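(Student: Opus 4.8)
The plan is to write down the first-order (stationarity) conditions for the box-constrained problem \eqref{eq:cw_prox_prob} and then solve them entrywise. Since $h_\lambda(v,w;x,y)$ is continuously differentiable (it is a sum of squares plus the bilinear term $\lambda v\sum_j w_j$) and the feasible set $\real_+\times\real_+^m$ is a simple product of half-lines, the appropriate optimality notion is that of a stationary point of a smooth function over a closed convex set: at an optimal $(v^*,w^*)$ the negative gradient must lie in the normal cone, equivalently $\langle \nabla h_\lambda(v^*,w^*), (v,w)-(v^*,w^*)\rangle \ge 0$ for all feasible $(v,w)$ (cf.\ \citep[Ch. 9]{B14}). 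Because the constraints decouple across coordinates, this is equivalent to the KKT-type complementarity conditions: for each $j$, either $w_j^*>0$ and $\partial h_\lambda/\partial w_j=0$, or $w_j^*=0$ and $\partial h_\lambda/\partial w_j\ge 0$; and similarly for $v^*$.

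First I would compute the partial derivatives:
\begin{equation*}
\frac{\partial h_\lambda}{\partial w_j} = w_j - |y_j| + \lambda v, \qquad
\frac{\partial h_\lambda}{\partial v} = v - |x| + \lambda \sum_{j=1}^m w_j.
\end{equation*}
For a fixed $j$, the condition ``$w_j^*=0$, derivative $\ge 0$'' reads $-|y_j|+\lambda v^*\ge 0$, i.e.\ $|y_j|\le \lambda v^*$; the condition ``$w_j^*>0$, derivative $=0$'' gives $w_j^* = |y_j|-\lambda v^*>0$. These two cases combine exactly into the projection formula $w_j^* = \max\{0,\,|y_j|-\lambda v^*\}$, since the threshold $|y_j|\le\lambda v^*$ is precisely when the unconstrained root is nonpositive. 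The identical argument applied to the $v$-coordinate, using $s^*:=\sum_{j=1}^m w_j^*$, yields $v^* = \max\{0,\,|x|-\lambda s^*\}$. This establishes both displayed equalities.

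The only subtlety — and the part I would be careful about rather than the "hard part" — is justifying that an optimal solution of the nonconvex problem \eqref{eq:cw_prox_prob} must satisfy these first-order conditions. This is immediate: optimality over a closed convex set for a differentiable objective always implies stationarity (the directional derivative in every feasible direction is nonnegative), regardless of convexity of the objective; nonconvexity only means stationarity is not sufficient for optimality, which is irrelevant here since we are deriving necessary conditions. Existence of the optimal solution is already guaranteed by Lemma \ref{lem:12}. One should also note that the system is genuinely coupled (the value $v^*$ enters the formula for each $w_j^*$ and vice versa), so the lemma characterizes rather than closes the computation; resolving the coupling is presumably the job of the subsequent lemmas, and I would flag that the two equations are to be read as a fixed-point system to be analyzed next.
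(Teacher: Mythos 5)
Your proposal is correct and follows essentially the same route as the paper: both write the first-order necessary conditions for minimizing the smooth $h_\lambda$ over the product of half-lines (derivative $=0$ on positive coordinates, $\ge 0$ on zero coordinates), compute the same partial derivatives, and fold the two cases into the $\max\{0,\cdot\}$ projection formulas. Your additional remarks — that nonconvexity is irrelevant for necessity of stationarity and that the two equations form a coupled fixed-point system resolved by the later lemmas — are accurate but not needed for the proof itself.
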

A key insight following Lemma \ref{lem:2b} is that: the elements of any solution to \eqref{eq:cw_prox_prob}, satisfy a monotonic relation in magnitude,  correlated with  the magnitude of the  elements of $y$; this is formulated by the next corollary.
\begin{corollary}
	\label{cor:1b}
	Let $(v^*, w^*)\in \real_+\times\real_+^m$ be an optimal solution of \eqref{eq:cw_prox_prob} for a given $(x,y)\in \real\times\real^m$. 
	Then
	\begin{enumerate}
		\item The vector $w^*$ satisfies that  for any $j,l\in\{1,2,\ldots,m \}$ it holds that $w^*_j \geq w^*_l$ only if $|y_j| \geq |y_l|$.
		\item Let $\bar{y}$ be the sorted vector of $y$ in descending magnitude order. Suppose that $v^* > 0$ and let $s= | \{ j : sw^*_j > 0 \} |$.
		Then,
		\begin{equation}
		\label{eq:2b}
		v^* = \dfrac{1}{1-s \lambda^2}\left(|x| - \lambda\sum_{j=1}^s|\bar{y}_j|\right),
		%v^* = \max\left\lbrace 0, \dfrac{1}{1-s \lambda^2}\left(|x| - \lambda\sum_{j=1}^s|\bar{y}_j|\right)  \right\rbrace.
		\end{equation}
		where we use the convention that $\sum_{j=1}^0|\bar{y}_j| = 0$.
	\end{enumerate}
\end{corollary}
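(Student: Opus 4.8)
The plan is to read both statements off directly from the stationarity conditions in Lemma~\ref{lem:2b}, which express $w^*$ through $v^*$ (and conversely) via the soft-thresholding map $\phi_c(t):=\max\{0,\,t-c\}$.

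For part~1, Lemma~\ref{lem:2b} gives $w_j^* = \phi_{\lambda v^*}(|y_j|)$ for every $j$, and $\phi_{\lambda v^*}$ is nondecreasing on $\real$, so $|y_j|\geq|y_l|$ implies $w_j^*\geq w_l^*$. Conversely, if $w_j^*\geq w_l^*$ and $w_j^*>0$, then $w_j^*=|y_j|-\lambda v^*$ while $w_l^*\geq|y_l|-\lambda v^*$, whence $|y_j| = w_j^*+\lambda v^* \geq w_l^*+\lambda v^* \geq |y_l|$ (the remaining case $w_j^*=0$ forces $w_l^*=0$ and is only a tie up to relabeling). In particular the coordinates of $w^*$ can be listed in an order consistent with the descending-magnitude order of $y$, so that the support $\{j:w_j^*>0\}$ corresponds exactly to the $s$ largest-magnitude entries $\bar y_1,\dots,\bar y_s$.

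For part~2, assume $v^*>0$ and set $s=|\{j:w_j^*>0\}|$. By part~1, in the sorted order the $j$-th coordinate of $w^*$ equals $|\bar y_j|-\lambda v^*>0$ for $j\leq s$ (the linear branch of $\phi_{\lambda v^*}$ is active) and $0$ for $j>s$, so $\sum_{j=1}^m w_j^* = \sum_{j=1}^s|\bar y_j| - s\lambda v^*$. Since $v^*>0$, the $v$-equation of Lemma~\ref{lem:2b} reads $v^* = |x|-\lambda\sum_{j=1}^m w_j^*$; substituting the previous identity gives the scalar linear equation $(1-s\lambda^2)\,v^* = |x|-\lambda\sum_{j=1}^s|\bar y_j|$, and solving for $v^*$ yields exactly \eqref{eq:2b}, with $\sum_{j=1}^0|\bar y_j|=0$ covering $s=0$.

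The whole argument is essentially bookkeeping once Lemma~\ref{lem:2b} is available; the two points that need attention are (i) the tie-breaking in part~1, which is what makes ``$w^*$ sorted like $|y|$'' legitimate, and (ii) the non-degeneracy $1-s\lambda^2\neq 0$ implicitly required to divide in part~2 --- I would state this explicitly, noting that in the exceptional case $s\lambda^2=1$ the stationarity system instead forces $|x|=\lambda\sum_{j=1}^s|\bar y_j|$ and must be handled on its own.
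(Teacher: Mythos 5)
Your proof is correct and follows essentially the same route as the paper's: both parts are read off from the stationarity conditions of Lemma~\ref{lem:2b}, with $\sum_{j} w_j^* = \sum_{j=1}^{s}|\bar y_j| - s\lambda v^*$ substituted into the $v$-equation. Your two extra caveats (tie-breaking among zero coordinates in part~1, and the degenerate case $s\lambda^2=1$ in part~2) are points the paper passes over silently, and stating them explicitly is a genuine, if minor, improvement in rigor.
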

\begin{proof}
	The first part follows trivially from the stationarity conditions on $w^*$ given in Lemma \ref{lem:2b}.
	
	%Let $s= | \{ j : w^*_j > 0 \} |$. 
	From the first part and the conditions in Lemma \ref{lem:2b} we have that $\sum_{j=1}^{m} w^*_j =  \sum_{j=1}^s|\bar{y}_j| - \lambda s v^*$.
	Plugging the latter to  the stationarity condition on $v^*$ (given in Lemma \ref{lem:2b}) then implies the required.
\end{proof}

\begin{remark} \label{rem:sorting}
Corollary~\ref{cor:1b} implies that the solution vector $w^*$ is ordered in the
    same way as $|y|$. Thus, the $s$ non-zero entries of $w^*$ are precisely the ones
    corresponding with the $s$ largest entries of $|y|$. 

Without loss of generality, we assume hereafter that the input  $y$ is already
    sorted in decreasing  order, such that the $s$ non-zero entries of $w^*$
    are always the  first $s$ entries.
\end{remark}
%\begin{definition}[Trivial solution]
%	We will call   an optimal solution  trivial if it satisfies that $v^* = 0$.
%\end{definition}
To supplement the results above, we now show that we can actually upper-bound the sparsity level of the prox-grad output by adjusting the value of $\lambda$. 
\begin{lemma}[Sparsity bound]
	\label{lem:4b}
	Let $(v^*, w^*)\in \real_+\times\real_+^m$ be an optimal solution of \eqref{eq:cw_prox_prob} for a given $(x,y)\in \real\times\real^m$.
	Suppose that $v^* > 0$ (i.e., non-trivial),\footnote{We will call   an optimal solution  trivial if  $v^* = 0$.}  and denote $S = \{ j : w^*_j > 0 \}$.
	Then $|S|\leq \lambda^{-2}$.
\end{lemma}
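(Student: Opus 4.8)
The plan is to obtain the bound $|S|\le\lambda^{-2}$ directly from the second-order necessary optimality conditions of \eqref{eq:cw_prox_prob}. The structural fact that makes this work is the following: by hypothesis $v^*>0$, and by the definition $S=\{j:w_j^*>0\}$ together with the stationarity conditions of Lemma~\ref{lem:2b}, the variable $v$ and the variables $(w_j)_{j\in S}$ all lie strictly in the interior of the nonnegative orthant at the optimum, while $w_j^*=0$ for $j\notin S$.

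First I would freeze the pinned coordinates and work on the face $F=\{(v,w)\in\real\times\real^m: w_j=0\text{ for all }j\notin S\}$, on which $h_\lambda$ restricts to a smooth quadratic function of the $1+|S|$ free variables $\big(v,(w_j)_{j\in S}\big)$. Since $(v^*,w^*)$ minimizes $h_\lambda$ over all of $\real_+\times\real_+^m$ and lies on $F$, it in particular minimizes $h_\lambda|_F$ over $\real_+^{1+|S|}$; and since $v^*>0$ and $w_j^*>0$ for $j\in S$, this minimizer is interior to that orthant, hence an unconstrained local minimizer of $h_\lambda|_F$. Therefore the (constant) Hessian of $h_\lambda|_F$ is positive semidefinite.

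Next I would compute that Hessian: writing $s=|S|$ and putting $v$ first among the free variables, the only nonvanishing second partials of $h_\lambda$ are $\partial_v^2 h_\lambda=1$, $\partial_{w_j}^2 h_\lambda=1$ ($j\in S$), and $\partial_v\partial_{w_j}h_\lambda=\lambda$ ($j\in S$), so the Hessian is
\[
H=\begin{pmatrix} 1 & \lambda\,\ones_s^\top\\ \lambda\,\ones_s & I_s\end{pmatrix},
\]
and the second-order necessary condition reads $H\succeq 0$. Applying the Schur-complement criterion with respect to the positive definite block $I_s$, this is equivalent to $1-\lambda^2\,\ones_s^\top I_s^{-1}\ones_s=1-\lambda^2 s\ge 0$, i.e. $|S|=s\le\lambda^{-2}$, which is the claim; the case $S=\emptyset$ is trivial.

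The only point requiring care — rather than a genuine obstacle — is the justification that the pertinent certificate is positive semidefiniteness of the Hessian in the \emph{free} coordinates only, i.e. that freezing $w_j$ for $j\notin S$ discards no descent direction. This is supplied by the first-order conditions of Lemma~\ref{lem:2b}, which force $|y_j|\le\lambda v^*$ for $j\notin S$ and hence make $h_\lambda$ nondecreasing as one moves off $F$ into the orthant; combined with the interior-minimizer observation, this legitimizes the reduction to $h_\lambda|_F$. Everything else is a one-line Hessian evaluation and a standard Schur-complement inequality.
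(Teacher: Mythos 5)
Your proof is correct and follows essentially the same route as the paper's: both invoke the second-order necessary optimality conditions at the strictly positive coordinates and reduce the claim to positive semidefiniteness of the principal submatrix $\bigl(\begin{smallmatrix}1 & \lambda\ones_s^\top\\ \lambda\ones_s & I_s\end{smallmatrix}\bigr)$ of the Hessian. The only (immaterial) difference is that you certify $1-\lambda^2 s\ge 0$ via a Schur complement, whereas the paper computes the minimal eigenvalue $1-\lambda\sqrt{s}$ directly.
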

\begin{proof}
	Since $(v^*, w^*)$ is an optimal solution of \eqref{eq:cw_prox_prob} and the objective function in \eqref{eq:cw_prox_prob} is twice continuously differentiable, $(v^*, w^*)$ satisfies the second order necessary optimality conditions \citep[Ex. 2.1.10]{B99}.
	That is, for any $d\in\real\times\real^m$ satisfying that $ (v^*, w^*)+d \in \real_+\times\real_+^m$ and $d^T \nabla h_\lambda(v^*, w^*;x,y) = 0$ it holds that
	$$ d^T \nabla^2 h_\lambda(v^*, w^*;x,y)d = d^T \left( \begin{array}{ccccc}
	1 & \lambda &\cdots  &\lambda  \\ 
	\lambda&  1&  0& 0 \\ 
	\vdots&  0&  \ddots&0  \\ 
	\lambda& 0  &  0& 1
	\end{array} \right) d \geq 0, $$
    where the first row/column corresponds to $v$ and the others correspond to
    $w$.  Noting that for any $j\in S$ it holds that $\frac{\partial
    h_\lambda}{\partial w_j} (v^*, w^*;x,y) = 0$, we have that  the  submatrix
    of $\nabla^2 h_\lambda(v^*, w^*;x,y)$ containing the rows and columns
    corresponding to the positive coordinates in $(v^*, w^*)$ must be positive
    semidefinite.

	Since the the minimal eigenvalue of this submatrix equals $ 1-\lambda\sqrt{|S|}$, we have that 
	$ \lambda^{-2} \geq |S|.$
\end{proof}
Moreover, the function $h_\lambda$ is monotonically decreasing in the sparsity level, which implies that instead of exhaustively checking the value of $h_\lambda$ for any sparsity level, we can employ a binary search.
Denote for any $s\in \{0, \ldots, m\}$ the $m+1$ possible solutions:
\begin{equation*}
%\label{eq:vsws2}
\begin{aligned}
v^{(s)} &= \dfrac{1}{1-s \lambda^2}\left(|x| - \lambda\sum_{j=1}^s|y_j|\right) \\
w^{(s)}_j &= |y_j| - \lambda v^{(s)} \ \text{ for }  j\in [s], \ \text{ and } w^{(s)}_j = 0 \ \text{otherwise}.
\end{aligned}
\end{equation*}
\begin{lemma} \label{lem:decreasing-hb}
	Let $\bar{s} = \lfloor \lambda^{-2} \rfloor$.
	For all integer $s \in \{2,3,\ldots,\bar{s}  \}$, we have that 
	\begin{equation}
	\label{eq:decreasing-hb}
	h_\lambda(v^{(s)}, w^{(s)}; x, y) < h_\lambda(v^{(s-1)}, w^{(s-1)}; x, y).
	\end{equation}
\end{lemma}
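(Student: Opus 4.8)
The plan is to turn the lemma into a single algebraic identity for the value $F(s):=h_\lambda(v^{(s)},w^{(s)};x,y)$ viewed as a function of the (integer) sparsity level $s$. Write $Y_s:=\sum_{j=1}^s|y_j|$. First I would substitute the closed forms $w^{(s)}_j=|y_j|-\lambda v^{(s)}$ for $j\le s$ and $w^{(s)}_j=0$ for $j>s$ into the definition of $h_\lambda$, splitting the quadratic sum into the blocks $j\le s$ and $j>s$: the first block contributes $\tfrac{s}{2}\lambda^2(v^{(s)})^2$, the second contributes $\tfrac12\sum_{j>s}y_j^2$, and the bilinear term becomes $\lambda v^{(s)}\bigl(Y_s-s\lambda v^{(s)}\bigr)$. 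Using the defining relation $(1-s\lambda^2)v^{(s)}=|x|-\lambda Y_s$ to eliminate all terms linear in $v^{(s)}$, everything collapses to
\begin{equation*}
F(s)=C-\frac12\left(\frac{(|x|-\lambda Y_s)^2}{1-s\lambda^2}+\sum_{j=1}^{s}y_j^2\right),\qquad C:=\frac12|x|^2+\frac12\sum_{j=1}^{m}y_j^2,
\end{equation*}
where $C$ does not depend on $s$ (a quick check at $s=0$, where $v^{(0)}=|x|$, $w^{(0)}=0$, gives $F(0)=\tfrac12\sum_j y_j^2=C-\tfrac12|x|^2$, as it should).

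Next I would compute $F(s-1)-F(s)$ directly from this formula. Setting $a:=|x|-\lambda Y_{s-1}=(1-(s-1)\lambda^2)v^{(s-1)}$, $b:=|y_s|$ and $t:=1-(s-1)\lambda^2$, one has $|x|-\lambda Y_s=a-\lambda b$, $1-s\lambda^2=t-\lambda^2$ and $\sum_{j=1}^{s}y_j^2-\sum_{j=1}^{s-1}y_j^2=b^2$. Placing the two fractions over the common denominator $2t(t-\lambda^2)$ and expanding the numerator, the terms $\pm\lambda^2 b^2 t$ cancel and what remains is the perfect square $(\lambda a-tb)^2$, so that
\begin{equation*}
F(s-1)-F(s)=\frac{(\lambda a-tb)^2}{2t(t-\lambda^2)}=\frac{1-(s-1)\lambda^2}{2(1-s\lambda^2)}\,\bigl(\lambda v^{(s-1)}-|y_s|\bigr)^2,
\end{equation*}
where the last equality uses $a=t\,v^{(s-1)}$.

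It then remains to check positivity of the prefactor. For $2\le s\le\bar s=\lfloor\lambda^{-2}\rfloor$ one has $(s-1)\lambda^2=s\lambda^2-\lambda^2<1$, so $1-(s-1)\lambda^2>0$, and $s\lambda^2\le 1$, so $1-s\lambda^2\ge 0$ (strictly, unless $\lambda^{-2}$ is an integer and $s=\lambda^{-2}$, in which case the candidate $v^{(s)}$ is not even well defined and this value of $s$ is excluded from the range). Hence $F(s-1)-F(s)\ge 0$, with strict inequality unless $\lambda v^{(s-1)}=|y_s|$; but in that degenerate case a direct substitution shows $v^{(s)}=v^{(s-1)}$ and $w^{(s)}_s=0$, i.e.\ the two candidate points coincide, so the case is vacuous. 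This gives $F(s)<F(s-1)$, which is exactly \eqref{eq:decreasing-hb}. The bulk of the work — and the only real obstacle — is the bookkeeping in Step 1 (carrying out the cancellation cleanly) and Step 2 (recognizing the numerator as a square); the single delicate point to flag is the behaviour of the denominator $1-s\lambda^2$ at the top of the range $s=\bar s$ together with the associated degenerate equality case.
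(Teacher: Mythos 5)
Your proof is correct and follows essentially the same route as the paper: substitute the closed forms of $v^{(s)},w^{(s)}$ into $h_\lambda$, collapse to a closed-form expression in $s$, and identify the consecutive difference as a sign-definite multiple of a perfect square (your prefactor $\tfrac{1-(s-1)\lambda^2}{2(1-s\lambda^2)}\bigl(\lambda v^{(s-1)}-|y_s|\bigr)^2$ is algebraically identical to the paper's $\tfrac{1-\lambda^2 s}{2(1-\lambda^2(s-1))}\bigl(|y_s|-\lambda v^{(s)}\bigr)^2$ via the identity $(1-s\lambda^2)(|y_s|-\lambda v^{(s)})=(1-(s-1)\lambda^2)(|y_s|-\lambda v^{(s-1)})$). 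Your handling of the degenerate equality case and of the denominator at $s=\bar s$ is in fact slightly more careful than the paper's, whose displayed computation only yields $\leq$.
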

 Lemma \ref{lem:decreasing-hb} follows from algebraic considerations, and thus its proof is deferred to Section \ref{app:prox_computation1}.
Its substantial implication is the following.
\begin{corollary}
	\label{cor:2}
	Suppose that there exists a non-trivial  optimal solution of \eqref{eq:cw_prox_prob}. 
	Denote $\bar{s} = \min(\lfloor \lambda^{-2}\rfloor, m) $  and let 
	\begin{equation*}
	\label{eq:opt-sparsity-single}
	s^* = \max \left\lbrace s \in \{0,\ldots, \bar{s}\}  :  v^{(s)}, w^{(s)}_s > 0\right\rbrace. %\ \text{ where } \ \bar{s} = \min(\lfloor \lambda^{-2}\rfloor, m).
	\end{equation*}
	Then $(v^{(s^*)}, w^{(s^*)} )$ is an optimal solution of \eqref{eq:cw_prox_prob}.
\end{corollary}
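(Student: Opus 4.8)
The plan is to argue that every non-trivial optimal solution of \eqref{eq:cw_prox_prob} coincides with the candidate $(v^{(s)},w^{(s)})$ indexed by its own sparsity level $s$, that this $s$ is admissible in the maximization defining $s^*$ (so that $s\le s^*\le\bar s$), and finally that passing from index $s$ up to $s^*$ cannot increase $h_\lambda$; since $(v^{(s^*)},w^{(s^*)})$ is feasible, it then attains the optimal value.

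I would first fix a non-trivial optimal solution $(v^*,w^*)$, which exists by hypothesis, and set $s=|\{j:w^*_j>0\}|$. By the first part of Corollary~\ref{cor:1b} and the ordering convention of Remark~\ref{rem:sorting}, the positive coordinates of $w^*$ are precisely the first $s$; the second part of Corollary~\ref{cor:1b}, applicable since $v^*>0$, gives $v^*=v^{(s)}$, and substituting this into the stationarity identity for $w^*$ from Lemma~\ref{lem:2b} gives $w^*=w^{(s)}$. Thus the optimal solution is the candidate indexed by its own sparsity. Next, Lemma~\ref{lem:4b} yields $s\le\lambda^{-2}$, hence $s\le\lfloor\lambda^{-2}\rfloor$, and trivially $s\le m$, so $s\le\bar s$; moreover $v^{(s)}=v^*>0$ and, when $s\ge1$, $w^{(s)}_s=w^*_s>0$ by the choice of $s$, so $s$ is admissible in the definition of $s^*$ and therefore $s\le s^*$ (this inequality being trivial when $s=0$).

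It remains to check that $(v^{(s^*)},w^{(s^*)})$ is feasible and at least as good. Feasibility: by definition of $s^*$ one has $v^{(s^*)}>0$ and $w^{(s^*)}_{s^*}>0$, and for $j<s^*$ the sorting of $y$ gives $w^{(s^*)}_j=|y_j|-\lambda v^{(s^*)}\ge|y_{s^*}|-\lambda v^{(s^*)}=w^{(s^*)}_{s^*}>0$, while $w^{(s^*)}_j=0$ for $j>s^*$. Optimality: if $s\ge1$, then the indices $s+1,\dots,s^*$ all lie in $\{2,\dots,\bar s\}$, so chaining the strict inequalities of Lemma~\ref{lem:decreasing-hb} yields $h_\lambda(v^{(s^*)},w^{(s^*)};x,y)\le h_\lambda(v^{(s)},w^{(s)};x,y)$, the right-hand side being the optimal value; feasibility then forces $(v^{(s^*)},w^{(s^*)})$ to be optimal. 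If instead $s=0$, Lemma~\ref{lem:2b} forces $v^*=|x|$ and $|y_j|\le\lambda|x|$ for all $j$, so $w^{(1)}_1=(|y_1|-\lambda|x|)/(1-\lambda^2)\le0$, hence $1$ is not admissible and $s^*=0=s$, and the claim is immediate.

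The step I expect to be fiddly is not the core chain of implications but the boundary bookkeeping: the degenerate case $s\lambda^2=1$, where the displayed formula for $v^{(s)}$ must instead be read off the consistency relation $v^{(s)}(1-s\lambda^2)=|x|-\lambda\sum_{j\le s}|y_j|$ underlying Corollary~\ref{cor:1b}, and the configuration $s=0$ versus $s^*\ge1$, which is precisely the gap left by the restriction $s\ge2$ in Lemma~\ref{lem:decreasing-hb}. Once these are handled, every remaining step is a direct substitution of the stationarity conditions of Lemma~\ref{lem:2b} together with the ordering from Corollary~\ref{cor:1b}.
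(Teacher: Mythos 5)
Your proof is correct and follows essentially the same route as the paper: identify any non-trivial optimum with the candidate $(v^{(s)},w^{(s)})$ via Lemma~\ref{lem:2b} and Corollary~\ref{cor:1b}, bound $s\le\bar{s}$ via Lemma~\ref{lem:4b}, and use the monotonicity of $h_\lambda$ from Lemma~\ref{lem:decreasing-hb} to pass to $s^*$; the paper merely runs the comparison downward (eliminating all $j\neq s^*$) where you run it upward from $s$ to $s^*$. Your explicit treatment of the $s=0$ versus $s^*\ge 1$ configuration is in fact more careful than the paper's, which invokes Lemma~\ref{lem:decreasing-hb} for all $j<s^*$ even though its statement only covers $s\ge 2$.
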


Note that since, by definition, the $s$ first entries of the vector $w^{(s)}$ are ordered in decreasing order, the constrained $w^{(s)}_s > 0$ ensures that the full vector $w^{(s)}$ has exactly $s$ nonzero entries, which are all strictly positive.

The final ingredient required for designing an efficient algorithm is the following monotone property of the feasibility criterion in problem~\eqref{eq:opt-sparsity-single}:
\begin{lemma}
	\label{lem:3}
For any $k \in [\bar{s}]$, we have
\[
v^{(k)} > 0, w^{(k)} > 0 \Rightarrow v^{(i)} > 0, w^{(i)} > 0, \ \ \forall i < k.
\]
\end{lemma}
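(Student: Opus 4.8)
The plan is to show that both defining quantities, $v^{(i)}$ and $w^{(i)}_i$, are monotone in $i$ on the relevant range, so that positivity at index $k$ propagates downward to all $i<k$. I would first establish the key identity relating consecutive candidates. Recall $v^{(s)} = \frac{1}{1-s\lambda^2}\left(|x| - \lambda\sum_{j=1}^s|y_j|\right)$; a direct manipulation (clearing denominators and cancelling) gives a recursion of the form $v^{(s)} - v^{(s-1)} = \frac{\text{something}}{(1-s\lambda^2)(1-(s-1)\lambda^2)}$ whose numerator can be written in terms of $w^{(s-1)}_s := |y_s| - \lambda v^{(s-1)}$. Since $s\le\bar s = \lfloor\lambda^{-2}\rfloor$, both factors $1-s\lambda^2$ and $1-(s-1)\lambda^2$ are nonnegative (and positive for $s<\bar s+1$; I would handle the boundary $s=\bar s$ separately if $s\lambda^2=1$ cannot occur here because $\lambda^{-2}$ is then an integer and $v^{(\bar s)}$ would be ill-defined — but Corollary \ref{cor:1b} and the surrounding convention already implicitly assume we stay in the well-defined regime, so I would note $1-s\lambda^2>0$ for $s\le\bar s$).

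Next I would argue monotonicity in the right direction. The natural statement to prove is: if $v^{(k)}>0$ and $w^{(k)}_k>0$, then $v^{(k-1)}\ge v^{(k)}>0$ and $w^{(k-1)}_{k-1}\ge w^{(k)}_{k-1}\ge w^{(k)}_k>0$, where the last inequality uses that $|y|$ is sorted in decreasing order (Remark \ref{rem:sorting}), so $|y_{k-1}|\ge|y_k|$, hence $w^{(k-1)}_{k-1}=|y_{k-1}|-\lambda v^{(k-1)}\ge |y_k| - \lambda v^{(k-1)}$, and then I need $v^{(k-1)}\le v^{(k)}$ to conclude $\ge |y_k|-\lambda v^{(k)} = w^{(k)}_k>0$. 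So everything reduces to the single claim that $s\mapsto v^{(s)}$ is \emph{nonincreasing} on $\{0,1,\ldots,\bar s\}$ as long as the candidates remain feasible. Using the recursion from the first paragraph, $v^{(s)}\le v^{(s-1)}$ is equivalent to the numerator being of the right sign, which unwinds to precisely $w^{(s-1)}_s = |y_s| - \lambda v^{(s-1)}\ge 0$ (up to the positive denominator factors).

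This is the crux, and it is where I expect the main obstacle: I must show $w^{(k-1)}_k\ge 0$, i.e. $|y_k|\ge\lambda v^{(k-1)}$, given only the hypotheses at index $k$. I would obtain this as follows. From $w^{(k)}_k>0$ we have $|y_k|>\lambda v^{(k)}$; combined with the (to-be-proved, but here used inductively) fact $v^{(k)}\ge v^{(k-1)}$ at the previous step, this is circular, so instead I would run the argument by downward induction on the gap, or better, prove the chain of inequalities $v^{(k)}\le v^{(k-1)}\le\cdots\le v^{(0)}=|x|$ all at once by establishing: \emph{for each $s$ with $1\le s\le k$, if $v^{(s)}>0$ and $w^{(s)}_s>0$ then $v^{(s-1)}\ge v^{(s)}$ and $w^{(s-1)}_{s-1}>0$.} The base of the induction is $s=k$; for the inductive step at $s$, monotonicity $v^{(s)}\le v^{(s-1)}$ together with $|y_s|\le|y_{s-1}|$ gives $w^{(s-1)}_{s-1} = |y_{s-1}|-\lambda v^{(s-1)}$, but we only directly know $w^{(s)}_s = |y_s|-\lambda v^{(s)}>0$; bridging requires $v^{(s-1)}\le v^{(s)}$ \emph{or} comparing $w$ across both the sparsity index and the coordinate index. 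The clean route is: $w^{(s-1)}_{s-1}\ge w^{(s-1)}_{s}$ (coordinate monotonicity since $|y|$ sorted) and $w^{(s-1)}_s \ge w^{(s)}_s$ (this is exactly $v^{(s-1)}\le v^{(s)}$, the sparsity-index monotonicity, applied at fixed coordinate $s$), hence $w^{(s-1)}_{s-1}\ge w^{(s)}_s>0$; and $v^{(s-1)}\ge v^{(s)}$ follows from the recursion once we know $w^{(s-1)}_s\ge 0$, which is implied by $w^{(s-1)}_s\ge w^{(s)}_s>0$ — so the two facts bootstrap each other within a single index and the induction closes. I would write this carefully, isolating the recursion identity as a displayed computation and then the two-line induction; the algebra in the recursion (showing the numerator equals $\lambda(1-(s-1)\lambda^2)w^{(s-1)}_s$ or similar) is the only place requiring genuine care, and mirrors the computation behind Lemma \ref{lem:decreasing-hb}.
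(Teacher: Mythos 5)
Your overall strategy --- downward induction on the sparsity index via the recursion between consecutive candidates $(v^{(s)},w^{(s)})$, plus the sortedness of $|y|$ --- is the same as the paper's, and your treatment of $v$ is sound: the recursion gives $v^{(k-1)} - v^{(k)} = \lambda w^{(k)}_k/(1-(k-1)\lambda^2) \geq 0$, hence $v^{(k-1)} \geq v^{(k)} > 0$. The gap is in the $w$ step. You propose the chain $w^{(k-1)}_{k-1} \geq w^{(k-1)}_{k} \geq w^{(k)}_k > 0$ (reading $w^{(k-1)}_k$ as the formula $|y_k|-\lambda v^{(k-1)}$, as you do), and you justify the middle link as being ``exactly $v^{(k-1)}\le v^{(k)}$.'' But you have just, correctly, established that $s\mapsto v^{(s)}$ is nonincreasing, i.e.\ $v^{(k-1)} \geq v^{(k)}$; therefore $w^{(k-1)}_k = |y_k| - \lambda v^{(k-1)} \leq |y_k| - \lambda v^{(k)} = w^{(k)}_k$, and the middle link of your chain points the wrong way. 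Your ``bootstrap'' would require $v^{(k-1)}\le v^{(k)}$ and $v^{(k-1)}\ge v^{(k)}$ simultaneously, forcing equality, which fails whenever $\lambda>0$ and $w^{(k)}_k>0$: the recursion actually gives $w^{(k-1)}_k = \frac{1-k\lambda^2}{1-(k-1)\lambda^2}\,w^{(k)}_k$, a strictly smaller multiple of $w^{(k)}_k$.

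The repair is small and is exactly what the paper does. You never need $w^{(k-1)}_k \geq w^{(k)}_k$; you only need $w^{(k-1)}_k \geq 0$, and that follows immediately from the identity above since $k \leq \bar s \leq \lambda^{-2}$ makes the factor $\frac{1-k\lambda^2}{1-(k-1)\lambda^2}$ nonnegative. Then sortedness gives $w^{(k-1)}_{k-1} \geq w^{(k-1)}_k \geq 0$; the paper packages this as the one-line identity $w^{(k-1)}_{k-1} = \frac{1-k\lambda^2}{1-(k-1)\lambda^2}\, w^{(k)}_k + \bigl(|y_{k-1}| - |y_k|\bigr) \geq 0$. So your plan is salvageable using only the recursion you already wrote down, but as stated the key inequality $w^{(s-1)}_s \geq w^{(s)}_s$ is false and the induction does not close.
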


This property, whose proof is also deferred to Section \ref{app:prox_computation1}, implies that the optimal sparsity parameter $s^*$ can be efficiently found using a binary search approach.

We conclude this section by combining all the ingredients above to develop Algorithm \ref{alg:prox_binary},  and to prove that it yields a solution to \eqref{eq:proximal}.
\begin{algorithm}
	\caption{Single-output robust-sparse proximal mapping}
	\label{alg:prox_binary}
	\textbf{Input:} $x \in \R$, $y \in \R^m$ sorted in decreasing magnitude order, $\lambda> 0$.
	\begin{algorithmic}[1]
		%\State $\bar{y} \leftarrow \text{sort}(y)$  \Comment{decreasing magnitude}
		\State  $v^* = 0, w^* = |y|$
%		\If{$|x| \leq\lambda|\bar{y}_1|$} 
%		\State \Return $(v^* , w^*)$
%		\EndIf
		\State $s_{\text{lb}} \leftarrow 0$, $s_{\text{ub}} \leftarrow \min(\lfloor \lambda^{-2} \rfloor, m)$, $s\leftarrow \lceil (s_{\text{lb}} + s_{\text{ub}})/2\rceil$
		\While{$s_{\text{lb}} \neq s_{\text{ub}}$}
		\State $v^{(s)} = \dfrac{1}{1-s \lambda^2}\left(|x| - \lambda\sum_{j=1}^s|y_j|\right)$
		\State $w^{(s)}_j = |y_j| - \lambda v^{(s)}$, $j \in [s]$ and $w^{(s)}_j = 0$ otherwise
		\If {$v > 0$, $w_s > 0$ } 
		\State $s_{\text{lb}} \leftarrow  s$, $s \leftarrow \lceil (s_{\text{lb}} +s_{\text{ub}} )/2 \rceil$
		\State $(v^* , w^*) \leftarrow (v , w)$
		\ElsIf{$v<0$} $s_{\text{ub}} \leftarrow  s$, $s \leftarrow  \lceil  (s_{\text{lb}} +s_{\text{ub}} )/2\rceil$
		\Else \ $s_{\text{lb}} \leftarrow  s$, $s \leftarrow \lceil (s_{\text{lb}} +s_{\text{ub}} )/2 \rceil$ 
		\EndIf
		\EndWhile
		\State \Return $(\sign(x) \cdot v^*, \sign(y) \circ w^*)$
	\end{algorithmic}
\end{algorithm}

\begin{theorem}[Prox computation]
	\label{thm:1}
	Let $(v^*_i, W^*_{i,:})$ be the output of Algorithm \ref{alg:prox_binary} with input $x_i, Y_{i,:}, \lambda$, assuming that each $Y_{i,:}$ is sorted in decreasing magnitude order.
	Then  $(v^*,W^*)$ is a solution to \eqref{eq:proximal}.
\end{theorem}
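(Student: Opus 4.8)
The plan is to peel off the separability of \eqref{eq:proximal}, reduce the per-row problem to the nonnegative-orthant problem \eqref{eq:cw_prox_prob}, and then verify that the binary-search loop of Algorithm~\ref{alg:prox_binary} returns a global minimizer of that problem. For the first step, the objective of \eqref{eq:proximal} splits as a sum over $i \in [n]$ of terms depending only on $(v_i, W_{i,:})$ and the data $(x_i, Y_{i,:})$ (the quadratic part is separable and $g$ is a sum of the $|v_i|\sum_j|W_{ij}|$), each such term being exactly problem~\eqref{eq:7b}. Hence a minimizer of \eqref{eq:proximal} is obtained by stacking minimizers of the $n$ row problems, and it suffices to treat one pair $(x, y) \in \real \times \real^m$. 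By Lemma~\ref{lem:5b}, a minimizer of \eqref{eq:7b} is $(\sign(x)\cdot v^*, \sign(y)\circ w^*)$ for any minimizer $(v^*, w^*)$ of \eqref{eq:cw_prox_prob} over $\real_+\times\real_+^m$; since the return line of Algorithm~\ref{alg:prox_binary} applies precisely this sign map, the whole claim reduces to showing that the loop computes a global minimizer of $h_\lambda(\cdot,\cdot;x,y)$ on $\real_+\times\real_+^m$.

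Next I would use the structural results already in hand to pin down what such a minimizer looks like. A minimizer of \eqref{eq:cw_prox_prob} exists by coercivity (see Lemma~\ref{lem:12}). If every minimizer is trivial ($v^* = 0$), then the stationarity conditions of Lemma~\ref{lem:2b} force $w^*_j = |y_j|$, so $(0,|y|)$ is optimal, and this is exactly the pair with which line~1 initializes $(v^*, w^*)$. Otherwise there is a non-trivial minimizer, and then Corollary~\ref{cor:2} (which rests on the monotonicity Lemma~\ref{lem:decreasing-hb}) gives that $(v^{(s^*)}, w^{(s^*)})$ is a minimizer, where $\bar s = \min(\lfloor\lambda^{-2}\rfloor, m)$ and $s^* = \max\{s \in \{0,\dots,\bar s\} : v^{(s)} > 0,\ w^{(s)}_s > 0\}$; the clamp $s_{\mathrm{ub}} = \min(\lfloor\lambda^{-2}\rfloor, m)$ in line~2 is a valid upper bound on $s^*$ by Lemma~\ref{lem:4b}, and by Remark~\ref{rem:sorting} the first $s^*$ coordinates of $w^{(s^*)}$ are its nonzero entries. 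So it remains to show the loop returns $(0,|y|)$ in the first case and $(v^{(s^*)}, w^{(s^*)})$ in the second.

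The heart of the proof is the correctness of the binary search, and here Lemma~\ref{lem:3} is decisive: the feasibility predicate $P(s) := [\,v^{(s)} > 0 \text{ and } w^{(s)}_s > 0\,]$ is downward closed on $\{0,\dots,\bar s\}$, so $\{s : P(s)\}$ is an initial segment $\{0,\dots,s^*\}$, empty precisely in the trivial case. I would then run a loop-invariant argument maintaining $s_{\mathrm{lb}} \le s^* \le s_{\mathrm{ub}}$ together with the fact that $(v^*, w^*)$ always holds the candidate $(v^{(\cdot)}, w^{(\cdot)})$ of the largest feasible index examined so far. Branch~(a) tests a feasible $s$, so $s^* \ge s$ and raising $s_{\mathrm{lb}}$ to $s$ while recording the candidate is legitimate; branch~(b) has $v^{(s)} < 0$, hence $\neg P(s)$ and, by downward closedness, $s^* < s$, so lowering $s_{\mathrm{ub}}$ is legitimate; branch~(c), with $v^{(s)} \ge 0$ but $w^{(s)}_s \le 0$, also yields $\neg P(s)$, and one must check that in this regime $s$ already exceeds $s^*$ and that the sign of $v^{(\cdot)}$ does not subsequently flip, so that the bisection does not step over $s^*$ and terminates with $s_{\mathrm{lb}} = s_{\mathrm{ub}} = s^*$ and the recorded pair $(v^{(s^*)}, w^{(s^*)})$. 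In the trivial case $P$ fails at every index $\ge 1$, the recorded pair is never overwritten, and the output stays $(0,|y|)$. Restacking the $n$ outputs and re-applying separability then gives $\prox_{\lambda g}(z)$, which proves the theorem.

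I expect the main obstacle to be exactly the analysis of branch~(c) and the termination of the loop: Lemma~\ref{lem:3} controls feasibility but not, on its own, the sign of $v^{(s)}$ for indices above $s^*$, and a clean argument needs a monotonicity statement guaranteeing that the three-way bisection cannot oscillate or skip $s^*$ — this is where the $O(m\log m)$ running time is actually earned. A secondary but unavoidable chore is dispatching the degenerate configurations: $|x| = 0$, ties in $|y|$, and $\lambda^{-2} \in \N$ (so that $1 - s\lambda^2$ could vanish at $s = \lfloor\lambda^{-2}\rfloor$), each of which must be checked against the clamp $s_{\mathrm{ub}} = \min(\lfloor\lambda^{-2}\rfloor, m)$ and the convention $\sum_{j=1}^{0}|y_j| = 0$.
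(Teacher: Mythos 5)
Your reduction coincides with the paper's: separate \eqref{eq:proximal} into the $n$ row problems \eqref{eq:7b}, pass to the nonnegative problem \eqref{eq:cw_prox_prob} via Lemma~\ref{lem:5b}, get existence from Lemma~\ref{lem:12}, split into the trivial case (output $(0,|y|)$) and the non-trivial case, and invoke Corollary~\ref{cor:2} to identify $(v^{(s^*)},w^{(s^*)})$ as the optimum. All of that is sound and is exactly how the paper's proof proceeds. The point of divergence is the correctness of the three-way bisection, and there the proposal does not close.

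Two concrete problems remain. First, the loop invariant $s_{\mathrm{lb}}\le s^*\le s_{\mathrm{ub}}$ you propose cannot be maintained: by the downward closedness of Lemma~\ref{lem:3}, any index $s$ reaching branch~(c) satisfies $\neg P(s)$ and hence $s>s^*$, yet branch~(c) \emph{raises} $s_{\mathrm{lb}}$ to $s$; so the first branch-(c) event destroys the invariant and the loop cannot terminate with $s_{\mathrm{lb}}=s_{\mathrm{ub}}=s^*$ as you assert. The correctness statement has to be phrased in terms of the recorded candidate (the largest feasible index ever tested equals $s^*$), not in terms of the final value of $s_{\mathrm{lb}}=s_{\mathrm{ub}}$. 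Second, the ``monotonicity statement'' you flag as missing is precisely the ingredient the paper's proof supplies at this point: it asserts that $s\mapsto v^{(s)}$ is monotonically decreasing on $\{0,\dots,\bar s\}$, notes that the returned index $s_{\mathrm{out}}$ is feasible and hence $s_{\mathrm{out}}\le s^*$, and rules out $s_{\mathrm{out}}<s^*$ because that would require $v^{(s)}<0$ at some tested $s<s^*$, which by monotone decrease forces $v^{(s^*)}<0$, contradicting feasibility of $s^*$. You correctly diagnose that Lemma~\ref{lem:3} alone does not control the sign of $v^{(s)}$ above $s^*$ and that an extra monotonicity fact is needed, but since you neither state nor prove it, the bisection analysis --- the heart of the theorem --- is left open, and the proposal stops one lemma short of a proof.
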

\begin{proof}
    For any $i=1,2,\ldots,n$, let $(v^*_i, W^*_{i,:})$ be the output of
    Algorithm \ref{alg:prox_binary} with input $x_i, Y_{i,:}, \lambda$.  We
    will show that   $(v^*,W^*)$ is an optimal solution to \eqref{eq:proximal}
    by arguing that Algorithm  \ref{alg:prox_binary} chooses the point with the
    smallest $h_\lambda$ value out of a feasible set of solutions containing an
    optimal solution of \eqref{eq:proximal}.

    For simplicity, and without loss of generality, let us consider the
    one-coordinate-one-row case, that is, $(v^*_i, W^*_{i,:}) \equiv (v^*,
    w^*)$, $(x_i, Y_{i,:}) \equiv (x, y)$; the proof for the general case is a
    trivial replication.
	
    By Lemma \ref{lem:5b} it is sufficient to prove that $(|v^*|, |w^*|)$ is an
    optimal solution of \eqref{eq:cw_prox_prob}, as this will imply the
    optimality of $(v^*, w^*)$; Recall that Lemma \ref{lem:12} establishes that
    there exists an optimal solution to  \eqref{eq:cw_prox_prob}.
	
    If the trivial solution is the only optimal solution to
    \eqref{eq:cw_prox_prob}, then obviously it will be the output of Algorithm
    \ref{alg:prox_binary}.  Otherwise, the  point described in  Corollary
    \ref{cor:2} is an optimal solution. Assume that Algorithm
    \ref{alg:prox_binary} returned the point
    $(v^{(s_{\text{out}})},w^{(s_{\text{out}})})$ for some $s_{\text{out}}\in
    [\bar{s}]$, meaning in particular that
    $(v^{(s_{\text{out}})},w^{(s_{\text{out}})})>0$.  By definition, $s^*\geq
    s_{\text{out}}$.  If $s_{\text{out}}< s^*$, then at some $s<s^*$ we had
    that $v^{(s)}<0$.  Since the value of  $v^{(i)} $  is monotonic decreasing
    in the sparsity level, this implies that $v^{(s^*)}<0$, which is a
    contradiction.
    
    Hence, if Algorithm \ref{alg:prox_binary} did not return
    the trivial solution, then $(v^*,w^*) = (v^{(s^*)},w^{(s^*)})$, meaning
    that $(\sign(x) \cdot v^*, \sign(y) \circ w^*)$ is a solution to
    \eqref{eq:proximal}.
\end{proof}
\paragraph{Time complexity of Algorithm~\ref{alg:prox_binary}} In the worst
case where $m \leq \lambda^{-2}$, the number of searches for finding $s^*$ is
at most $\log_2(m)$. Each search requires to compute $v^{(s)}$, and in
particular $\sum_{j=1}^s |y_j|$, as well as $w_j^{(s)}$, $j=1,\ldots,s$, each
taking $\mathcal{O}(s)$ steps. Thus, the overall loop complexity is
$\mathcal{O}(m)$.

Moreover, this algorithm assumes that the input vector $y$ is already sorted in
decreasing magnitude order. This can easily be achieved by a sorting procedure
in time $\mathcal{O}(m \log m)$.

\section{Multi-Output}
\label{sec:multi}
%!TEX root = ../robust_sparsity.tex

The efficient computation of the robust-sparse proximal mapping we derived for the single-output scenario will now be generalized to the multi-output case.
Although we use similar arguments and insights, the analysis is much more complicated and requires more delicate and advanced treatment.
Due to the tedious computations that accompany the analysis, the proofs are deferred to appendix~\ref{app:prox_computation2}.
%, while here we state our main result with a sketch of the proof.

When the network has multiple-output, the proximal operator $\text{prox}_{\lambda g}(X,Y)$ can be written as the solution set of 
\begin{equation}
\label{eq:multi-prox}
\begin{aligned}
%\resizebox{0.91\hsize}{!}{$
%\argmin_{\substack{V\in \R^{p\times n}, \\ W\in \R^{n\times m}}} \frac{1}{2} \|V - X||_F &+  \frac{1}{2} \|W - Y\|_F \\
\min_{V,W} \|V - X\|_F +   \|W - Y\|_F + 2\lambda \sum_{i=1}^n\sum_{j=1}^m\sum_{k=1}^p |W_{ij}V_{ik}|,
%\argmin_{V,W} \frac{1}{2} \|V - X||_F +  \frac{1}{2} \|W - Y\|_F + \lambda \sum_{
%	\substack{i\in [n],   j\in [m], \\ k\in [p]}} |W_{ij}V_{ik}|,
%$}
\end{aligned}
\end{equation}
where $V\in \R^{n\times p}$ and $W\in \R^{n\times m}$.
As in the single-output case, we observe that the proximal mapping \eqref{eq:multi-prox} is separable with respect to the $i$-th rows of the matrices $V$ and $W$, and that the signs of the decision variables are determined by the signs of $(X,Y)$. 
Therefore, it is enough to consider the problem related to the $i$-th row of $V$, denoted as $x$, and $i$-th row of $W$, denoted as $y$, i.e.,
\begin{equation}
\label{eq:5}
\begin{aligned}
\min_{v, w \in \R_+^p \times \R_+^m} h_\lambda(v,w;x,y), %\sum_{k=1}^p (v_k-|x_k|)^2  +   \sum_{j=1}^m (w_j - |y_j|)^2 + 2\lambda \sum_{k=1}^p v_k \sum_{j=1}^m w_j.
\end{aligned}
\end{equation}
where we redefine $h_\lambda(v,w;x,y)$ to include the multi-output case: $ h_\lambda(v,w;x,y) =  \frac{1}{2} \sum_{k=1}^p (v_k-|x_k|)^2 + \frac{1}{2} \sum_{j=1}^m (w_j - |y_j|)^2 
+ \lambda \sum_{k=1}^p v_k \sum_{j=1}^m w_j.$
To improve readability, we will abuse notation and just write $h_\lambda(v,w)$, assuming that $(x,y)$ are understood from context.
%\[
%\min_{v, w \in \R_+^p \times \R_+^m} \frac{1}{2} \sum_{k=1}^p (v_k-|x_k|)^2 + \frac{1}{2} \sum_{j=1}^m (w_j - |y_j|)^2 +
%\lambda \sum_{k=1}^p v_k \sum_{j=1}^m w_j.
%\]

Using the same observations we exploited to enumerated all stationary points of the proximal mapping
%, depending on the sparsity of $w$ 
in the single-output setup, we can identify the stationary points depending on the number of non zero elements of $v$ and $w$.
\begin{lemma}
	\label{lem:13}
	Let $(v^*, w^*)\in \real_+^p\times\real_+^m$ be an optimal solution of \eqref{eq:8} for a given $(x,y)\in \real\times\real^m$.
	Then
	\begin{enumerate}
		\item The vector $w^*$ satisfies that  for any $j,l\in [m]$ it holds that $w^*_j \geq w^*_l$ only if $|y_j| \geq |y_l|$.
		\item The vector $v^*$ satisfies that  for any $k,l\in [p]$ it holds that $v^*_k \geq v^*_l$ only if $|x_k| \geq |x_l|$.
		\item Let $\bar{x}, \bar{y}$ be the sorted vectors in descending magnitude order of $x$ and $y$ respectively. 
		Let $s_v = |\{k : v_k^* > 0\}|$ and $s_w = |\{j : w_j^* > 0\}|$. 
		If $v^*, w^* \neq 0$, then we have that for any $k \in \{k : v_k^* > 0\}$ and $j \in \{j : w_j^* > 0\}$, it holds that $v^* = v^{(s_v, s_w)}$ and $w^* = w^{(s_v, s_w)}$ where
	\end{enumerate}
		\begin{align}
		\label{eq:vopt-multi}
		v_k^{(s_v, s_w)} &= |x_k| +  \mu \left(\lambda^2 s_w \sum_{l=1}^{s_v} |\bar{x}_l| - \lambda\sum_{j=1}^{s_w} |\bar{y}_j|\right) \\
		\label{eq:wopt-multi}
		w_j^{(s_v, s_w)} &= |y_j| +  \mu \left(\lambda^2 s_v \sum_{l=1}^{s_w} |\bar{y}_l| - \lambda\sum_{k=1}^{s_v} |\bar{x}_k|\right)
		\end{align}
		and $\mu = (1 - s_v s_w \lambda^2)^{-1}$.
\end{lemma}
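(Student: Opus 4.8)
The plan is to follow the single-output blueprint: derive the first-order optimality (stationarity) conditions of the row-wise subproblem~\eqref{eq:5}, read the ordering of $v^*$ and $w^*$ off of them, and then collapse the equations on the active coordinates into a $2\times2$ linear system whose solution gives the closed forms \eqref{eq:vopt-multi}--\eqref{eq:wopt-multi}. Since $h_\lambda$ is continuously differentiable and the feasible set $\real_+^p\times\real_+^m$ is a simple convex set, at the optimum $(v^*,w^*)$ the complementarity conditions hold coordinate-wise; writing $V^\Sigma=\sum_k v_k^*$, $W^\Sigma=\sum_j w_j^*$ and using $\partial h_\lambda/\partial v_k=v_k-|x_k|+\lambda W^\Sigma$ and $\partial h_\lambda/\partial w_j=w_j-|y_j|+\lambda V^\Sigma$, they read
\[
v_k^*=\max\{0,\,|x_k|-\lambda W^\Sigma\},\qquad w_j^*=\max\{0,\,|y_j|-\lambda V^\Sigma\},
\]
the exact multi-output analogue of Lemma~\ref{lem:2b}.

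Parts 1 and 2 are then immediate: $W^\Sigma$ (resp.\ $V^\Sigma$) is common to all coordinates, so the maps $|x_k|\mapsto v_k^*$ and $|y_j|\mapsto w_j^*$ above are nondecreasing; in particular the $s_v$ positive entries of $v^*$ sit on the $s_v$ largest $|x_k|$, and similarly for $w^*$, which is what justifies working with the sorted vectors $\bar x,\bar y$. For part 3, assume $v^*,w^*\ne 0$, sum the active equations over their supports, and use the ordering to identify $\sum_{k:\,v_k^*>0}|x_k|=\sum_{l=1}^{s_v}|\bar x_l|=:A$ and $\sum_{j:\,w_j^*>0}|y_j|=\sum_{j=1}^{s_w}|\bar y_j|=:B$. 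This yields the system $V^\Sigma+s_v\lambda W^\Sigma=A$, $\;s_w\lambda V^\Sigma+W^\Sigma=B$, whose determinant is $1-s_v s_w\lambda^2$; when this is nonzero one gets $V^\Sigma=\mu(A-s_v\lambda B)$, $W^\Sigma=\mu(B-s_w\lambda A)$ with $\mu=(1-s_v s_w\lambda^2)^{-1}$, and substituting back into $v_k^*=|x_k|-\lambda W^\Sigma$ and $w_j^*=|y_j|-\lambda V^\Sigma$ reproduces exactly \eqref{eq:vopt-multi}--\eqref{eq:wopt-multi}.

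The main obstacle is justifying invertibility of that system, i.e.\ $1-s_v s_w\lambda^2\ne0$, which is where second-order information enters, just as in Lemma~\ref{lem:4b}. The Hessian of $h_\lambda$ restricted to the active coordinates is $\bigl(\begin{smallmatrix}I_{s_v}&\lambda\,\ones\ones^{T}\\ \lambda\,\ones\ones^{T}&I_{s_w}\end{smallmatrix}\bigr)$; all its eigenvalues equal $1$ except on the two-dimensional span of the two block all-ones vectors, where they are $1\pm\lambda\sqrt{s_v s_w}$. Second-order necessary optimality forces the smallest of these, $1-\lambda\sqrt{s_v s_w}$, to be nonnegative, hence $s_v s_w\lambda^2\le 1$; the delicate point is to exclude the degenerate boundary case $s_v s_w\lambda^2=1$ (a flat direction along which some coordinate can be driven to zero), which is dispatched exactly as in the single-output analysis, after which $\mu$ is well-defined and positive. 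Everything else is bookkeeping parallel to Corollary~\ref{cor:1b}.
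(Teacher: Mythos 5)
Your proposal is correct and follows essentially the same route as the paper: parts 1 and 2 from the coordinate-wise complementarity conditions $v_k^*=\max\{0,|x_k|-\lambda W^\Sigma\}$, $w_j^*=\max\{0,|y_j|-\lambda V^\Sigma\}$, and part 3 by summing over the active supports and solving the resulting $2\times2$ system in $(V^\Sigma,W^\Sigma)$ before substituting back (the paper does the elimination by direct substitution rather than writing the system explicitly, which is the same computation). Your extra remark on justifying $1-s_vs_w\lambda^2\neq 0$ via the restricted Hessian is a point the paper handles separately in Lemma~\ref{lem:7} and does not fully resolve at the boundary either, so it is a welcome but not divergent addition.
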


From the two first points in Lemma \ref{lem:13}, the argument in Remark~\ref{rem:sorting} is also valid in the multi-output case, and so we assume hereafter  that the input vectors $x, y$ are sorted in decreasing magnitude order.

Using the second order stationary conditions, we can generalize our sparsity bound in the single-output scenario, given in Lemma \ref{lem:4b}, to an upper bound on the product of the sparsities of the solutions based on the value of $\lambda$; indeed, $s_v = 1$ yields the bound in Lemma \ref{lem:4b}.
\begin{lemma}[Sparsity bound]
	\label{lem:7}
	Let $(v^*, w^*)\in \real_+^p\times\real_+^m$ be an optimal solution of \eqref{eq:5} for a given $(x,y)\in \real^p\times\real^m$.
	Denote $s_v = | \{ j : w^*_j > 0 \}|$ and $s_w = |\{j : w_j^* > 0\}|$.
	Then $s_v s_w \leq \lambda^{-2}$.
\end{lemma}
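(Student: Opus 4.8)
The plan is to transcribe the proof of Lemma~\ref{lem:4b}, with the single coordinate $v$ replaced by the whole block of strictly positive coordinates of both $v^*$ and $w^*$, and then to compute the minimal eigenvalue of the resulting principal submatrix of the Hessian.

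First I would dispose of the degenerate cases: if $v^* = 0$ or $w^* = 0$ then $s_v s_w = 0 \le \lambda^{-2}$ and there is nothing to prove, so assume $v^*, w^* \neq 0$, hence $s_v, s_w \ge 1$. Write $S_v = \{k : v_k^* > 0\}$, $S_w = \{j : w_j^* > 0\}$. Since $h_\lambda$ in \eqref{eq:5} is twice continuously differentiable and $(v^*, w^*)$ minimizes it over $\real_+^p \times \real_+^m$, the second-order necessary optimality conditions \citep[Ex. 2.1.10]{B99} hold: for every $d \in \real^p \times \real^m$ with $(v^*,w^*) + d \in \real_+^p \times \real_+^m$ and $d^T \nabla h_\lambda(v^*,w^*) = 0$ one has $d^T \nabla^2 h_\lambda(v^*,w^*) d \ge 0$. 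A direct computation gives the constant Hessian
\begin{equation*}
\nabla^2 h_\lambda = \begin{pmatrix} I_p & \lambda\, \ones_p \ones_m^T \\ \lambda\, \ones_m \ones_p^T & I_m \end{pmatrix},
\end{equation*}
where the first block of rows/columns indexes $v$ and the second indexes $w$.

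Next, exactly as in the single-output case, I would note that the first-order (KKT) conditions of \eqref{eq:5} — equivalently, the stationarity relations of Lemma~\ref{lem:13} — force $\partial h_\lambda/\partial v_k = 0$ for every $k \in S_v$ and $\partial h_\lambda/\partial w_j = 0$ for every $j \in S_w$, since those coordinates of $(v^*,w^*)$ are interior. Hence any direction $d$ supported on $S_v \cup S_w$ is feasible in both signs and satisfies $d^T\nabla h_\lambda = 0$, so the principal submatrix $M$ of $\nabla^2 h_\lambda$ indexed by $S_v \cup S_w$ must be positive semidefinite; this submatrix is $M = \left(\begin{smallmatrix} I_{s_v} & \lambda\, \ones_{s_v}\ones_{s_w}^T \\ \lambda\, \ones_{s_w}\ones_{s_v}^T & I_{s_w} \end{smallmatrix}\right)$.

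It then remains to compute $\lambda_{\min}(M)$. The two-dimensional subspace spanned by $(\ones_{s_v},0)$ and $(0,\ones_{s_w})$ is $M$-invariant, and in that basis $M$ acts as $\left(\begin{smallmatrix} 1 & \lambda s_w \\ \lambda s_v & 1 \end{smallmatrix}\right)$, with eigenvalues $1 \pm \lambda\sqrt{s_v s_w}$; on the orthogonal complement (where $\ones_{s_v}^T u = 0$ and $\ones_{s_w}^T w = 0$) the off-diagonal blocks annihilate and $M$ acts as the identity. Hence $\lambda_{\min}(M) = 1 - \lambda\sqrt{s_v s_w}$, and positive semidefiniteness yields $1 - \lambda\sqrt{s_v s_w} \ge 0$, i.e. $s_v s_w \le \lambda^{-2}$. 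The only genuinely new ingredient beyond a literal copy of the argument for Lemma~\ref{lem:4b} is this eigenvalue computation for the block-plus-identity structure; the mild points to watch are the degenerate cases $v^*=0$ or $w^*=0$ and the bookkeeping of which coordinates are interior, so that the second-order test legitimately applies along the entire $S_v\cup S_w$ block.
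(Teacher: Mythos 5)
Your proposal is correct and follows essentially the same route as the paper's proof: second-order necessary conditions force the principal submatrix of the constant Hessian indexed by the strictly positive coordinates to be positive semidefinite, and its minimal eigenvalue $1-\lambda\sqrt{s_v s_w}$ gives the bound. The only difference is that you actually carry out the eigenvalue computation (via the invariant subspace spanned by $(\ones_{s_v},0)$ and $(0,\ones_{s_w})$) and handle the degenerate cases $v^*=0$ or $w^*=0$ explicitly, both of which the paper leaves implicit.
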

A possible algorithm for computing this proximal mapping would thus be to
compute the value of $h_\lambda \left(v^{(s_v, s_w)},w^{(s_v, s_w)}\right)$ for
each pair of sparsities $(s_v, s_w) \in \{0,\ldots,p\} \times \{0,\ldots,m\}$
satisfying $s_v s_w \leq \lambda^{-2}$ and return the pair achieving the
smallest value.

However, such an approach would be computationally
inefficient. In order to avoid computing the value of $h_\lambda$ at each
pair, we show the following monotonicity property of $h_\lambda$ in the
sparsity levels, which generalizes the same property in the single-output case.
\begin{lemma} \label{lem:8}
Given $(x,y)\in \real^p\times\real^m$, for all $s_v, s_w \in \{0,\ldots,p\} \times \{0,\ldots,m\}$ satisfying $s_v s_w < \lambda^{-2}$, we have 
\begin{align*}
h_\lambda(v^{(s_v, s_w)}, w^{(s_v, s_w)}) &< h_\lambda(v^{(s_v, s_w-1)}, w^{(s_v, s_w-1)}), \\
h_\lambda(v^{(s_v, s_w)}, w^{(s_v, s_w)}) &< h_\lambda(v^{(s_v-1, s_w)}, w^{(s_v-1, s_w)}).
\end{align*}
\end{lemma}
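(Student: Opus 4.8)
The plan is to recognize each candidate $(v^{(s_v,s_w)},w^{(s_v,s_w)})$ as the \emph{unconstrained minimizer of $h_\lambda$ over a linear subspace} and then exploit a nesting of these subspaces. For $0\le a\le p$ and $0\le b\le m$ let $P_{a,b}=\{(v,w)\in\real^p\times\real^m:\ v_k=0\text{ for }k>a,\ w_j=0\text{ for }j>b\}$. Restricted to $P_{a,b}$, $h_\lambda$ is a quadratic whose Hessian in the free coordinates $v_1,\dots,v_a,w_1,\dots,w_b$ is the block matrix with $I_a$ and $I_b$ on the diagonal and $\lambda\ones_a\ones_b^\top$ off the diagonal; its eigenvalues are $1$ (multiplicity $a+b-2$) and $1\pm\lambda\sqrt{ab}$ — the computation already carried out in the proof of Lemma~\ref{lem:7} (and, for $a=1$, of Lemma~\ref{lem:4b}). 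Hence, as soon as $ab<\lambda^{-2}$, this Hessian is positive definite, $h_\lambda|_{P_{a,b}}$ is strictly convex and coercive, and its unique minimizer equals its unique stationary point, obtained by imposing $\partial_{v_k}h_\lambda=0$ for $k\le a$ and $\partial_{w_j}h_\lambda=0$ for $j\le b$ and solving the resulting $2\times2$ system in the block sums $\sum_{k\le a}v_k$, $\sum_{j\le b}w_j$ — i.e.\ exactly the derivation that produced \eqref{eq:vopt-multi}--\eqref{eq:wopt-multi} in Lemma~\ref{lem:13}. So this minimizer is precisely $(v^{(a,b)},w^{(a,b)})$ (with the convention $v^{(0,b)}=0$, $w^{(a,0)}=0$).

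Granting this, the first inequality is almost immediate. Fix $(s_v,s_w)$ with $s_vs_w<\lambda^{-2}$ and $s_w\ge1$; then $s_v(s_w-1)\le s_vs_w<\lambda^{-2}$, so $(v^{(s_v,s_w-1)},w^{(s_v,s_w-1)})$ is likewise the unique minimizer of $h_\lambda$ over $P_{s_v,s_w-1}$. Since $P_{s_v,s_w-1}=P_{s_v,s_w}\cap\{w_{s_w}=0\}$ is a proper subspace of $P_{s_v,s_w}$,
\[
h_\lambda\bigl(v^{(s_v,s_w)},w^{(s_v,s_w)}\bigr)=\min_{P_{s_v,s_w}}h_\lambda\ \le\ \min_{P_{s_v,s_w-1}}h_\lambda=h_\lambda\bigl(v^{(s_v,s_w-1)},w^{(s_v,s_w-1)}\bigr),
\]
and, by uniqueness of the minimizer over $P_{s_v,s_w}$, this is \emph{strict} unless that minimizer already lies in $\{w_{s_w}=0\}$, i.e.\ unless $w^{(s_v,s_w)}_{s_w}=0$. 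By Lemma~\ref{lem:13}(1) the entries of $w^{(s_v,s_w)}$ are ordered like $|\bar y|$, so $w^{(s_v,s_w)}_{s_w}$ is the smallest of the first $s_w$ of them; in particular strict inequality holds on exactly the range of sparsity levels the search ever compares (those whose last active coordinate is positive). The second inequality follows verbatim after exchanging the roles of $(v,x,s_v,p)$ and $(w,y,s_w,m)$ and using $P_{s_v-1,s_w}\subsetneq P_{s_v,s_w}$.

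The only genuinely delicate point is promoting ``$\le$'' to the strict ``$<$'' of the statement: the argument above pins this down to non-vanishing of the last active coordinate of $w^{(s_v,s_w)}$ (resp.\ $v^{(s_v,s_w)}$), which is exactly the positivity test the search performs before declaring a sparsity level admissible. An alternative, purely algebraic route — presumably the one taken in the appendix and mirroring the proof of Lemma~\ref{lem:decreasing-hb} — is to substitute \eqref{eq:vopt-multi}--\eqref{eq:wopt-multi} back into $h_\lambda$ to get a closed form in the partial sums $\sum_{k\le s_v}|\bar x_k|$, $\sum_{j\le s_w}|\bar y_j|$, and then expand $h_\lambda(v^{(s_v,s_w)},w^{(s_v,s_w)})-h_\lambda(v^{(s_v,s_w-1)},w^{(s_v,s_w-1)})$; there the main obstacle is bookkeeping the scalar $\mu=(1-s_vs_w\lambda^2)^{-1}$, which changes when $s_w$ is incremented, together with the extra summand $|\bar y_{s_w}|$, but after collecting terms the difference comes out manifestly negative. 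Either way, setting $s_v=1$ recovers Lemma~\ref{lem:decreasing-hb}.
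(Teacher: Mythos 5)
Your argument is correct, but it is not the route the paper takes: the appendix proof of Lemma~\ref{lem:8} is the ``purely algebraic'' alternative you sketch at the end --- it substitutes \eqref{eq:vopt-multi}--\eqref{eq:wopt-multi} into $h_\lambda$, obtains a closed form in the partial sums, and after collecting terms arrives at the exact identity
\begin{equation*}
h_\lambda\bigl(v^{(s_v,s_w)},w^{(s_v,s_w)}\bigr)-h_\lambda\bigl(v^{(s_v,s_w-1)},w^{(s_v,s_w-1)}\bigr)
=-\frac{1-\lambda^2 s_v s_w}{2\bigl(1-\lambda^2 s_v(s_w-1)\bigr)}\Bigl(|y_{s_w}|-\lambda\textstyle\sum_{k=1}^{s_v}v_k^{(s_v,s_w)}\Bigr)^2,
\end{equation*}
with the second inequality by symmetry. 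Your nested-subspace argument is more conceptual: identifying $(v^{(a,b)},w^{(a,b)})$ as the unique minimizer of the strictly convex restriction $h_\lambda|_{P_{a,b}}$ (positive definiteness of the restricted Hessian for $ab<\lambda^{-2}$ is exactly the eigenvalue computation from Lemma~\ref{lem:7}) makes the monotonicity an immediate consequence of $P_{s_v,s_w-1}\subsetneq P_{s_v,s_w}$, explains \emph{why} the bound $s_vs_w<\lambda^{-2}$ is the right threshold, and avoids several lines of error-prone bookkeeping. What the paper's computation buys in exchange is the explicit value of the gap, which identifies the equality case at a glance. On strictness you are exactly on par with the paper: the displayed difference is $-(\text{nonnegative})\cdot\bigl(w^{(s_v,s_w)}_{s_w}\bigr)^2\le 0$, vanishing precisely when $w^{(s_v,s_w)}_{s_w}=0$, which is the same degenerate case you isolate; the paper's own proof therefore also only concludes with ``$\le$'' despite the ``$<$'' in the statement, and in both treatments strictness is restored only on the sparsity levels whose last active coordinate passes the positivity test. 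So this is a defect of the lemma as stated, not of your argument.
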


Moreover, the feasibility criterion $v \geq 0, w \geq 0$ also has a monotonic property:
\begin{lemma}\label{lem:4}
Let $(k,l) \in [p] \times [m]$ be such that $kl \leq \lambda^{-2}$.

If $v^{(k, l)} \geq 0$ and $w^{(k, l)} \geq 0$, then, $v^{(i, j)} \geq 0$ and $w^{(i, j)} \geq 0$ $\forall i = 1, \ldots, k$ and $\forall j = 1, \ldots, l$.
\end{lemma}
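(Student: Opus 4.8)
\medskip\noindent
The plan is to reduce the statement to a single ``one-step'' move, then iterate it along a monotone lattice path in the $(s_v,s_w)$-grid. The one-step claim is: whenever $s_v s_w\le\lambda^{-2}$, $s_w\ge1$, and both $v^{(s_v,s_w)}\ge0$ and $w^{(s_v,s_w)}\ge0$, one also has $v^{(s_v,s_w-1)}\ge0$ and $w^{(s_v,s_w-1)}\ge0$. The twin statement, decrementing $s_v$ instead, then comes for free, since $h_\lambda$ and the closed forms \eqref{eq:vopt-multi}--\eqref{eq:wopt-multi} are invariant under the relabelling $(v,x,p)\leftrightarrow(w,y,m)$. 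Granting both, the lemma follows by descending from $(k,l)$ first in the $s_w$-direction down to any $(k,j)$ with $j\le l$, then in the $s_v$-direction down to any $(i,j)$ with $i\le k$: each move preserves nonnegativity of the two candidate vectors, and the product of the running indices only decreases, so it stays $\le\lambda^{-2}$. Throughout I assume $kl<\lambda^{-2}$; the borderline $kl=\lambda^{-2}$ is an integer only for exceptional $\lambda$, and there $\mu=(1-\lambda^2 kl)^{-1}$ --- hence the candidate itself --- is ill-defined unless the instance is degenerate.

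The first ingredient is a reformulation of the candidates. By Remark~\ref{rem:sorting} the inputs $x,y$ are sorted in decreasing magnitude, and a routine expansion of \eqref{eq:vopt-multi}--\eqref{eq:wopt-multi} --- with $a_s=\sum_{k\le s}|x_k|$, $b_s=\sum_{j\le s}|y_j|$, and the partial sums $V^{(s_v,s_w)}=\sum_{k\le s_v}v_k^{(s_v,s_w)}$, $W^{(s_v,s_w)}=\sum_{j\le s_w}w_j^{(s_v,s_w)}$ --- yields the interior stationarity relations $v_k^{(s_v,s_w)}=|x_k|-\lambda W^{(s_v,s_w)}$ for $k\le s_v$, $w_j^{(s_v,s_w)}=|y_j|-\lambda V^{(s_v,s_w)}$ for $j\le s_w$, together with $V^{(s_v,s_w)}=a_{s_v}-\lambda s_v W^{(s_v,s_w)}$ and $W^{(s_v,s_w)}=b_{s_w}-\lambda s_w V^{(s_v,s_w)}$. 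Because $|x_k|$ and $|y_j|$ are non-increasing, a candidate (which vanishes off its support) is nonnegative iff its last active coordinate is; hence $v^{(s_v,s_w)}\ge0\Leftrightarrow\lambda W^{(s_v,s_w)}\le|x_{s_v}|$ and $w^{(s_v,s_w)}\ge0\Leftrightarrow\lambda V^{(s_v,s_w)}\le|y_{s_w}|$. The one-step claim thus becomes: from $\lambda W^{(s_v,s_w)}\le|x_{s_v}|$ and $\lambda V^{(s_v,s_w)}\le|y_{s_w}|$, deduce $\lambda W^{(s_v,s_w-1)}\le|x_{s_v}|$ and $\lambda V^{(s_v,s_w-1)}\le|y_{s_w-1}|$.

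For this I would fix $s_v$, write $t=s_w$, and set $\Delta_W=W^{(s_v,t)}-W^{(s_v,t-1)}$. From $V^{(s_v,\cdot)}=a_{s_v}-\lambda s_v W^{(s_v,\cdot)}$ one gets $V^{(s_v,t-1)}=V^{(s_v,t)}+\lambda s_v\Delta_W$; feeding this into $W^{(s_v,t)}=b_t-\lambda t V^{(s_v,t)}$ and $W^{(s_v,t-1)}=b_{t-1}-\lambda(t-1)V^{(s_v,t-1)}$ and solving the resulting scalar equation should give
\[
\Delta_W=\bigl(1-\lambda^2 s_v(t-1)\bigr)^{-1}w_t^{(s_v,t)}\ge 0,
\]
because $w_t^{(s_v,t)}=|y_t|-\lambda V^{(s_v,t)}\ge0$ by hypothesis and $1-\lambda^2 s_v(t-1)>0$ since $s_v(t-1)<s_v t\le\lambda^{-2}$. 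Hence $\lambda W^{(s_v,t-1)}\le\lambda W^{(s_v,t)}\le|x_{s_v}|$, i.e.\ $v^{(s_v,t-1)}\ge0$. For the other half, setting $\varepsilon:=w_t^{(s_v,t)}\ge0$ (so $\lambda V^{(s_v,t)}=|y_t|-\varepsilon$) and using $V^{(s_v,t-1)}=V^{(s_v,t)}+\lambda s_v\Delta_W$ should give
\[
\lambda V^{(s_v,t-1)}=|y_t|+\varepsilon\cdot\frac{\lambda^2 s_v t-1}{1-\lambda^2 s_v(t-1)}\le|y_t|\le|y_{t-1}|,
\]
the first inequality because $s_v t\le\lambda^{-2}$ forces the fraction to be $\le0$ while $\varepsilon\ge0$, and the second because $y$ is sorted. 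This is precisely $w^{(s_v,t-1)}\ge0$, which closes the one-step claim and hence the lemma.

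The step I expect to be the real obstacle is this second inequality, $\lambda V^{(s_v,t-1)}\le|y_{t-1}|$: the $V$- and $W$-updates are mutually coupled, so one must first untangle the circular dependence (solve the small linear system for $\Delta_W$) to extract the clean identity for $\lambda V^{(s_v,t-1)}$, after which the argument closes only because the coefficient $\lambda^2 s_v t-1$ has the right sign --- which is exactly what the hypothesis $kl\le\lambda^{-2}$ buys --- together with monotonicity of the sorted $|y_j|$. Everything else is bookkeeping: organizing the lattice-path iteration, invoking the $v\leftrightarrow w$ symmetry for the $s_v$-decrement, and verifying that every index product encountered stays $\le\lambda^{-2}$ so the relevant $\mu$'s remain positive.
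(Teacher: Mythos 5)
Your proof is correct and follows essentially the same route as the paper's: an induction over single decrements of the sparsity pair, reducing nonnegativity of each candidate to its last active coordinate via the sortedness of $|x|,|y|$, and using the closed-form stationarity relations — with the cross-decrement (checking $w^{(s_v,t-1)}$ after lowering $s_w$) hinging, exactly as in the paper, on the nonnegativity of $w_t^{(s_v,t)}$ together with the sign of $1-\lambda^2 s_v t$. The only difference is cosmetic: you carry the aggregate sums $V^{(s_v,s_w)},W^{(s_v,s_w)}$ and the increment $\Delta_W=w_t^{(s_v,t)}/(1-\lambda^2 s_v(t-1))$, where the paper manipulates individual coordinates; both computations check out.
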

To properly address the complications arising from handling two intertwining sparsity levels at the same time, we introduce the notion of  \textit{maximal feasibility boundary (MFB)} which acts a frontier of possible sparsity levels.
\begin{definition}[Maximal feasibility boundary]
	\label{def:1}
We say that a sparsity pair $(s_v,s_w) \in \{0,\ldots,p\} \times \{0,\ldots,m\}$ is on the maximal feasibility boundary  (MFB) if incrementing either $s_v$ or $s_w$ results with a non-stationary point.
That is, if both of the following conditions hold:

$\bullet$ $v^{(s_v + 1, s_w)}_{s_v + 1} < 0$ or $w^{(s_v + 1, s_w)}_{s_w} < 0$ or $(s_v+1)s_w > \lambda^{-2}$,

$\bullet$ $v^{(s_v, s_w+1)}_{s_v} < 0$ or $w^{(s_v, s_w+1)}_{s_w+1} < 0$ or $s_v(s_w+1) > \lambda^{-2}$.
\end{definition}

The efficient computation of the multi-output robust-sparse proximal mapping is based on the fact that we only need to  compute the value of $h_\lambda$ for sparsity levels that are at the frontier of the MFB. 
This allows us to find the optimal sparsity in time $\mathcal{O}(p + m)$, improving upon the $\mathcal{O}(pm)$ complexity of the exhaustive search.
Algorithm \ref{alg:prox_binar_multi} implements the above by employing a binary search type procedure defined in Algorithm~\ref{alg:max-feasibility-boundary} to calculate the MFB.

\begin{algorithm}
	\caption{Multi-output robust-sparse proximal mapping}
	\label{alg:prox_binar_multi}
	\textbf{Input:} $x \in \R^p$, $y \in \R^m$ ordered in decreasing magnitude order, $\lambda> 0$.
	\begin{algorithmic}[1]
		\State Employ Algorithm~\ref{alg:max-feasibility-boundary}: Find the set of sparsity pairs $S = \{(s_v, s_w)\}$ that are on the MFB
		\State $h_{opt} \leftarrow \infty$
		\For{$(s_v, s_w) \in S$}
			\State Compute $v^{(s_v, s_w)}$ and $w^{(s_v, s_w)}$ as given in equations~\eqref{eq:vopt-multi}, ~\eqref{eq:wopt-multi}
			\If{$h_\lambda(v^{(s_v, s_w)},v^{(s_v, s_w)};|x|,|y|) < h_{opt}$}
				\State $h_{opt} = h_\lambda(v^{(s_v, s_w)},v^{(s_v, s_w)};|x|,|y|)$
				\State $v^* \leftarrow v^{(s_v, s_w)}$, $w^* \leftarrow w^{(s_v, s_w)}$
			\EndIf
		\EndFor
		\State \Return $(\sign(x) \circ v^*, \sign(y) \circ w^*)$
	\end{algorithmic}
\end{algorithm}

\begin{theorem}[Multi-output prox computation]
	\label{thm:1m}
	Let $(V^*_{:,i}, W^*_{i,:})$ be the output of Algorithm \ref{alg:prox_binar_multi} with input $X_{:,i}, Y_{i,:}, \lambda$, where each $X_{:,i}$, $Y_{i,:}$ are sorted in decreasing magnitude order. Then  $(V^*,W^*)$ is a solution to \eqref{eq:proximal}.
\end{theorem}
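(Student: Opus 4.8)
The plan is to mirror the single-output argument used for Theorem~\ref{thm:1}: exhibit a finite set of candidate points that is guaranteed to contain a global minimizer of the per-row problem \eqref{eq:5}, and then show that Algorithm~\ref{alg:prox_binar_multi} evaluates $h_\lambda$ over all of them (via the MFB) and returns the best one. First I would reduce to a single row. By the separability of \eqref{eq:multi-prox} across the rows of $V$ and $W$, together with the sign-reduction observation that precedes \eqref{eq:5} (the multi-output analogue of Lemma~\ref{lem:5b}), it suffices to prove that for input $(x,y)$ sorted in decreasing magnitude order the pair $(v^*,w^*)$ produced by the loop is a global optimum of \eqref{eq:5} over the nonnegative orthant; existence of such a minimizer follows from coercivity exactly as in Lemma~\ref{lem:12}, and then $(\sign(x)\circ v^*,\sign(y)\circ w^*)$ solves the $i$-th row subproblem, so stacking over $i=1,\dots,n$ yields a solution of \eqref{eq:proximal}.

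Next I would localize the optimum to the candidate family. Let $(v^*,w^*)$ be a global optimum of \eqref{eq:5} and put $s_v=|\{k:v^*_k>0\}|$, $s_w=|\{j:w^*_j>0\}|$. If $v^*\neq 0$ and $w^*\neq 0$, Lemma~\ref{lem:13} forces $v^*=v^{(s_v,s_w)}$ and $w^*=w^{(s_v,s_w)}$ as defined in \eqref{eq:vopt-multi}--\eqref{eq:wopt-multi}, with the positive entries being exactly the first $s_v$ (resp.\ $s_w$) coordinates; Lemma~\ref{lem:7} forces $s_v s_w\le\lambda^{-2}$; and feasibility forces $v^{(s_v,s_w)}\ge 0$, $w^{(s_v,s_w)}\ge 0$. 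Hence the optimum lies in $\mathcal{C}:=\{(v^{(s_v,s_w)},w^{(s_v,s_w)}):s_v s_w\le\lambda^{-2},\ v^{(s_v,s_w)}\ge 0,\ w^{(s_v,s_w)}\ge 0\}$; the degenerate cases $v^*=0$ (forcing $w^*=|y|$) and $w^*=0$ (forcing $v^*=|x|$) correspond to extreme sparsity pairs and must be handled separately but similarly.

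Then I would collapse $\mathcal{C}$ onto the MFB. Starting from any pair in $\mathcal{C}$ that is not on the MFB of Definition~\ref{def:1}, one increment of $s_v$ or of $s_w$ keeps the pair in $\mathcal{C}$, and by the monotonicity Lemma~\ref{lem:8} this strictly decreases $h_\lambda$; since $\mathcal{C}$ is finite, iterating terminates at a pair on the MFB whose $h_\lambda$-value is no larger, so $\min_{\mathcal{C}} h_\lambda$ is attained on the MFB. Combined with the previous paragraph, the MFB therefore contains a global optimum of \eqref{eq:5}. Finally, invoking the correctness of Algorithm~\ref{alg:max-feasibility-boundary}, which uses the feasibility-monotonicity Lemma~\ref{lem:4} to enumerate precisely the MFB set $S$, we conclude that Algorithm~\ref{alg:prox_binar_multi} compares $h_\lambda$ over a set containing a global optimum and returns a minimizer, hence a global optimum of \eqref{eq:5}; this finishes the proof.

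The hard part will be twofold. First, the two intertwined sparsity levels make the feasibility region genuinely two-dimensional, so establishing that Algorithm~\ref{alg:max-feasibility-boundary} returns the \emph{complete} MFB --- and that walking to the MFB as above never leaves $\mathcal{C}$ --- requires the delicate monotone-search reasoning built on Lemma~\ref{lem:4}, much more involved than the one-dimensional binary search of Algorithm~\ref{alg:prox_binary}. Second, the boundary pairs with $s_v=0$ or $s_w=0$, and the corner case $s_v s_w=\lambda^{-2}$ exactly (where the strict hypothesis $s_v s_w<\lambda^{-2}$ of Lemma~\ref{lem:8} fails), need a separate check to ensure they are either dominated by an MFB candidate or explicitly contained in $S$.
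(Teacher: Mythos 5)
Your proposal is correct and follows essentially the same route as the paper's proof: reduce by separability and sign-flipping to the per-row problem \eqref{eq:5}, use Lemma~\ref{lem:13} and Lemma~\ref{lem:7} to restrict the optimum to the closed-form candidates indexed by feasible sparsity pairs, use the monotonicity of $h_\lambda$ (Lemma~\ref{lem:8}) to conclude the optimal pair lies on the MFB, and invoke the correctness of Algorithm~\ref{alg:max-feasibility-boundary} so that Algorithm~\ref{alg:prox_binar_multi} minimizes $h_\lambda$ over a set containing a global optimum. The edge cases you flag (degenerate sparsities and the boundary $s_v s_w = \lambda^{-2}$) are likewise left implicit in the paper's argument.
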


\paragraph{Time complexity of Algorithm~\ref{alg:prox_binar_multi}} It is easy to see that the maximal feasibility boundary contains at most $\min(m,p)$ pairs, and Algorithm~\ref{alg:max-feasibility-boundary} finds them all in time $\mathcal{O}(m+p)$. Then, for each such pair $(s_v, s_w)$, we must compute $v^{(s_v, s_w)}$ and $w^{(s_v, s_w)}$ and $h_\lambda(v^{(s_v, s_w)}, w^{(s_v, s_w)})$, which takes time $\mathcal{O}(m+p)$. The total complexity of Algorithm~\ref{alg:prox_binar_multi} is thus $\mathcal{O}(\min(m,p)(m+p))$. In most practical application, the output layer size $p$ can be considered $\mathcal{O}(1)$, so that the complexity of computing this proximal mapping is comparable to the complexity of computing one stochastic gradient.

\section{Related Work}
The path regularization approach to train neural networks can be traced back to
the seminal paper by \citet{Neyshabur2015}, who introduced the $p$-path-norm as
a heuristic proxy to control the \textit{capacity} of the network.

In this paper (cf. \autoref{prop:upper_bound}), we took a step forward by
moving from heuristic explanations to rigorous arguments by establishing  a new
connection between the $1$-path-norm and the Lipschitz constant of the network.
This result also reads as a relation between the 1-path-norm and the product
bound, of which variants have been found to be useful in deriving
generalization bounds \citep{Bartlett2017}.

Generalization bounds in terms of the $p$-path-norm were also derived in
\citep{Neyshabur2015}, but the question of how to methodologically exploit
these as regularizers remains open; our algorithmic contribution is a first
step in this direction. Additionally, issues regarding optimization with
path-norm regularization were reported by \citet{Ravi2019}, which examined a
conditional gradient method in the context of path-norm regularization.

A growing collection of works have focused on the task of network compression,
doing so via sparsity-inducing regularizers
\citep{Alvarez2016,Yoon2017,Scardapane2017,Lemhadri2019}. They have achieved a
great level of success by setting the regularization term in an \textit{ad-hoc}
manner. In contrast, we follow a principled regularization approach with
theoretical properties of generalization and robustness, and as a consequence,
we are able to quantify the sparsity of the resulting networks.

Moreover, the aforementioned works only use convex regularizers for which
efficient proximal mappings are available (see \citet{Bach2012} and references
therein). We tackle the much harder non-convex regularization task, and
derive a new method to compute the proximal mapping in this case. The merits
of non-convex non-smooth regularization, and difficulties regarding their
optimization, have been extensively studied in the imaging and signal
sciences, see e.g., \citep{Ochs2015} and the recent survey \citep{WCLQ18}.

Layer-wise constraints or regularization with matrix-norms, which are also
motivated by the product bound, have been used for robust
training \citep{cisse17a,Tsuzuku2018} and generative models \citep{Miyato2018}.
These focus on robustness with respect to the $\ell_2$-norm, which requires a
careful handling of operations on the singular values of the weight matrices,
and does not have the extra benefit of inducing sparsity.

In \autoref{sec:experiments} we compare to this class of methods for the
$\ell_\infty$-norm case, in which  a simple rescaling of the rows in the weight
matrices yields a numerically stable procedure \citep{Duchi2008,Condat2016}.

\section{Experiments}
\label{sec:experiments}
% \begin{figure*}[t]
% \centering
% \begin{minipage}[c]{\textwidth}
%      \includegraphics[width=0.40\textwidth]{}
%      \caption{value of regularized cross-entropy loss across iterations}
%      \label{fig:loss_vs_iter}
% 
%      \includegraphics[width=0.40\textwidth]{}
%      \caption{percentage of non-zero weights across iterations}
%      \label{fig:robust_vs_error}
% \end{minipage}
% \end{figure*}
\begin{figure*}[t]
\centering
\begin{minipage}[c]{\textwidth}
\centering
     \includegraphics[width=0.95\textwidth]{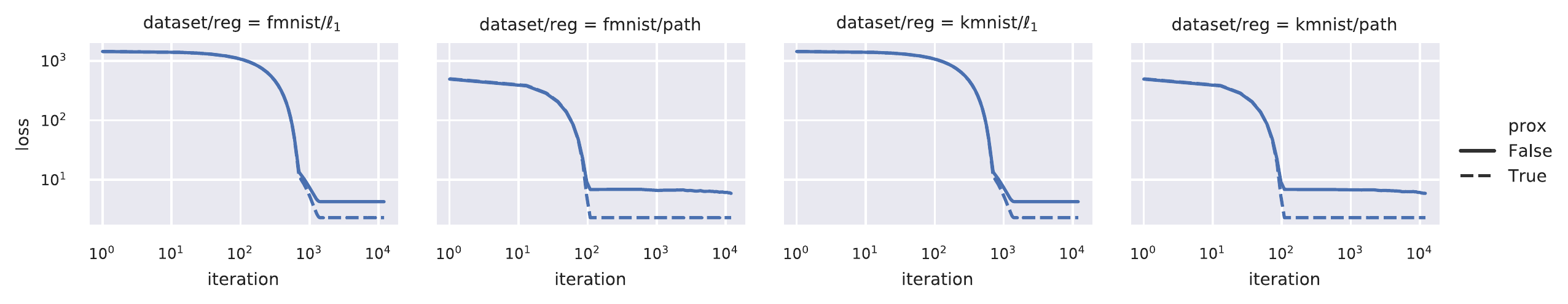}
     \caption{value of regularized cross-entropy loss across iterations.}
     \label{fig:loss_vs_iter}
     \includegraphics[width=0.95\textwidth]{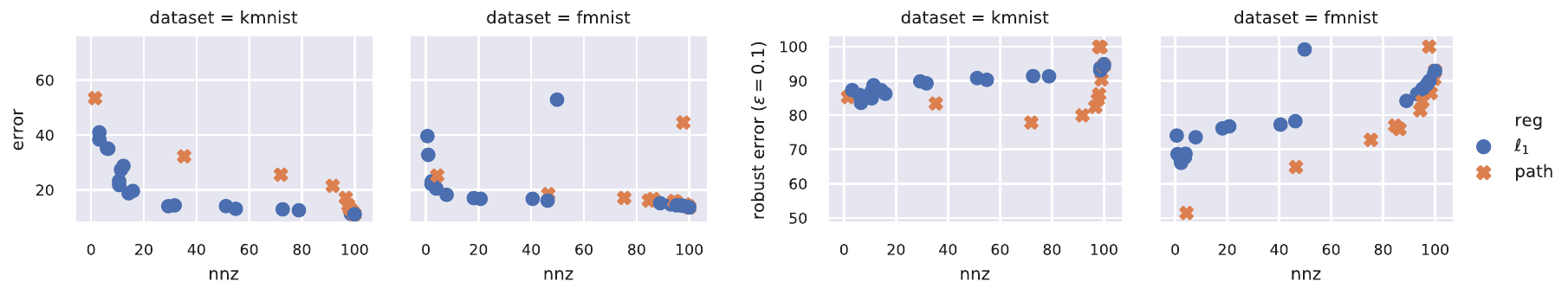}
     \caption{Misclassification test error (left) and robust test error (right) as a function of the percentage of nonzero weights.}
     \label{fig:nnz_vs_iter}
     \includegraphics[width=0.95\textwidth]{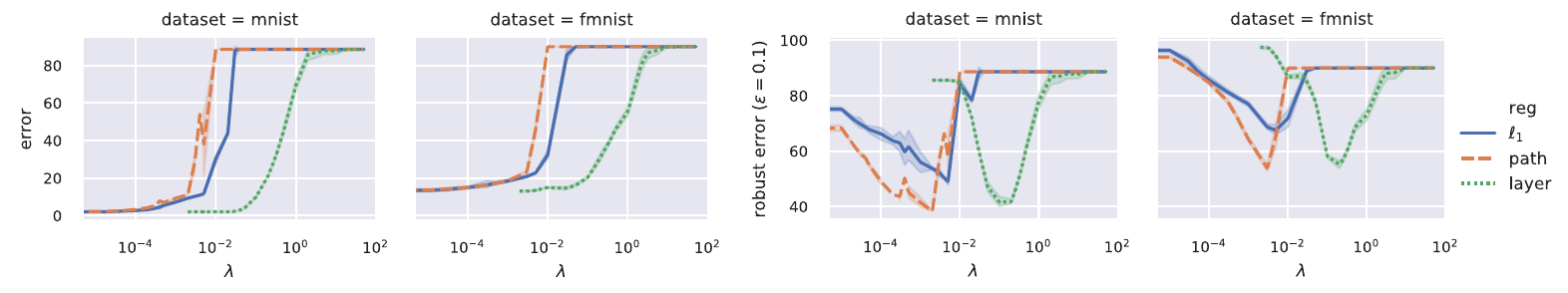}
     \caption{Misclassification test error (left) and robust test error (right) on the test set, as a function of the regularization parameter $\lambda$.}
     \label{fig:error_vs_robust}
\end{minipage}
\end{figure*}
% \begin{figure*}[ht]
%     \label{fig:robust_vs_error}
%     \centering
%     \includegraphics[width=0.95\textwidth]{}
%     \caption{value of regularized cross-entropy loss across iterations}
% \end{figure*}
% \begin{figure*}[ht]
%     \label{fig:error_vs}
%     \centering
%     \includegraphics[width=0.95\textwidth]{}
%     \caption{percentage of non-zero weights across iterations}
% \end{figure*}
% \begin{figure*}[ht]
%     \label{fig:loss_vs_iter}
%     \centering
%     \includegraphics[width=0.95\textwidth]{figures/loss_vs_iter.pdf}
%     \caption{value of regularized cross-entropy loss across iterations}
% \end{figure*}
% \begin{figure*}[ht]
%     \label{fig:robust_vs_error}
%     \centering
%     \includegraphics[width=0.95\textwidth]{figures/sparisty_vs_iter.pdf}
%     \caption{percentage of non-zero weights across iterations}
% \end{figure*}
We empirically evaluate shallow neural networks trained by regularized
empirical risk minimization \eqref{eq:objective_fn} using cross-entropy loss.
In terms of the weight matrices $V$ and $W$ of the network \eqref{eq:nn}, the following 
regularizers are considered:
% \begin{table}
% \begin{center}
% \begin{tabular}{ c|c } 
%     {\bf name} & $\lambda g(V, W)$ \\
%  \hline
%     $\ell_1$ & $\lambda \| \vect(V)\|_1 + \lambda \| \vect(W) \|_1$ \\ 
%      path & $\lambda \sum_{i=1}^h \| \vect(\diag(v_i) W) \|_1$  \\ 
%      layer & $\mathbbm{1}_{\{ \| V \|_\infty \leq \lambda^{-1}\}} + \mathbbm{1}_{\{ \| W \|_\infty \leq \lambda^{-1} \}} $\\ 
% \end{tabular}
%     \caption{Description of regularization terms \label{tab:regularization}}
% \end{center}
% \end{table}

\textbf{$\ell_1$ regularization. }We penalize the $\ell_1$-norm of the
parameters of the network, i.e., $g(V, W) = \| \vect(V)\|_1 + \|
\vect(W) \|_1$ in the objective function \eqref{eq:objective_fn}.

\textbf{1-path-norm regularization. } We set $g(V,
W)$ as $\sum_{i=1}^n\sum_{j=1}^m\sum_{k=1}^p |W_{ij}V_{ki}|$ in
the objective function \eqref{eq:objective_fn}.

\textbf{Layer-wise regularization (Parseval Networks). } we minimize the
cross-entropy loss with a hard constrain on the $\ell_\infty$-operator-norm of
the weight matrices i.e., $\|W\|_\infty \leq \lambda^{-1}$ and $\|V\|_\infty
\leq \lambda^{-1}$, as described by \citet{cisse17a}. The projection on such
set is achieved by projecting each row of the matrices onto an $\ell_1$-ball
using efficient algorithms \citep{Duchi2008,Condat2016}.

\textbf{Remark. } We will refer (incorrectly) to the training loop
defined by PyTorch's SGD optimizer as \textit{Stochastic gradient descent
(SGD)} (see the discussion in \autoref{sec:introduction}).

\textbf{Experimental setup. } Our benchmarks are the MNIST \citep{mnist},
Fashion-MNIST \citep{fashionmnist} and Kuzushiji-MNIST \citep{kmnist}. For a
wide range of learning rates, number of hidden neurons and regularization
parameters $\lambda$, we train networks with SGD and Proximal-SGD (with constant
learning rate). We do so for 20 epochs and with batch size set to 100. For each
combination of parameters we train 6 networks with the default random
initialization. Details and further experiments are reported in appendix
\ref{app:exp_details}.

\subsection{Convergence of SGD vs Proximal-SGD}
Due to the non-differentiability of the $\ell_1$- and path-norm regularizers,
we expect Proximal-SGD to converge faster, and to lower values of the
regularized loss, when compared to SGD.  This is examined in Figure
\ref{fig:loss_vs_iter}, where we plot the value of the loss function across
iterations.  For both SGD and Proximal-SGD, the loss function decays rapidly in
the first few epochs.  We then enter a second regime where SGD suffers from
slow convergence, whereas Proximal-SGD continues to reduce the loss at a fast
rate.  At the end of the 20-epochs, Proximal-SGD consistently achieves a lower
value of the loss.

An advantage of Proximal-SGD over plain SGD is that the proximal mappings of
both the $\ell_1$- and path-norm regularizers can set many weights to
\textit{exactly} zero. In \autoref{fig:nnz_vs_iter} we plot the average error
and robust test error obtained, as a fuction of the sparsity of the network.
Compared to $\ell_1$ regularization, the sparsity pattern induced by the
1-path-norm correlates with the robustness to a higher degree. As a drawback,
it appears that in more difficult datasets like KMNIST, the 1-path-norm
struggles to obtain good accuracy and sparsity simultaneously.

\subsection{The robustness-accuracy trade-off}
The relation between the Lipschitz constant of a network and its robustness to
adversarial perturbations has been extensively studied in the literature. In
\autoref{prop:upper_bound} we have shown that the 1-path-norm of a
single-output network is a tighter upper bound of its Lipschitz constant,
compared to the corresponding product bound.

To the best of our knowledge, the $\ell_1$-norm regularizer only provides an
upper boud on the already loose product bound \citep[Eq. (4)]{Neyshabur2015},
which makes it less attractive as a regularizer, despite its sparsity-inducing
properties. Hence, the 1-path-norm regularizer is, in theory, a better proxy
for robustness than the other regularization
schemes.

\autoref{fig:error_vs_robust} shows the misclassification error on clean and
adversarial examples as a function of  $\lambda$, and corresponds to the
learning rate minimizing the error on clean samples. The adversarial
perturbations were obtained by PGD \citep{Madry2018}.

Any training procedure which promotes robustness of a classifier may decrease
its accuracy, and this effect is consistently observed in practice
\citep{Tsipras2018}. Hence, the merits of a regularizer should be measured by
how efficiently it can trade-off accuracy for robustness. We observe that
for all three regularization schemes, there exists choices of $\lambda$ that
attain the best possible error on clean samples.

On the other hand, the error obtained by the $\ell_1$ regularization degrades
significantly. The layer-wise and 1-path-norm regularization achieve a
noticeably low error on adversarial examples. Comparing the latter schemes, the
1-path-norm regularization shows only a slight advantage over the layer-wise
methods, which merits further investigation.

\section{Future Work: Multilayer Extension}
A natural extension of our approach is to apply path regularization to
multi-layered networks.  Since the number of paths is potentially huge, this
scenario requires more sophisticated treatment, and hence left for future
research. Nonetheless, a trivial extension of our approach is to divide a
multi-layered network into pairs of consecutive layers, and  apply our method
in a sensible manner. We now describe this approach, to complement the theory.

Precisely, assume that the network contains an even number of layers. For some
lists of matrices $\mathbf{V}=(V^1,\ldots, V^k)$ and $\mathbf{W}=(W^1, \ldots,
W^k)$ of appropriate sizes, the network can be written as a composition of
activation functions and shallow networks
\begin{equation}
h_{\mathbf{V},
\mathbf{W}}:=h_{W^l, V^l} \circ \sigma \circ h_{W^{l-1}, V^{l-1}} \circ \ldots
\circ \sigma \circ h_{V^1, W^1}
\end{equation}

We build the regularized objective as
\begin{equation}
    \label{eq:objective_multilayer}
    \min_{\mathbf{V}, \mathbf{W}} \E_{(x, y)}[\ell(h_{\mathbf{V}, \mathbf{W}}(x), y)]
    + \lambda \sum_{l=1}^L P_1(V^l, W^l). %\lambda \| \vect(\Diag(v)W) \|_1.
\end{equation}
where we have introduced the shorthand
\begin{equation*}
    P_1(V^l, W^l):= \sum_{i=1}^{n_l} \sum_{j=1}^{m_l} \sum_{k=1}^{p_l} |W^l_{ij} V^l_{ik}|
\end{equation*}
For the 1-path-norm of the $l$-th subnetwork $h_{W^{l}, V^l}$.

Because the nonsmooth nonconvex regularizer in \eqref{eq:objective_multilayer}
is separable in the variables $\{(V_i, W_i): i=1,\ldots,k\}$, its proximal
mapping is indeed nothing but our multioutput prox algorithm (section
\ref{sec:multi}), applied independently to each shallow subnetwork component.
Thus, the proximal gradient methods can be applied efficiently to optimize
\eqref{eq:objective_multilayer}.

\section*{Acknowledgements}
This work is funded (in part) through a PhD fellowship of the Swiss Data
Science Center, a joint venture between EPFL and ETH Zurich. This project has
received funding from the European Research Council (ERC) under the European
Union's Horizon 2020 research and innovation programme (grant agreement n.
725594 - time-data). This work was supported by the Swiss National Science
Foundation (SNSF) under  grant number 200021\_178865 / 1.

\bibliography{references}
\bibliographystyle{icml2020}

%%%%%%%%%%%%%%%%%%%%%%%%%%%%%%%%%%%%%%%%%%%%%%%%%%%%%%%%%%%%%%%%%%%%%%%%%%%%%%%
%%%%%%%%%%%%%%%%%%%%%%%%%%%%%%%%%%%%%%%%%%%%%%%%%%%%%%%%%%%%%%%%%%%%%%%%%%%%%%%
% DELETE THIS PART. DO NOT PLACE CONTENT AFTER THE REFERENCES!
%%%%%%%%%%%%%%%%%%%%%%%%%%%%%%%%%%%%%%%%%%%%%%%%%%%%%%%%%%%%%%%%%%%%%%%%%%%%%%%
%%%%%%%%%%%%%%%%%%%%%%%%%%%%%%%%%%%%%%%%%%%%%%%%%%%%%%%%%%%%%%%%%%%%%%%%%%%%%%%
%\appendix
%\section{Do \emph{not} have an appendix here}
%
%\textbf{\emph{Do not put content after the references.}}
%%
%Put anything that you might normally include after the references in a separate
%supplementary file.
%
%We recommend that you build supplementary material in a separate document.
%If you must create one PDF and cut it up, please be careful to use a tool that
%doesn't alter the margins, and that doesn't aggressively rewrite the PDF file.
%pdftk usually works fine. 
%
%\textbf{Please do not use Apple's preview to cut off supplementary material.} In
%previous years it has altered margins, and created headaches at the camera-ready
%stage. 
%%%%%%%%%%%%%%%%%%%%%%%%%%%%%%%%%%%%%%%%%%%%%%%%%%%%%%%%%%%%%%%%%%%%%%%%%%%%%%%
%%%%%%%%%%%%%%%%%%%%%%%%%%%%%%%%%%%%%%%%%%%%%%%%%%%%%%%%%%%%%%%%%%%%%%%%%%%%%%%
\onecolumn
\appendix

\section{Proof of Theorem \ref{prop:upper_bound}}
\label{app:upper_bound}
We will first prove a particular case of \autoref{prop:upper_bound}, the single-output case ($p=1$).
\begin{proposition}
    \label{prop:particular_case}
Let $h_{V,W}(x)=V^T \sigma(Wx): \R^m \to \R$ be a neural network where $V \in
    \R^{n \times 1}$ and $W \in \R^{n \times m}$.  Suppose that that the derivative of the
    activation is globally bounded between zero and one. Its Lipschitz constant
    with respect to the $\ell_\infty$ norm (for the input space) and the
    $\ell_1$-norm (for the output space) is bounded as follows:
\begin{equation}
    \label{eq:upper_bound_app}
    L_{V,W} \leq \sum_{i=1}^n \sum_{j=1}^m |W_{i,j} V_{i,1}| \leq \|V\|_1 \|W\|_\infty
\end{equation}
\end{proposition}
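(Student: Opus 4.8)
The plan is to reduce the Lipschitz constant to a supremum of the gradient's $\ell_1$-norm, compute that gradient explicitly, and then bound it termwise using the hypothesis $0 \le \sigma' \le 1$. Concretely, since the output space is $\R$ equipped with the $\ell_1$-norm (which is just the absolute value), we have
\[
L_{V,W} = \sup_{x \neq x'} \frac{|h_{V,W}(x) - h_{V,W}(x')|}{\|x - x'\|_\infty}.
\]
Because $\sigma$ is differentiable, $h_{V,W}$ is differentiable, and for any $x,x'$ the map $t \mapsto h_{V,W}(x' + t(x-x'))$ has derivative $\nabla h_{V,W}(x'+t(x-x'))^\top (x - x')$; this derivative is bounded (as we will see), so the map is absolutely continuous and the fundamental theorem of calculus gives
\[
h_{V,W}(x) - h_{V,W}(x') = \int_0^1 \nabla h_{V,W}\big(x' + t(x-x')\big)^\top (x - x')\, dt .
\]
By Hölder's inequality with the $\ell_1$/$\ell_\infty$ pairing, this yields $L_{V,W} \le \sup_{x} \|\nabla h_{V,W}(x)\|_1$.

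Next I would compute the gradient. Writing $h_{V,W}(x) = \sum_{i=1}^n V_{i,1}\,\sigma\big((Wx)_i\big)$, the chain rule gives, for each coordinate $j$,
\[
\frac{\partial h_{V,W}}{\partial x_j}(x) = \sum_{i=1}^n V_{i,1}\, \sigma'\big((Wx)_i\big)\, W_{ij}.
\]
Applying the triangle inequality and the bound $|\sigma'(\cdot)| \le 1$ (in particular the gradient is bounded, which retroactively justifies the absolute continuity used above),
\[
\|\nabla h_{V,W}(x)\|_1 = \sum_{j=1}^m \Big| \sum_{i=1}^n V_{i,1}\, \sigma'\big((Wx)_i\big)\, W_{ij} \Big| \le \sum_{j=1}^m \sum_{i=1}^n |V_{i,1}|\, |W_{ij}| = \sum_{i=1}^n\sum_{j=1}^m |W_{ij} V_{i,1}|,
\]
and this bound is uniform in $x$, giving the first inequality in \eqref{eq:upper_bound_app}.

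For the second inequality I would simply factor and bound:
\[
\sum_{i=1}^n\sum_{j=1}^m |W_{ij} V_{i,1}| = \sum_{i=1}^n |V_{i,1}| \Big(\sum_{j=1}^m |W_{ij}|\Big) \le \Big(\max_{1\le i \le n} \sum_{j=1}^m |W_{ij}|\Big) \sum_{i=1}^n |V_{i,1}| = \|W\|_\infty\, \|V\|_1 ,
\]
using that $\|W\|_\infty$ is the maximum $\ell_1$-norm of the rows of $W$. I expect the only real subtlety to be the first step — justifying that the Lipschitz constant equals (is bounded by) the supremal dual norm of the gradient — which requires care about the regularity of $\sigma$; once differentiability and the boundedness of $\sigma'$ are in hand, the integral-along-segments argument closes this gap cleanly, and the rest is elementary. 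The passage from this single-output proposition to the full multi-output \autoref{prop:upper_bound} then follows by summing the per-output 1-path-norms, since the $\ell_1$-norm on $\R^p$ decomposes coordinatewise.
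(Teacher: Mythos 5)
Your proof is correct and follows essentially the same route as the paper: reduce $L_{V,W}$ to $\sup_x\|\nabla h_{V,W}(x)\|_1$, bound that quantity termwise using $0\le\sigma'\le 1$, and then factor the double sum row-wise for the second inequality. The only (harmless) differences are that you justify the gradient-supremum reduction directly via the fundamental theorem of calculus and H\"older rather than citing it as an identity, and you use the triangle inequality in place of the paper's relaxation of $\sigma'(Wx)$ to an arbitrary diagonal matrix with entries in $[0,1]$.
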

First, note that because the output space is $\R$, the $\ell_1$-norm is just
the absolute value of the output. In this case the Lipschitz constant of the
single-output function $h$ is equal to the supremum of the $\ell_1$-norm of its
gradient, over its domain (c.f., \citet[Theorem 1]{Latorre2020}).
\begin{proof}
    \begin{align*}
        L_{V,W} &= \sup_x \| \nabla h_{V,W}(x) \|_1 \\
        &= \sup_x \sup_{\|t \|_\infty \leq 1} t^T \nabla h_{V,W}(x) \\
        &= \sup_x \sup_{\|t \|_\infty \leq 1} t^T W^T \sigma'(Wx)V \\
        &\leq  \sup_{0 \leq s \leq 1} \sup_{\|t\|_\infty \leq 1} t^T W^T \Diag(s) V \\
        &= \sup_{0 \leq s \leq 1} \sup_{\|t\|_\infty \leq 1} \sum_{i=1}^n \sum_{j=1}^m t_i (W^T \Diag(V))_{i,j} s_j \\
        &\leq \sum_{i=1}^n \sum_{j=1}^m \sup_{0 \leq s_j \leq 1} \sup_{-1 \leq t_i \leq 1} t_i (W^T \Diag(V))_{i,j}s_j \\
        &= \sum_{i=1}^n \sum_{j=1}^m | W^T \Diag(V) |_{i,j} = \sum_{i=1}^n \sum_{j=1}^m |W_{i, j} V_{i, 1}|
    \end{align*}
    This shows the first inequality in \eqref{eq:upper_bound_app}. We now show the second inequality. Denote the $i$-th row
    of the matrix $W$ as $w_i$:
    \begin{align*}
        \sum_{i=1}^n \sum_{j=1}^m |W_{i, j} V_{i, 1}|
        &= \sum_{i=1}^n |V_{i, 1}| \sum_{j=1}^m |W_{i,j}| \\
        &= \sum_{i=1}^n |V_{i, 1}| \|w_i\|_1 \\
        &\leq \sum_{i=1}^n |V_{i, 1}| \max_{j=1, \ldots, m} \|w_j\|_1 \\
        &=\sum_{i=1}^n |V_{i, 1}| \|W\|_\infty \\
        &= \|V\|_1 \|W\|_\infty
    \end{align*}
    In the fourth line we have used the fact that the $\ell_\infty$ operator
    norm of a matrix is equal to the maximum $\ell_1$-norm of the rows.

\end{proof}

\textbf{Proof of \autoref{prop:upper_bound}. } We now proceed with the general
case where $V \in \R^{n \times p}$, $W \in \R^{n \times m}$ and $h_{V,W}(x)=V^T
\sigma(Wx)$.

\begin{proof}
    Denote the columns of $V$, in order, as $V_1, \ldots, V_p$. Using
\autoref{prop:particular_case} we have
\begin{align*}
    \| V^T \sigma(Wx) - V^T \sigma(Wy)\|_1 &= \sum_{k=1}^p |V_k^T \sigma(Wx) - V_k^T \sigma(Wy)| \\
        & \leq \sum_{k=1}^p \sum_{i=1}^n \sum_{j=1}^m |W_{i, j} V_{i, k} | \|x - u \|_\infty \\
        & \leq \sum_{k=1}^p \|V_k\|_1 \|W \|_\infty \|x - y \|_\infty \\
        &= \|V^T\|_{\infty, 1} \|W \|_\infty  \|x - y \|_\infty
\end{align*}
    where in the fourth line we have used the fact that the $(\ell_\infty,
    \ell_1)$ operator norm of a matrix $V^T$ is equal to the sum of the
    $\ell_1$ norm of its rows i.e., the columns of $V$.  This shows that $L_{V,W} \leq
    \sum_{i=1}^n \sum_{j=1}^m \sum_{k=1}^p |W_{i, j} V_{i, k}| \leq
    \|V^T\|_{\infty, 1} \|W \|_\infty$
    
\end{proof}

\section{Proof of Theorem \ref{thm:2}}
\label{app:prox_grad}
In this section we prove the theoretical guarantees stated in Theorem \ref{thm:2} of the prox-grad method described by Algorithm \ref{alg:prox_grad}.
The first and second parts of  Theorem \ref{thm:2}  follow immediately from the results establish by \cite{BST13}.
Part two in Theorem \ref{thm:2}  states that Algorithm \ref{alg:prox_grad}  is globally convergent under the celebrated Kurdyka–Lojasiewicz (KL) property \cite{ABRS10}. 
The broad classes of semi-algebraic  and subanalytic functions, widely used in optimization, satisfy the KL property (see e.g. \citep[Section 5]{BST13}), and in particular,  most convex functions encountered in finite dimensional applications satisfy it (see \citep[Section 5.1]{BST13}). 
We refer the reader to the works \cite{ABRS10,ABF11,BST13}, in particular to \citep[Sections 3.2-3.5]{BST13} for additional information and results.

For Part three we require the sufficient decrease property stated next. 
\begin{lemma} [Sufficient decrease property {\citep[Lemma 2]{BST13}}]
 	\label{lem:sd}
 	Let $\Psi : \real^n\rightarrow\real$ be a continuously differentiable
 	function with gradient assumed $L_\Psi$-Lipschitz continuous, and let $\sigma :\real^n\rightarrow (-\infty,\infty] $ be a proper  l.s.c function satisfying that $\inf\sigma>-\infty$. 
 	Fix any $t \in (0, 1/L_\Psi)$. 
 	Then, for any $\bu\in\dom\sigma$ and any $\bu^+\in\real^n$ defined by
 	$$\bu^+\in\mathrm{prox}_{\sigma t} \left(\bu - t \nabla\Psi (\bu) \right) $$
 	we have
 	$$ \Psi (\bu) + \sigma(\bu)- \Psi (\bu^+) -\sigma(\bu) \geq  \frac{ 1 - t L_\Psi}{2 t} \|\bu^+ - \bu \|^2.$$
 \end{lemma}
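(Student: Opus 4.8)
The plan is to follow the standard descent-lemma argument. First I would invoke the descent lemma for functions with $L_\Psi$-Lipschitz gradient: for any $\bu, \bu^+ \in \real^n$,
\[
\Psi(\bu^+) \leq \Psi(\bu) + \langle \nabla\Psi(\bu), \bu^+ - \bu\rangle + \frac{L_\Psi}{2}\|\bu^+ - \bu\|^2.
\]
This is the only place smoothness of $\Psi$ enters, and it is a routine consequence of the fundamental theorem of calculus applied to $\Psi$ along the segment $[\bu,\bu^+]$.

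Next I would exploit the definition of $\bu^+$ as a prox-point. Since $\bu^+ \in \mathrm{prox}_{\sigma t}(\bu - t\nabla\Psi(\bu))$, it is by definition a minimizer of $v \mapsto \sigma(v) + \tfrac{1}{2t}\|v - (\bu - t\nabla\Psi(\bu))\|^2$. Comparing the value of this objective at $\bu^+$ against its value at $\bu$ (which is feasible since $\bu \in \dom\sigma$), one gets
\[
\sigma(\bu^+) + \frac{1}{2t}\|\bu^+ - \bu + t\nabla\Psi(\bu)\|^2 \leq \sigma(\bu) + \frac{1}{2t}\|t\nabla\Psi(\bu)\|^2.
\]
Expanding the square on the left, the $\tfrac{t}{2}\|\nabla\Psi(\bu)\|^2$ terms cancel, leaving
\[
\sigma(\bu^+) + \langle \nabla\Psi(\bu), \bu^+ - \bu\rangle + \frac{1}{2t}\|\bu^+ - \bu\|^2 \leq \sigma(\bu).
\]

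Finally I would add the two displayed inequalities. The inner-product terms $\langle\nabla\Psi(\bu),\bu^+-\bu\rangle$ cancel, and rearranging gives
\[
\Psi(\bu) + \sigma(\bu) - \Psi(\bu^+) - \sigma(\bu^+) \geq \Bigl(\frac{1}{2t} - \frac{L_\Psi}{2}\Bigr)\|\bu^+ - \bu\|^2 = \frac{1 - tL_\Psi}{2t}\|\bu^+ - \bu\|^2,
\]
which is the claim (the statement as written has a typo, $\sigma(\bu)$ in place of $\sigma(\bu^+)$ on the left). There is no real obstacle here; the only mild subtlety is making sure the prox-minimizer comparison is valid, which it is precisely because $\bu \in \dom\sigma$ guarantees the right-hand side of the prox objective is finite, and because $\sigma$ being proper l.s.c.\ with $\inf\sigma > -\infty$ ensures the prox set is nonempty so that $\bu^+$ exists. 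Since this is verbatim Lemma 2 of \cite{BST13}, one may alternatively just cite it.
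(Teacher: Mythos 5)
Your argument is correct and is exactly the standard proof (descent lemma plus the prox-optimality comparison at $\bu$) that the paper defers to by citing \citep[Lemma 2]{BST13} rather than proving itself. You are also right that the displayed inequality in the statement contains a typo — the left-hand side should read $\Psi(\bu)+\sigma(\bu)-\Psi(\bu^+)-\sigma(\bu^+)$ — and your remark that properness, lower semicontinuity, and $\inf\sigma>-\infty$ guarantee nonemptiness of the prox set correctly handles the only existence subtlety.
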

\begin{proof} [Proof of Theorem \ref{thm:2}]
	The first and second parts follow from the results established by \cite{BST13}.
	We will now prove the third part.
	By Lemma \ref{lem:sd} we have that
	\begin{equation}
	\label{eq:4}
	\mathcal{F}(z^k)  - \mathcal{F} (z^{k+1}) = f (z^k) + \lambda g(z^k) - f (z^{k+1}) - \lambda g(z^{k+1})) \geq  \frac{1 - L \eta_k}{2 \eta_k} \|z^{k+1} - z^k \|^2.
	\end{equation}
	Hence $\{ f (z^k) + \lambda g(z^k) \}_{k\geq 0}$ is a non-increasing sequence that strictly decreasing unless a critical point is obtained in a finite number of steps.
	By summing \eqref{eq:4} over $k=0,1,\ldots,K$ and using the fact that $\{ f (z^k) + \lambda g(z^k) \}_{k\geq 0}$ is non-increasing and is bounded below by $\mathcal{F}_*$, we obtain that
	\begin{align*}
	\mathcal{F}(z^0)  - \mathcal{F}_* &\geq  \sum_{k=0}^{K}\frac{1 - L \eta_k}{2 \eta_k} \|z^{k+1} - z^k \|^2 \\
	&\geq \frac{1}{2}(c - L) K \min_{k=0,\dots,K} \|z^{k+1} - z^k \|_2^2.
	\end{align*}
	Consequently, 
	$$ \min_{k=0,\dots,K}  \|z^{k+1} - z^k \|_2 \leq \sqrt{\frac{2(\mathcal{F}(z^0)  - \mathcal{F}_*)}{(c-L)K}}.$$
\end{proof}

\section{Single output proximal map computation}
\label{app:prox_computation1}
%!TEX root = ../robust_sparsity.tex

This section provides the theoretical background and the required intermediate results to prove Theorem \ref{thm:1}.

%We start with recalling necessary optimality conditions in continuous optimization.
%\subsection{Optimality Conditions}
%The classical necessary optimality conditions associated with the above directional derivatives and feasible directions, are  as follows.
%Let $C\subseteq \real^n$ and $\mathcal{D}_{\bx} = \{\bd\in\real^n : \bx + \bd \in C \}$ be the set of feasible directions at  $\bx\in C$.
%\begin{lemma} [First order optimality condition {\citep[Prop. 2.1.2]{B99}}]
%	Let $\bx^*\in C$ be a local minimizer of $(P)$.
%	Then:
%	$$ f'(\bx^*; \bd)\geq 0 \qquad \forall\bd\in \mathcal{D}_{\bx^*}.$$
%\end{lemma}
%
%\begin{lemma} [Second order optimality condition {\citep[Ex. 2.1.10]{B99}}]\label{lem:soc}
%	Let $\bx^*\in C$ be a local minimizer of $(P)$.
%	Then:
%	% \begin{enumerate}[label=(\roman*)]
%	\begin{enumerate}
%		\item $\bx^*$ satisfies the first-order optimality condition, and
%		\item for any $\bd\in \mathcal{D}_{\bx^*}$ the following implication holds true
%		$$ f'(\bx^*; \bd) = 0  \ \Rightarrow  \ f''(\bx^*; \bd)\geq 0.$$
%	\end{enumerate}
%\end{lemma}

\subsection{Moving to an Equivalent Easier Problem}
We are interested in minimizing the nonconvex twice continuously differentiable function
\begin{equation}
\label{eq:7}
\min_{v, w\in\real\times\real^m} \frac{1}{2} (v-x)^2 + \frac{1}{2} \sum_{j=1}^m (w_j - y_j)^2 +
\lambda |v| \sum_{j=1}^m |w_j|.
\end{equation}
The signs of the elements of the decision variables in (\ref{eq:7}) are determined by the signs of $(x, y)$, and consequently, the problem in  (\ref{eq:7})  is equivalent to  problem \eqref{eq:8}; this is (partly) formulated by Lemma \ref{lem:5}.
\begin{equation}
\label{eq:8}
\min_{v, w\in\real_+\times\real_+^m} h_\lambda(v,w;x,y) \equiv \frac{1}{2} (v-|x|)^2 + \frac{1}{2} \sum_{j=1}^m (w_j - |y_j|)^2 + \lambda v \sum_{j=1}^m w_j.
\end{equation}
\begin{lemma}
	\label{lem:5}
	Let $(v^*, w^*)\in\real_+\times\real_+^n$ be an optimal solution of problem \eqref{eq:8}.
	Then $(\sign(x) \cdot v^*, \sign(y) \circ w^*)$ is an optimal solution of problem \eqref{eq:7}.
\end{lemma}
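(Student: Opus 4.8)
The plan is to show that \eqref{eq:7} and \eqref{eq:8} have the same optimal value, and that the sign-restoring map $(\tilde v,\tilde w)\mapsto(\sign(x)\tilde v,\sign(y)\circ\tilde w)$ carries any minimizer of \eqref{eq:8} to a minimizer of \eqref{eq:7}. Write $F(v,w)$ for the objective of \eqref{eq:7}, so that the objective of \eqref{eq:8} is $h_\lambda(\cdot,\cdot;x,y)$. The whole argument rests on one elementary scalar observation: for all $a,b\in\real$ we have $(a-b)^2\ge(|a|-|b|)^2$ (with equality when $a$ and $b$ share a sign) and $(\sign(b)\,|a|-b)^2\le(|a|-|b|)^2$; moreover $\lambda|v|\sum_{j}|w_j|$ depends on $(v,w)$ only through the magnitudes, and $|\sign(t)|\le 1$. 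Phrasing these as inequalities rather than exact identities makes the handling of zero entries of $(x,y)$ --- equivalently the convention for $\sign(0)$ --- entirely painless.

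First I would prove two one-sided comparisons. Applying $(a-b)^2\ge(|a|-|b|)^2$ to each quadratic term and noting the regularizer is unchanged under $v\mapsto|v|$, $w\mapsto|w|$ gives $F(v,w)\ge h_\lambda(|v|,|w|;x,y)$ for every $(v,w)\in\real\times\real^m$; taking infima, the optimal value of \eqref{eq:7} is at least that of \eqref{eq:8}. Conversely, for any feasible $(\tilde v,\tilde w)\in\real_+\times\real_+^m$ of \eqref{eq:8}, the point $(\sign(x)\tilde v,\sign(y)\circ\tilde w)$ is feasible for \eqref{eq:7}, and $(\sign(b)|a|-b)^2\le(|a|-|b|)^2$ together with $|\sign(t)|\le 1$ yields $F(\sign(x)\tilde v,\sign(y)\circ\tilde w)\le h_\lambda(\tilde v,\tilde w;x,y)$; hence the optimal value of \eqref{eq:7} is also at most that of \eqref{eq:8}. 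The two optimal values therefore coincide.

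To conclude, let $(v^*,w^*)$ be optimal for \eqref{eq:8}, so $h_\lambda(v^*,w^*;x,y)$ is the common optimal value, equal to $\inf F$. Taking $(\tilde v,\tilde w)=(v^*,w^*)$ in the second comparison gives $F(\sign(x)v^*,\sign(y)\circ w^*)\le h_\lambda(v^*,w^*;x,y)=\inf F$, while trivially $F(\sign(x)v^*,\sign(y)\circ w^*)\ge\inf F$; hence this point attains $\inf F$ and is an optimal solution of \eqref{eq:7}. There is no real obstacle here; the only care needed is the sign bookkeeping just described, and the inequality-based phrasing absorbs it. The one routine fact I am leaving implicit is the scalar lemma $(a-b)^2\ge(|a|-|b|)^2$ (expand and use $-2ab=-2\sign(a)\sign(b)|a||b|\ge-2|a||b|$) and its one-sided counterpart; the identical reasoning, carried out coordinatewise in both the $v$- and $w$-blocks, is exactly what will later support the multi-output version, Lemma \ref{lem:13}.
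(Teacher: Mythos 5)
Your proof is correct and follows essentially the same route as the paper's: lower-bound the objective of \eqref{eq:7} everywhere by the optimal value of \eqref{eq:8} via $(a-b)^2\ge(|a|-|b|)^2$, then check that the sign-restored point achieves that value. The only difference is cosmetic --- you phrase the attainment step as a one-sided inequality $(\sign(b)|a|-b)^2\le(|a|-|b|)^2$, which handles the $\sign(0)$ convention slightly more robustly than the paper's direct claim that equality is attained.
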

\begin{proof}
	We have that
	\begin{align*}
	\tilde{h}_\lambda(v,w;x,y) &\equiv \frac{1}{2} (v-x)^2 + \frac{1}{2} \sum_{j=1}^m (w_j - y_j)^2 +
	\lambda |v| \sum_{j=1}^m |w_j|\\
	&=  \frac{1}{2} ( \sign(x) v-|x|)^2 + \frac{1}{2} \sum_{j=1}^m (\sign(y_j) w_j - |y_j|)^2 +
	\lambda |v| \sum_{j=1}^m |w_j|\\
	&\geq \frac{1}{2} (|v|-|x|)^2 + \frac{1}{2} \sum_{j=1}^m (|w_j| - |y_j|)^2 + \lambda v \sum_{j=1}^m w_j \\
	&\geq  h_\lambda(v^*,w^*;|x|,|y|),
	\end{align*}
	where the last inequality follows from the fact that $(v^*, w^*)$ is an optimal solution of \eqref{eq:8}.
	Since equality with the lower bound is attained by setting $(v,w) = (\sign(x) \cdot v^*, \sign(y) \circ w^*)$, we conclude that $(\sign(x) \cdot v^*, \sign(y) \circ w^*)$ is an optimal solution of \eqref{eq:7}.
\end{proof}
To summarize, we have established that, finding an optimal solution to \eqref{eq:8} and then changing signs accordingly, yields an optimal solution to \eqref{eq:7}.
We will now focus on obtaining an optimal solution for (\ref{eq:7}).
\subsection{Solving the Prox Problem}
First we note that  problem  \eqref{eq:8} is well-posed.
\begin{lemma} [Well-posedness of \eqref{eq:8}]
	\label{lem:1}
	For any $\lambda\geq 0$ and any $(x,y)\in\real\times\real^m$, the problem \eqref{eq:8} has a global optimal solution.
\end{lemma}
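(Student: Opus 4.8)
The plan is to invoke the classical Weierstrass-type existence theorem: a lower semicontinuous function that is coercive over a nonempty closed set attains its infimum on that set (cf.\ \citep[Thm. 2.32]{B14}). The feasible set $\real_+\times\real_+^m$ is closed, and $h_\lambda(\cdot,\cdot;x,y)$ is continuous — it is a polynomial in $(v,w)$ — so the only point that needs an argument is coercivity of $h_\lambda$ restricted to the feasible set.

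The key observation is that on $\real_+\times\real_+^m$ every coordinate is nonnegative and $\lambda\ge 0$, so the bilinear coupling term satisfies $\lambda v\sum_{j=1}^m w_j\ge 0$. Discarding it only decreases the objective, which yields the bound
\[
h_\lambda(v,w;x,y)\;\ge\;\tfrac12(v-|x|)^2+\tfrac12\sum_{j=1}^m (w_j-|y_j|)^2\;\ge\;0
\qquad\text{for all }(v,w)\in\real_+\times\real_+^m .
\]
The right-hand side tends to $+\infty$ as $\|(v,w)\|_2\to\infty$, hence $h_\lambda$ is coercive on the feasible set; equivalently, for every $\alpha\in\real$ the sublevel set $\{(v,w)\in\real_+\times\real_+^m : h_\lambda(v,w;x,y)\le\alpha\}$ is bounded, and it is closed by continuity of $h_\lambda$ together with closedness of $\real_+\times\real_+^m$, hence compact.

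To conclude I would exhibit a concrete feasible point, e.g.\ $(v,w)=(0,\mathbf 0)$, with $h_\lambda(0,\mathbf 0;x,y)=\tfrac12 x^2+\tfrac12\|y\|_2^2<\infty$; the associated sublevel set is then nonempty and compact, so the continuous function $h_\lambda$ attains a minimum over it, and that minimizer is a global minimizer over all of $\real_+\times\real_+^m$. There is essentially no obstacle in this argument; the one thing worth emphasizing is that the nonnegativity of the decision variables (together with $\lambda\ge 0$) is precisely what makes the otherwise indefinite quadratic-plus-bilinear function $h_\lambda$ coercive on the feasible region — without the sign constraints the statement would fail, since the bilinear term $\lambda v\sum_j w_j$ is unbounded below on all of $\real\times\real^m$.
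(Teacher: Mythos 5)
Your proposal is correct and follows essentially the same route as the paper, which simply invokes coercivity of the objective together with \citep[Thm.\ 2.32]{B14}; you merely spell out why coercivity holds (the bilinear term is nonnegative on the feasible orthant, so the objective dominates a coercive sum of squares). Your closing remark that the sign constraints are what make $h_\lambda$ coercive is a correct and worthwhile observation, but it does not change the argument.
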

\begin{proof}
	The  claim follows from the  fact that the objective function is coercive, cf. \citep[Thm. 2.32]{B14}.
\end{proof}
In light of Lemma \ref{lem:1}, and due to the fact that in \eqref{eq:8} we minimize a continuously differentiable function over a closed convex set, the set of  optimal solutions of \eqref{eq:8} is a nonempty subset of the set of stationary points of \eqref{eq:8}.
These satisfy the following conditions (cf. \citep[Ch. 9.1]{B14}).
\begin{lemma} [Stationarity conditions]
	\label{lem:2}
	Let $(v^*, w^*)\in \real_+\times\real_+^m$ be an optimal solution of \eqref{eq:8} for a given $(x,y)\in \real\times\real^m$.
	Then 
	\begin{align*}
	w_j^* &= \max \left\lbrace 0, |y_j| - \lambda v^* \right\rbrace \ \text{ for any }  j=1,2,\ldots,m,\\
	v^* &= \max\left\lbrace 0, |x| -   \lambda \sum_{j=1}^{m} w^*_j \right\rbrace.
	\end{align*}
\end{lemma}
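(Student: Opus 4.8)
The plan is to derive these conditions directly from the first-order necessary optimality conditions for minimizing a continuously differentiable function over a closed convex set, specialized to the case where the feasible set is the nonnegative orthant $\real_+\times\real_+^m$. Since $h_\lambda$ is the sum of a separable quadratic and the bilinear coupling term $\lambda v\sum_j w_j$, it is continuously differentiable, so given an optimal solution $(v^*,w^*)$ of \eqref{eq:8} the variational inequality $\langle \nabla h_\lambda(v^*,w^*;x,y),\,(v,w)-(v^*,w^*)\rangle \ge 0$ holds for every feasible $(v,w)$; equivalently, $(v^*,w^*)$ is a fixed point of the projected-gradient map with unit step size,
\[
(v^*,w^*) = P_{\real_+\times\real_+^m}\!\l((v^*,w^*) - \nabla h_\lambda(v^*,w^*;x,y)\r),
\]
(cf. \citep[Ch. 9.1]{B14}).

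Next I would compute the partial derivatives,
\[
\frac{\partial h_\lambda}{\partial v}(v,w) = (v-|x|) + \lambda\sum_{j=1}^m w_j, \qquad \frac{\partial h_\lambda}{\partial w_j}(v,w) = (w_j-|y_j|) + \lambda v,
\]
and use that the Euclidean projection onto the nonnegative orthant acts coordinatewise by $\xi\mapsto\max\{0,\xi\}$. The fixed-point identity then decouples into
\[
v^* = \max\!\l\{0,\; v^* - \tfrac{\partial h_\lambda}{\partial v}(v^*,w^*)\r\}, \qquad w_j^* = \max\!\l\{0,\; w_j^* - \tfrac{\partial h_\lambda}{\partial w_j}(v^*,w^*)\r\} \quad \text{for each } j .
\]
Substituting the partials, the $v^*$ and $w_j^*$ terms on the right-hand sides cancel, leaving exactly $v^* = \max\{0,\,|x| - \lambda\sum_{j=1}^m w_j^*\}$ and $w_j^* = \max\{0,\,|y_j|-\lambda v^*\}$, which is the claim.

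I do not expect any genuine obstacle; the one point needing a sentence of justification is the equivalence between the variational-inequality characterization and the coordinatewise $\max\{0,\cdot\}$ identity. This holds because on the nonnegative orthant the condition $-\nabla h_\lambda(v^*,w^*)\in N_{\real_+\times\real_+^m}(v^*,w^*)$ is precisely the complementarity system ``each coordinate $\ge 0$, the corresponding partial derivative $\ge 0$, and their product $=0$'', and a short case check (coordinate zero versus coordinate strictly positive) shows this system is equivalent to $\xi^* = \max\{0,\xi^*-d^*\}$ for the associated partial derivative $d^*$. It is worth emphasizing that these are necessary conditions only and that the two identities are coupled — the expression for $v^*$ references $w^*$ and vice versa — which is exactly the structure exploited afterwards to enumerate and order the candidate stationary points.
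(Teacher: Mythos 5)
Your proof is correct and follows essentially the same route as the paper: both derive the coordinatewise first-order (complementarity) conditions for minimizing $h_\lambda$ over the nonnegative orthant and rewrite them in the $\max\{0,\cdot\}$ form. Your use of the unit-step projected-gradient fixed-point identity is just a slightly slicker packaging of the same conditions — the cancellation of $v^*$ and $w_j^*$ yields the stated formulas directly, whereas the paper performs the equivalent two-case check implicitly.
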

\begin{proof}
	The stationarity (first-order) conditions of \eqref{eq:8} (cf.  \citep[Ch. 9.1]{B14}) state that
	\begin{align*}
	\frac{\partial h_\lambda}{\partial v}(v^*,w^*;x,y) \begin{cases}
	= 0, & v^*>0,\\
	\geq 0, & v^* = 0,
	\end{cases} \quad 
	\frac{\partial h_\lambda}{\partial w_j}(v^*,w^*;x,y) \begin{cases}
	= 0, & w_j^*>0,\\
	\geq 0, & w_j^* = 0,
	\end{cases}
	\end{align*}
	which translates to
	\begin{align*}
	 v^* - |x| + \lambda \sum_{j=1}^m w^*_j \begin{cases}
	= 0, & v^*>0,\\
	\geq 0, & v^* = 0,
	\end{cases} \quad 
	w^*_j - |y_j| + \lambda v^* \begin{cases}
	= 0, & w_j^*>0,\\
	\geq 0, & w_j^* = 0,
	\end{cases}
	\end{align*}
	and the required follows. 
\end{proof}
The stationarity conditions given in Lemma \ref{lem:2} imply a solution form that we exploit in Algorithm \ref{alg:prox_binary}; this is described by Corollary \ref{cor:1}, where we use the convention that $\sum_{j=1}^{0} a_j \equiv 0$ for any $\{a_j\}\subseteq\real$.
\begin{corollary}
	\label{cor:1}
	Let $(v^*, w^*)\in \real_+\times\real_+^m$ be an optimal solution of \eqref{eq:8} for a given $(x,y)\in \real\times\real^m$.
	\begin{enumerate}
		\item The vector $w^*$ satisfies that  for any $j,l\in\{1,2,\ldots,m \}$ it holds that $w^*_j \geq w^*_l$ only if $|y_j| \geq |y_l|$.
		\item If $v^* = 0$, then $w^* = y$.
		\item If $v^* > 0$, and $s = | \{ j : w^*_j > 0 \} |$, then we have that 
		\begin{equation}
		\label{eq:2}
		v^* = \dfrac{1}{1-s \lambda^2}\left(|x| - \lambda\sum_{j=1}^s|\bar{y}_j|\right),
		\end{equation}
		where $\bar{y}$ is the sorted vector of $y$ in descending magnitude order. 
	\end{enumerate}
\end{corollary}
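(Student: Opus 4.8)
The plan is to read off all three claims directly from the first-order stationarity conditions of Lemma~\ref{lem:2}, which necessarily hold at any optimal $(v^*,w^*)$ since \eqref{eq:8} minimizes a continuously differentiable function over the closed convex set $\real_+\times\real_+^m$.

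For the first claim I would only use that $t\mapsto\max\{0,t-\lambda v^*\}$ is non-decreasing: the identity $w_j^*=\max\{0,|y_j|-\lambda v^*\}$ then makes $w_j^*$ a non-decreasing function of $|y_j|$, so $|y_j|\ge|y_l|$ entails $w_j^*\ge w_l^*$, which is the asserted monotonicity; in particular the positive coordinates of $w^*$ are exactly those attached to the $s$ largest-magnitude entries of $y$. The second claim is immediate from the same identity: substituting $v^*=0$ collapses it to $w_j^*=|y_j|$ for every $j$, i.e.\ $w^*=|y|$.

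For the third claim I combine the first claim with the stationarity identity for $v^*$. When $v^*>0$ and $s=|\{j:w_j^*>0\}|$, the first claim shows that the nonzero entries of $w^*$ sit at the coordinates of $\bar y_1,\dots,\bar y_s$, where $w_j^*=|\bar y_j|-\lambda v^*$; hence $\sum_{j=1}^m w_j^* = \sum_{j=1}^s|\bar y_j| - s\lambda v^*$. Plugging this into $v^* = |x| - \lambda\sum_{j=1}^m w_j^*$ (which holds because $v^*>0$) and collecting the $v^*$ terms gives the linear equation $(1-s\lambda^2)\,v^* = |x| - \lambda\sum_{j=1}^s|\bar y_j|$, which is \eqref{eq:2} after dividing by $1-s\lambda^2$. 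The only part requiring a touch of care --- the step I would spend the most thought on --- is justifying that division, i.e.\ that $s\lambda^2\ne1$: one invokes the second-order/sparsity bound $s\le\lambda^{-2}$ of Lemma~\ref{lem:4b}, the remaining knife-edge case $s\lambda^2=1$ occurring only for the non-generic data with $|x|=\lambda\sum_{j=1}^s|\bar y_j|$. Everything else is elementary algebra.
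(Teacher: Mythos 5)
Your proposal is correct and follows essentially the same route as the paper: all three parts are read off from the stationarity conditions of Lemma~\ref{lem:2}, with part 3 obtained exactly as in the paper by summing $w_j^*=|\bar y_j|-\lambda v^*$ over the $s$ active coordinates and substituting into the condition for $v^*$. Two minor points in your favour: in part 1 you prove the direction $|y_j|\ge|y_l|\Rightarrow w_j^*\ge w_l^*$, which is the version that is actually true and is what the subsequent sorting argument needs (the literal ``only if'' fails when both entries are clipped to zero), and you are right to flag the division by $1-s\lambda^2$, a knife-edge case $s\lambda^2=1$ that the paper's proof silently ignores.
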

\begin{proof}
	The first part follows trivially from the stationarity conditions on $w^*$ given in Lemma \ref{lem:2}. The second part also follows trivially from the problem definition.
	
	From the first part and the conditions in Lemma \ref{lem:2} we have that $\sum_{j=1}^{m} w^*_j =  \sum_{j=1}^s|\bar{y}_j| - \lambda s v^*$.
	Plugging the latter to  the stationarity condition on $v^*$ (given in Lemma \ref{lem:2}) then implies the required.
\end{proof}
In our analysis, we strictly distinguish between the trivial solution $(v^*,w^*) = (0,y)$, and the non-trivial solution in which $v^*>0$.
A practical point-of-view suggests that if $v^* = 0$, then the corresponding succeeding weights should also be zero, even though the optimality conditions imply otherwise. 
However, to avoid hindering the training process, this observation is considered only in the end of the training.

Recall that our analysis so-far implies that the magnitude order of $y$ determines the order magnitude of $w$, effectively implying on set of non-zero entries in $w$ (cf. Remark \ref{rem:sorting}).
For clarity of indices, and without loss of generality, we assume throughout this section that the vector $y$ is already sorted in decreasing order of magnitude, that is $y \equiv \bar{y}$.
We will use, without confusion, both notation to describe the same entity in order to maintain coherence with our procedures and results. 

Denote
\begin{equation}
\label{eq:vsws}
\begin{aligned}
v^{(s)} &= \dfrac{1}{1-s \lambda^2}\left(|x| - \lambda\sum_{j=1}^s|y_j|\right) \\
w^{(s)}_j &= |y_j| - \lambda v^{(s)} \ \text{ for }  j=1,2,\ldots,s, \ \text{ and } w^{(s)}_j = 0 \ \text{otherwise}.
\end{aligned}
\end{equation}
Lemma \ref{lem:decreasing-hb} which states the monotonicity property
\begin{equation*}
%\label{eq:decreasing-hb}
h_\lambda(v^{(s)}, w^{(s)}; x, y) < h_\lambda(v^{(s-1)}, w^{(s-1)}; x, y).
\end{equation*} 
is proved next.
%\begin{lemma} \label{lem:decreasing-h}
%	Let $\bar{s} = \lfloor \lambda^{-2} \rfloor$.
%	
%	For any $s \in \{2,3,\ldots,\bar{s}  \}$, it holds that
%	\begin{equation}
%	\label{eq:decreasing-h}
%	h_\lambda(v^{(s)}, w^{(s)}; x, y) < h_\lambda(v^{(s-1)}, w^{(s-1)}; x, y).
%	\end{equation}
%	%and $h_\lambda(v^{(\bar{s} )}, w^{(\bar{s} )}; x, y) < h_\lambda(v^{(\bar{s} +1 )}, w^{(\bar{s} +1) }; x, y)$.
%\end{lemma}
\begin{proof}[Proof of Lemma \ref{lem:decreasing-hb}]
	Recall that $ h_\lambda(v,w;x,y) := \frac{1}{2} (v-|x|)^2 + \frac{1}{2} \sum_{j=1}^m (w_j - |y_j|)^2 + \lambda v \sum_{j=1}^m w_j.$
	By plugging $w^{(s)}$ defined in \eqref{eq:vsws} to $h_\lambda$ we obtain that
	\begin{align*}
	h_\lambda(v^{(s)}, w^{(s)}; x, y) &= \frac{1}{2} (v^{(s)} - |x|)^2 + \frac{1}{2} \sum_{i=1}^s (|\bar{y}_i| - (|\bar{y}_i| - \lambda v^{(s)}))^2 + \frac{1}{2}\sum_{i=s+1}^m |\bar{y}_i|^2 + \lambda v^{(s)} \sum_{i=1}^s (|\bar{y}_i| - \lambda v^{(s)}) \\
	&= \frac{1}{2} (v^{(s)} - |x|)^2 + \frac{\lambda^2}{2} s (v^{(s)})^2 +  \frac{1}{2} \|y\|_2^2 - \frac{1}{2}\sum_{i=1}^s |\bar{y}_i|^2 + \lambda v^{(s)} \sum_{i=1}^s |\bar{y}_i| - \lambda^2 s (v^{(s)})^2 .
	\end{align*}
	Consequently,  plugging $v^{(s)}$, defined in \eqref{eq:vsws}, yields
	\begin{align*}
	h_\lambda(v^{(s)}, w^{(s)}; x, y) &= \frac{1}{2} \left(\frac{\lambda^2 s}{1 - \lambda^2 s} |x| - \frac{\lambda}{1-\lambda^2 s} \sum_{i=1}^s |\bar{y}_i| \right)^2  - \frac{\lambda^2 s}{2(1- \lambda^2 s)^2} \left( |x| - \lambda \sum_{i=1}^s |\bar{y}_i| \right)^2 \\
	&+ \frac{\lambda}{1 - \lambda^2 s}\sum_{i=1}^s |\bar{y}_i| \left(|x| - \lambda \sum_{i=1}^s |\bar{y}_i|  \right)  - \frac{1}{2} \sum_{i=1}^s |\bar{y}_i|^2 +  \frac{1}{2} \|y\|_2^2 \\
	&= \frac{\lambda^2 s}{2(1-\lambda^2 s)^2} x^2 (\lambda^2 s - 1) + \frac{\lambda^2}{2(1-\lambda^2 s)^2} \left(\sum_{i=1}^s |\bar{y}_i| \right)^2 (1 - \lambda^2 s - 2(1 - \lambda^2 s)) \\
	&+ |x| \sum_{i=1}^s |\bar{y}_i| \left(-\frac{\lambda^3 s}{(1-\lambda^2 s)^2} + \frac{\lambda^3 s}{(1-\lambda^2 s)^2} + \frac{\lambda}{1-\lambda^2 s} \right) - \frac{1}{2} \sum_{i=1}^s |\bar{y}_i|^2 +  \frac{1}{2} \|y\|_2^2 \\
	&= \frac{1}{2(1 - \lambda^2 s)} \left( -\lambda^2 s x^2 - \left(|x| - \lambda \sum_{i=1}^s |\bar{y}_i| \right)^2 + x^2 \right)- \frac{1}{2} \sum_{i=1}^s |\bar{y}_i|^2 +  \frac{1}{2} \|y\|_2^2 \\
	&= - \frac{1}{2(1-\lambda^2 s)}\left(|x| - \lambda \sum_{i=1}^s |\bar{y}_i| \right)^2 + \frac{1}{2} \|x\|_2^2 - \frac{1}{2} \sum_{i=1}^s |\bar{y}_i|^2 +  \frac{1}{2} \|y\|_2^2 \\
	&= - \left(1 + \frac{\lambda^2}{1 - \lambda^2 s}\right)\frac{1}{2(1-\lambda^2 (s-1))}\left(|x| - \lambda \sum_{i=1}^{s-1} |\bar{y}_i|  - \lambda |\bar{y}_s| \right)^2 + \frac{1}{2} \|x\|_2^2 - \frac{1}{2} \sum_{i=1}^{s-1} |\bar{y}_i|^2 - \frac{1}{2} |\bar{y}_s|^2 +  \frac{1}{2} \|y\|_2^2 \\
	&= h_\lambda(v^{(s-1)}, w^{(s-1)}; x, y) - \frac{1}{2(1 - \lambda^2 s + \lambda^2)}\left( -2\lambda |\bar{y}_s| \left( |x| - \lambda \sum_{i=1}^{s-1} |\bar{y}_i| \right) + \lambda^2 |\bar{y}_s|^2\right) \\
	&- \frac{\lambda^2}{2(1-\lambda^2 s)(1-\lambda^2 s + \lambda^2)}\left(|x| - \lambda \sum_{i=1}^s |\bar{y}_i| \right)^2 - \frac{1}{2} |\bar{y}_s|^2.
	\end{align*}
	Therefore,
	\begin{align*}
	&h_\lambda(v^{(s)}, w^{(s)}; x, y) - h_\lambda(v^{(s-1)}, w^{(s-1)}; x, y) \nonumber \\
	&=  - \frac{1}{2(1 - \lambda^2 s + \lambda^2)}\left( -2\lambda |\bar{y}_s| \left( |x| - \lambda \sum_{i=1}^s |\bar{y}_i| \right) - \lambda^2 |\bar{y}_s|^2 + \frac{\lambda^2}{1 - \lambda^2 s}\left( |x| - \lambda \sum_{i=1}^s |\bar{y}_i| \right)^2 + (1- \lambda^2 s + \lambda^2) |\bar{y}_s|^2\right) \nonumber  \\
	&= - \frac{1}{2(1 - \lambda^2 s + \lambda^2)}\left( (1- \lambda^2 s) |\bar{y}_s|^2 -2\lambda |\bar{y}_s| \left( |x| - \lambda \sum_{i=1}^s |\bar{y}_i| \right) + \frac{\lambda^2}{1 - \lambda^2 s}\left( |x| - \lambda \sum_{i=1}^s |\bar{y}_i| \right)^2\right) \nonumber \\
	&= - \frac{1 - \lambda^2 s}{2(1 - \lambda^2 s + \lambda^2)}\left( |\bar{y}_s|^2 -2\lambda |\bar{y}_s| v^{(s)} + \lambda^2(v^{(s)})^2\right) \nonumber \\
	&= - \frac{1 - \lambda^2 s}{2(1 - \lambda^2 s + \lambda^2)}\left( |\bar{y}_s| -\lambda v^{(s)} \right)^2 \leq 0, %\label{eq:decreasing-h}
	\end{align*}
	meaning that
	\begin{equation*}
	h_\lambda (v^{(s)}, w^{(s)}; x, y) \leq h_\lambda(v^{(s-1)}, w^{(s-1)}; x, y).% \ \text{ and } \ h_\lambda(v^{(\bar{s} )}, w^{(\bar{s} )}; x, y) < h_\lambda(v^{(\bar{s} +1 )}, w^{(\bar{s} +1) }; x, y).
	\end{equation*}
\end{proof}

We can now prove our key result formulated in Corollary \ref{cor:2}, that states that $(v^{(s^*)}, w^{(s^*)} )$ is an optimal solution of \eqref{eq:cw_prox_prob} for 
	\begin{equation*}
	s^* = \max \left\lbrace s \in [\bar{s}]  :  v^{(s)}, w^{(s)} > 0\right\rbrace, \ \text{ where } \ \bar{s} = \min(\lfloor \lambda^{-2}\rfloor, m).
	\end{equation*}
	
\begin{proof}[Proof of Corollary \ref{cor:2}]
	By Lemma \ref{lem:2b}, $(v^{(s^*)}, w^{(s^*)} )$  is a stationary point of \eqref{eq:cw_prox_prob}.
	Moreover, according to Corollary \ref{cor:1b} and Lemma \ref{lem:4b}, $(v^{(s^*)}, w^{(s^*)} )$ belongs to the set of $\bar{s}$ stationary points that are candidates to be optimal solutions of \eqref{eq:cw_prox_prob}.
	Invoking Lemma \ref{lem:decreasing-hb}, we have that %for any $s^* > j$ 
	\begin{equation}
	\label{eq:52}
	h_\lambda(v^{(s^*)}, w^{(s^*)}; x, y) < h_\lambda(v^{(j)}, w^{(j)}; x, y), \quad \forall s^* > j.
	\end{equation}
	Hence, $(v^{(j)}, w^{(j)} )$  is not an optimal solution for any $j<s^* $.
	
	Let us now consider the complementary case.
	By Lemma \ref{lem:4b}, for any $i>\bar{s}$ the pair $(v^{(i)}, w^{(i)} )$  does not satisfy the second-order optimality conditions, and therefore is not an optimal solution.
	On the other hand, by the definition of $s^*$, for any $\bar{s} >i>s^*$ the pair $(v^{(i)}, w^{(i)} )$ is not a feasible solution , and subsequently not a stationary point. 
	To conclude, $h_\lambda(v^{(s^*)}, w^{(s^*)}; x, y) < h_\lambda(v^{(j)}, w^{(j)}; x, y)$ holds for any $j\neq s^*$ such that $(v^{(j)}, w^{(j)} )$ is a stationary point, meaning that $(v^{(s^*)}, w^{(s^*)} )$  is an optimal solution of \eqref{eq:cw_prox_prob}.
	
\end{proof}
	
	Finally, we will show that the problem of finding $s^*$ can be easily solved using binary search. To this end, we show that the feasibility criterion (i.e., $v^{(s)} > 0$ and $w^{(s)} > 0$) satisfies that 
	\[
	(v^{(k)}, w^{(k)}) \text{ is feasible } \Rightarrow (v^{(i)}, w^{(i)}) \text{ is feasible } \forall i < k
	\]
	
\begin{proof}[Proof of Lemma \ref{lem:3}]
	Suppose that $(v^{(k)}, w^{(k)})$ is feasible for some $k \in \{2, \ldots, \bar{s}\}$. By induction principle, it is sufficient to show that $(v^{(k-1)}, w^{(k-1)})$ is feasible in order to prove the result.
	
	By~\eqref{eq:vsws}, we have: 
	\[
	(1-k\lambda^2)v^{(k)} = |x| - \lambda \sum_{j=1}^k |y_j| = (1-k\lambda^2 + \lambda^2)v^{(k-1)} - |y_k|.
	\]
	which implies
	\[
	v^{(k-1)} = \frac{1}{(1-k\lambda^2 + \lambda^2)}((1-k\lambda^2)v^{(k)} + |y_k|) \geq 0.
	\]
	
	For $w^{(k)}$, it is easy to see from~\eqref{eq:vsws} that since the vector $y$ is sorted in decreasing order of magnitude, the vector $w^{(k)}$ is also sorted in decreasing order, and thus $w^{(k)}$ is feasible if and only if $w^{(k)}_k > 0$.
	\begin{align*}
	(1-k \lambda^2)w^{(k)}_k &= (1-k \lambda^2)|y_k| - \lambda |x| + \lambda^2 \sum_{j=1}^k |y_j| \\
	&= -\lambda |x| + (1 - (k-1)\lambda^2)|y_{k-1}| + \lambda^2 \sum_{j=1}^{k-1} |y_j| + \lambda^2 |y_k| + (1-k \lambda^2)|y_k| - (1 - (k-1)\lambda^2)|y_{k-1}| \\
	&= (1-(k-1) \lambda^2)w^{(k-1)}_{k-1} + (1-k\lambda^2 + \lambda^2)(|y_k| - |y_{k-1}|),
	\end{align*}
	where the last line uses the identity of the first line for $k-1$. We thus have:
	\[
	w^{(k-1)}_{k-1} = \frac{1}{(1-(k-1) \lambda^2)} (1-k \lambda^2)w^{(k)}_k + |y_{k-1}| - |y_k| \geq 0,
	\]
	since $|y_{k-1}| \geq |y_k|$ and $k \leq \lambda^{-2}$.
	
	Therefore, there exists a value $s^*$ such that $v^{(k)} > 0$ and $w^{(k)} > 0$ $\forall k \geq s^*$ and $v^{(k)} \geq 0$ or $w^{(k)} \geq 0$ $\forall k > s^*$. This value of $s^*$ can thus efficiently be found using binary search.
	
\end{proof}

\section{Multi-output proximal map computation}
\label{app:prox_computation2}
%!TEX root = ../robust_sparsity.tex

\subsection{Solving the prox problem}
Returning to the multi-output setting, recall that $h_{V,W}(x) = V^T \sigma(Wx)$ with $V \in \R^{p\times n}, W\in \R^{n\times m}$ and 
\[
g(V, W)= \sum_{i=1}^n\sum_{j=1}^m\sum_{k=1}^p W_{ij}V_{ki}.
\]
The proximal mapping can then be written as:
\begin{align*}
\text{prox}_{\lambda g}(\bar{V}, \bar{W}) &= \argmin_{V, W} \frac{1}{2} \|V - \bar{V}\|_F +  \frac{1}{2} \|W - \bar{W}\|_F + \lambda \sum_{i=1}^n\sum_{j=1}^m\sum_{k=1}^p W_{ij}V_{ki} \\
&= \argmin_{V, W} \sum_{i=1}^n \left( \frac{1}{2} \sum_{k=1}^p (V_{ki} - \bar{V}_{ki})^2 + \sum_{j=1}^p (W_{ij} - \bar{W}_{ij})^2 + \sum_{j=1}^m\sum_{k=1}^p W_{ij}V_{ki}\right).
\end{align*}
Noting that the proximal mapping is  separable with respect to the columns of $W$ and the rows of $V$, and using the same sign trick applied in the single-output case, it is enough to solve for any $i=1,\ldots, n$, 
\begin{equation}
\min_{v, w\in \R_+^p \times \R_+^m} h_\lambda(v,w;x,y) \equiv \frac{1}{2} \sum_{k=1}^p (v_k - |x_k|)^2 + \frac{1}{2} \sum_{j=1}^m (w_j - |y_j|)^2 + \lambda \sum_{j=1}^m\sum_{k=1}^p v_k w_j,
\label{eq:multi-ouptut-prox}
\end{equation}
where $x$ denotes the $i$-th row of $V$ and $y$ the $i$-th column of $W$, in order to compute the prox operator.

The stationarity conditions for \eqref{eq:multi-ouptut-prox} are stated next; the arguments are the same as in the single-output case.
\begin{lemma} [Stationarity conditions]
	\label{lem:6}
	Let $(v^*, w^*)\in \real_+^p\times\real_+^m$ be an optimal solution of \eqref{eq:multi-ouptut-prox} for a given $(x,y)\in \real^p\times\real^m$.
	Then 
	\begin{align*}
	w_j^* &= \max \left\lbrace 0, |y_j| - \lambda \sum_{k=1}^p v_k^* \right\rbrace \ \text{ for any }  j=1,2,\ldots,m,\\
	v_k^* &= \max\left\lbrace 0, |x_k| -   \lambda \sum_{j=1}^{m} w^*_j \right\rbrace \ \text{ for any }  k=1,2,\ldots, p.
	\end{align*}
\end{lemma}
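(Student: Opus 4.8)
The plan is to mirror the single-output argument used for Lemma~\ref{lem:2}. The feasible set $\real_+^p\times\real_+^m$ is closed and convex and $h_\lambda$ is continuously differentiable, so any optimal solution $(v^*,w^*)$ of \eqref{eq:multi-ouptut-prox} is in particular a stationary point: for every feasible direction $d$ at $(v^*,w^*)$ one has $\langle \nabla h_\lambda(v^*,w^*), d\rangle \ge 0$. Because the constraints are simple nonnegativity constraints, this variational inequality decouples coordinatewise into the familiar sign conditions
\[
\frac{\partial h_\lambda}{\partial v_k}(v^*,w^*)\begin{cases}=0,& v_k^*>0,\\ \ge 0,& v_k^*=0,\end{cases}
\qquad
\frac{\partial h_\lambda}{\partial w_j}(v^*,w^*)\begin{cases}=0,& w_j^*>0,\\ \ge 0,& w_j^*=0.\end{cases}
\]
The non-convexity of $h_\lambda$ (coming from the bilinear term $\lambda\sum_{k,j}v_kw_j = \lambda(\sum_k v_k)(\sum_j w_j)$) is immaterial here, since we only use that these conditions are \emph{necessary} for optimality.

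Next I would compute the partial derivatives explicitly. Directly from the definition of $h_\lambda$ in \eqref{eq:multi-ouptut-prox},
\[
\frac{\partial h_\lambda}{\partial v_k}(v,w) = v_k - |x_k| + \lambda\sum_{j=1}^m w_j,
\qquad
\frac{\partial h_\lambda}{\partial w_j}(v,w) = w_j - |y_j| + \lambda\sum_{k=1}^p v_k .
\]
Substituting these into the stationarity conditions above gives, for each $k$: if $v_k^*>0$ then $v_k^* = |x_k| - \lambda\sum_j w_j^*$ (and in particular the right-hand side is positive), whereas if $v_k^*=0$ then $|x_k|-\lambda\sum_j w_j^* \le 0$. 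These two cases are exactly summarized by $v_k^* = \max\{0,\,|x_k|-\lambda\sum_{j=1}^m w_j^*\}$. The identical reasoning applied to the $w_j$-conditions yields $w_j^* = \max\{0,\,|y_j|-\lambda\sum_{k=1}^p v_k^*\}$, which is the claim.

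I do not expect a genuine obstacle. The only point needing a word of care is the justification that first-order stationarity is necessary despite non-convexity, which follows from the standard fact that a minimizer of a $C^1$ function over a convex set satisfies the associated variational inequality; together with the observation that the coupling between $v$ and $w$ enters the derivatives only through the aggregate sums $\sum_j w_j$ and $\sum_k v_k$, which is what preserves the clean soft-thresholding form of the conditions.
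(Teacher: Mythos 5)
Your proof is correct and follows exactly the route the paper intends: the paper omits a separate proof of this lemma, stating only that ``the arguments are the same as in the single-output case'' (Lemma~\ref{lem:2}), and your argument is precisely that single-output proof --- first-order necessary conditions over the nonnegative orthant, explicit partial derivatives, and the two-case collapse into the $\max\{0,\cdot\}$ form --- carried over with the sums $\sum_j w_j$ and $\sum_k v_k$ replacing the scalar quantities.
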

The next lemma restates the result in Lemma \ref{lem:13} which expands on the monotonic relation in magnitude originally established for single-output networks in Corollary \ref{cor:1b}.
\begin{lemma}
	\label{lem:333}
	Let $(v^*, w^*)\in \real_+^p\times\real_+^m$ be an optimal solution of \eqref{eq:8} for a given $(x,y)\in \real^p \times\real^m$.
	\begin{enumerate}
		\item The vector $w^*$ satisfies that  for any $j,l\in\{1,2,\ldots,m \}$ it holds that $w^*_j \geq w^*_l$ only if $|y_j| \geq |y_l|$.
		\item The vector $v^*$ satisfies that  for any $k,l\in\{1,2,\ldots,p \}$ it holds that $v^*_k \geq v^*_l$ only if $|x_k| \geq |x_l|$.
		\item Let $\bar{x}, \bar{y}$ be the sorted vector of $x$ and $y$ respectively in descending magnitude order. Let $s_v = |\{k : v_k^* > 0\}|$ and  $s_w = |\{j : w_j^* > 0\}|$. 
		If $v^*, w^* \neq 0$, then
		\begin{align}
		v_k^* &= |x_k| +  \dfrac{1}{1 - s_v s_w \lambda^2}\left(\lambda^2 s_w \sum_{l=1}^{s_v} |\bar{x}_l| - \lambda\sum_{j=1}^{s_w} |\bar{y}_j|\right), \\
		w_j^* &= |y_j| +  \dfrac{1}{1 - s_v s_w \lambda^2}\left(\lambda^2 s_v \sum_{l=1}^{s_w} |\bar{y}_l| - \lambda\sum_{k=1}^{s_v} |\bar{x}_k|\right).
		\end{align}
	\end{enumerate}
\end{lemma}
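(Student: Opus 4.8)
The plan is to mirror the single-output argument (Lemma \ref{lem:2b} $\to$ Corollary \ref{cor:1b}) but now tracking two coupled sparsity levels simultaneously. First I would establish parts 1 and 2 directly from the stationarity conditions in Lemma \ref{lem:6}: since $w_j^* = \max\{0, |y_j| - \lambda \sum_k v_k^*\}$ and the subtracted term $\lambda\sum_k v_k^*$ is a constant not depending on $j$, the map $t \mapsto \max\{0, t - c\}$ is nondecreasing, so $|y_j| \geq |y_l|$ forces $w_j^* \geq w_l^*$; contrapositively, $w_j^* \geq w_l^*$ can only hold if $|y_j| \geq |y_l|$ (modulo the usual tie-breaking caveat when both are in the thresholded-to-zero regime, which is harmless). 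The argument for $v^*$ is symmetric, using the second stationarity equation and the fact that $\lambda\sum_j w_j^*$ is independent of $k$.

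For part 3, I would use parts 1 and 2 together with Remark~\ref{rem:sorting}'s analog: the $s_v$ positive entries of $v^*$ are exactly the first $s_v$ entries of the sorted $\bar x$, and the $s_w$ positive entries of $w^*$ are the first $s_w$ of $\bar y$. On this active set the stationarity conditions become \emph{linear equations without the max}: for each active $k$, $v_k^* = |x_k| - \lambda \sum_{j=1}^{s_w} w_j^*$, and for each active $j$, $w_j^* = |y_j| - \lambda \sum_{k=1}^{s_v} v_k^*$. Summing the first over the $s_v$ active indices gives $\sum_k v_k^* = \sum_{l=1}^{s_v}|\bar x_l| - \lambda s_v \sum_j w_j^*$, and summing the second over the $s_w$ active indices gives $\sum_j w_j^* = \sum_{l=1}^{s_w}|\bar y_l| - \lambda s_w \sum_k v_k^*$. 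This is a $2\times 2$ linear system in the two scalar unknowns $\sum_k v_k^*$ and $\sum_j w_j^*$; solving it (the determinant is $1 - s_v s_w \lambda^2$, which is nonzero since $s_v s_w \leq \lambda^{-2}$ is needed for a non-trivial solution — cf.\ Lemma~\ref{lem:7}, and the boundary case can be ruled out) yields closed forms for both sums, and substituting back into the per-coordinate equations gives exactly \eqref{eq:vopt-multi} and \eqref{eq:wopt-multi} with $\mu = (1 - s_v s_w \lambda^2)^{-1}$.

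The main obstacle I anticipate is the bookkeeping around which coordinates are active and confirming that the $2\times2$ system is indeed nonsingular on the relevant range — i.e.\ ruling out $s_v s_w \lambda^2 = 1$ for a genuine non-trivial optimum, and handling the ordering/tie cases cleanly so that "the first $s_v$" and "the first $s_w$" are unambiguous. Once the active set is pinned down, the rest is the linear-algebra computation sketched above, which is routine. I would also note that the single-output formula \eqref{eq:2b} is recovered by setting $s_v = 1$ (so $\sum_k v_k^* = v^*$), serving as a sanity check on the algebra.
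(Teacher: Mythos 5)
Your proposal is correct and follows essentially the same route as the paper: parts 1 and 2 from the monotonicity of the stationarity conditions in Lemma~\ref{lem:6}, and part 3 by summing the active-set stationarity equations to get two coupled scalar equations in $\sum_k v_k^*$ and $\sum_j w_j^*$, which the paper solves by substitution (your $2\times 2$ system with determinant $1 - s_v s_w \lambda^2$ is the same computation). Your explicit attention to the nonsingularity of that system is a small point the paper leaves implicit via the sparsity bound of Lemma~\ref{lem:7}, but it does not change the argument.
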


\begin{proof}
The two first points are direct applications of the stationary conditions of Lemma~\ref{lem:6}.

From the conditions in Lemma \ref{lem:6} we have that 
\begin{align*}
\sum_{j=1}^{m} w^*_j &=  \sum_{j=1}^{s_w} |\bar{y}_j| - \lambda s_w \sum_{k=1}^p v_k^* \\
\sum_{k=1}^{p} v^*_k &= \sum_{k=1}^{s_v} |\bar{x}_k| -\lambda s_v \sum_{j=1}^m w_j^* \\
&= \sum_{k=1}^{s_v} |\bar{x}_k| - \lambda s_v  \sum_{j=1}^{s_w} |\bar{y}_j| + \lambda^2 s_v s_w \sum_{k=1}^p v_k^* \\
&= \frac{1}{1-\lambda^2 s_v s_w} \left( \sum_{k=1}^{s_v} |\bar{x}_k| - \lambda s_v  \sum_{j=1}^{s_w} |\bar{y}_j| \right).
\end{align*}
Thus,
\begin{align*}
\sum_{j=1}^{m} w^*_j &= \sum_{j=1}^{s_w} |\bar{y}_j| - \frac{\lambda s_w}{1-\lambda^2 s_v s_w} \left( \sum_{k=1}^{s_v} |\bar{x}_k| - \lambda s_v  \sum_{j=1}^{s_w} |\bar{y}_j| \right) \\
&= \frac{1}{1-\lambda^2 s_v s_w} \left( -\lambda s_w \sum_{k=1}^{s_v} |\bar{x}_k| + \sum_{j=1}^{s_w} |\bar{y}_j| \right).
\end{align*}

Plugging the latter to  the stationarity condition on $v^*$ (given in Lemma \ref{lem:6}) then implies the result.
\end{proof}

Finally, we show, as in the single-output case, that second order stationarity condition constraints the ranges of sparsities of $v^*$ and $w^*$; this relation is given by Lemma \ref{lem:7}, and is proved next.
%\begin{lemma}[Sparsity bound]
%	\label{lem:second-order-multi}
%		Let $(v^*, w^*)\in \real_+^p\times\real_+^m$ be an optimal solution of \eqref{eq:multi-ouptut-prox} for a given $(x,y)\in \real^p \times\real^m$.
%		Denote $S_v = \{ k : v^*_k > 0 \}$ and $S_w = \{ j : w^*_j > 0 \}$.
%		Then $|S_v| |S_w| \leq \lambda^{-2}$.
%\end{lemma}
\begin{proof} [Proof of Lemma \ref{lem:7}]
	Since $(v^*, w^*)$ is an optimal solution of \eqref{eq:multi-ouptut-prox} and the objective function in \eqref{eq:multi-ouptut-prox} is twice continuously differentiable, $(v^*, w^*)$ satisfies the second order necessary optimality conditions.
	That is, for any $d\in\real^p\times\real^m$ satisfying that $ (v^*, w^*)+d \in \real_+^p\times\real_+^m$ and $d^T \nabla h_\lambda(v^*, w^*;x,y) = 0$ it holds that
	$$ d^T \nabla^2 h_\lambda(v^*, w^*;x,y)d = d^T \left( \begin{array}{ccc}
	I_{p\times p} & \Lambda_{p \times m}   \\ 
	\Lambda_{m \times p }& I_{m\times m}
	\end{array} \right) d \geq 0, $$
	where the first row/column corresponds to $v$ and the others correspond to $w$, $I$ denotes the identity matrix and $\Lambda$ denotes a matrix completely filled with $\lambda$.
	Similarly as in the single output case, we require that the submatrix of $\nabla^2 h_\lambda(v^*, w^*;x,y)$ containing the rows and columns corresponding to the positive coordinates in $(v^*, w^*)$ is positive semidefinite. 
	Since the minimal eigenvalue of this submatrix equals $ 1-\lambda\sqrt{|S_v||S_w|}$, we have that 
	$$ \lambda^{-2} \geq |S_v||S_w|.$$	
\end{proof}

A possible way of solving this proximal problem is thus to exhaustively compute the value of $h_\lambda$ at each stationary point associated with sparsities $s_v = 1, \ldots, p$, $s_w = 1, \ldots, m$ such that $s_v s_w \leq \lambda^{-2}$.
However, trying all possible pairs of sparsities $(s_v, s_w)$ is computationally costly. Similarly as is the single output case, we can exploit some structure of the objective function $h_\lambda$ in order to reduce the possible candidate pairs of sparsities.

Without loss of generality, we assume hereafter  that the vectors $x, y$ are already sorted in decreasing order of magnitude.

Lemma \ref{lem:333} shows that for each pair $(s_v, s_w)$, $s_v = 0,1,\ldots,p$, $s_w = 0,1,\ldots,m$, there exists a stationary point $(v^{(s_v, s_w)}, w^{(s_v, s_w)})$ of $h_\lambda(\cdot, \cdot; x,y)$ such that $|\{k: v^{(s_v, s_w)}_k > 0\}| = s_v$, $|\{j: w^{(s_v, s_w)}_j > 0\}| = s_w$, given by
\begin{equation}
	\label{eq:vsws-multi}
	\begin{aligned}
	v^{(s_v, s_w)}_k &= |x_k| +  \dfrac{1}{1 - s_v s_w \lambda^2}\left(\lambda^2 s_w \sum_{l=1}^{s_v} |x_l| - \lambda\sum_{j=1}^{s_w} |y_j|\right) \ \text{ for }  k=1,2,\ldots,s_v, \ \text{ and } v^{(s_v, s_w)}_k = 0 \ \text{otherwise} \\
	w^{(s_v, s_w)}_j &= |y_j| +  \dfrac{1}{1 - s_v s_w \lambda^2}\left(\lambda^2 s_v \sum_{l=1}^{s_w} |y_l| - \lambda\sum_{k=1}^{s_v} |x_k|\right) \ \text{ for }  j=1,2,\ldots,s_w, \ \text{ and } w^{(s_v, s_w)}_j = 0 \ \text{otherwise}.
	\end{aligned}
\end{equation}

We now move to prove the monotonicity property stated in Lemma \ref{lem:8}.
%\begin{lemma} \label{lem:decreasing-h-multi}
%For all integers $s_v, s_w$ satisfying $s_v s_w < \lambda^{-2}$, we have 
%\begin{align*}
%h_\lambda(v^{(s_v, s_w)}, w^{(s_v, s_w)}; x, y) &< h_\lambda(v^{(s_v, s_w-1)}, w^{(s_v, s_w-1)}; x, y), \\
%h_\lambda(v^{(s_v, s_w)}, w^{(s_v, s_w)}; x, y) &< h_\lambda(v^{(s_v-1, s_w)}, w^{(s_v-1, s_w)}; x, y).
%\end{align*}
%\end{lemma}
\begin{proof} [Proof of Lemma \ref{lem:8}]
The proof follows exactly the same lines as in the single output case. We recall the definition of the objective function:
\[
h_\lambda(v,w;x,y) \equiv \frac{1}{2} \sum_{k=1}^p (v_k - |x_k|)^2 + \frac{1}{2} \sum_{j=1}^m (w_j - |y_j|)^2 + \lambda \left(\sum_{k=1}^p v_k\right) \left(\sum_{j=1}^m w_j\right).
\]

Plugging the definitions from equation~\eqref{eq:vsws-multi}, we have 
\begin{align}
h_\lambda &\left(v^{(s_v, s_w)}, w^{(s_v, s_w)}; x, y\right) =  \frac{s_v}{2}\left( \frac{1}{1 - \lambda^2 s_v s_w}\left(\lambda^2 s_w \sum_{k=1}^{s_v} |x_k| - \lambda \sum_{j=1}^{s_w} |y_j| \right)\right)^2 + \frac{1}{2} \sum_{k=s_v + 1}^p x_k^2 \nonumber \\
& + \frac{s_w}{2}\left( \frac{1}{1 - \lambda^2 s_v s_w}\left(\lambda^2 s_v \sum_{j=1}^{s_w} |y_j| - \lambda \sum_{k=1}^{s_v} |x_k|  \right)\right)^2 + \frac{1}{2}\sum_{j=s_w + 1}^m y_j^2 \nonumber \\
& + \frac{\lambda}{(1 - \lambda^2 s_v s_w)^2} \left(\sum_{k=1}^{s_v} |x_k| - \lambda s_v \sum_{j=1}^{s_w} |y_j|\right) \left( -\lambda s_w \sum_{k=1}^{s_v} |x_k| + \sum_{j=1}^{s_w} |y_j| \right) \nonumber \\
&= \frac{1}{2(1 - \lambda^2 s_v s_w)^2} \left( \left(\sum_{k=1}^{s_v} |x_k| \right)^2 (\lambda^4 s_v s_w^2 + \lambda^2 s_w - 2\lambda^2 s_w) + \left(\sum_{j=1}^{s_w} |y_j| \right)^2 (\lambda^2 s_v + \lambda^4 s_v^2 s_w  - 2\lambda^2 s_v) \right. \nonumber \\
& \left. \left(\sum_{k=1}^{s_v} |x_k| \right)\left(\sum_{j=1}^{s_w} |y_j| \right) (-2\lambda^3 s_v s_w -2\lambda^3 s_v s_w + 2\lambda + 2\lambda^3 s_v s_w)  \right) + \frac{1}{2}\sum_{k=s_v + 1}^p x_k^2 + \frac{1}{2}\sum_{j=s_w + 1}^m y_j^2 \nonumber \\
 \label{eq:25}
&= \frac{1}{2(1-\lambda^2 s_v s_w)} \left( -\lambda^2 s_w \left(\sum_{k=1}^{s_v} |x_k| \right)^2 -\lambda^2 s_v \left(\sum_{j=1}^{s_w} |y_j| \right)^2 + 2\lambda \left(\sum_{k=1}^{s_v} |x_k| \right)\left(\sum_{j=1}^{s_w} |y_j| \right) \right) + \frac{1}{2}\sum_{k=s_v + 1}^p x_k^2 + \frac{1}{2}\sum_{j=s_w + 1}^m y_j^2 \\
\label{eq:26}
&= \left(1 + \frac{\lambda^2 s_v}{1-\lambda^2 s_v s_w}\right) \frac{1}{2(1-\lambda^2 s_v (s_w-1))} \left( -\lambda^2 (s_w-1) \left(\sum_{k=1}^{s_v} |x_k| \right)^2 - \lambda^2 \left(\sum_{k=1}^{s_v} |x_k| \right)^2 \right. \nonumber \\
&\left. -\lambda^2 s_v \left( \left(\sum_{j=1}^{s_w-1} |y_j|\right)^2 + 2\lambda |y_{s_w}| \sum_{j=1}^{s_w-1} |y_j| + y_{s_w}^2 \right) + 2\lambda \sum_{k=1}^{s_v} |x_k| \left(\sum_{j=1}^{s_w-1} |y_j| + |y_{s_w}| \right)  \right) + \frac{1}{2}\sum_{k=s_v + 1}^p x_k^2 + \frac{1}{2}\sum_{j=s_w - 1+ 1}^m y_j^2 - \frac{1}{2}y_{s_w}^2. \\
\nonumber
\end{align}

By applying equation~\eqref{eq:25} at $s_v$, $s_w-1$, we can express the right hand side of equation~\eqref{eq:26} in terms of $h_\lambda \left(v^{(s_v, s_w-1)}, w^{(s_v, s_w-1)}; x, y\right)$ as:

\begin{align*}
h_\lambda &\left(v^{(s_v, s_w)}, w^{(s_v, s_w)}; x, y\right) = h_\lambda \left(v^{(s_v, s_w-1)}, w^{(s_v, s_w-1)}; x, y\right) + \frac{1}{2(1-\lambda^2s_v (s_w -1))} \left(-\lambda^2 \left(\sum_{k=1}^{s_v} |x_k| \right)^2 \right. \\
& \left. -\lambda^2 s_v |y_{s_w}|\left(2\sum_{j=1}^{s_w-1} |y_j| + |y_{s_w}|\right)  +2\lambda |y_{s_w}|\sum_{k=1}^{s_v} |x_k|  \right ) + \frac{\lambda^2 s_v}{2(1-\lambda^2s_vs_w)(1-\lambda^2 s_v(s_w-1))} \left( -\lambda^2 s_w \left(\sum_{k=1}^{s_v} |x_k| \right)^2 \right. \\
& \left. -\lambda^2 s_v \left(\sum_{j=1}^{s_w} |y_j| \right)^2 -2\lambda \left(\sum_{k=1}^{s_v} |x_k| \right)\left(\sum_{j=1}^{s_w} |y_j| \right) \right) -\frac{1}{2} y_{s_w}^2.
\end{align*}

Therefore:
\begin{align*}
h_\lambda &\left(v^{(s_v, s_w)}, w^{(s_v, s_w)}; x, y\right) - h_\lambda \left(v^{(s_v, s_w-1)}, w^{(s_v, s_w-1)}; x, y\right) \\
&= - \frac{1}{2(1-\lambda^2 s_v(s_w-1))}  \left( -2\lambda|y_{s_w}| \left(\sum_{k=1}^{s_v} |x_k| - \lambda s_v \sum_{j=1}^{s_w} |y_j| \right) -\lambda^2s_v |y_{s_w}|^2 + \lambda^2 \left(\sum_{k=1}^{s_v} |x_k|  \right)^2 \right. \\
& \left. +\frac{\lambda^2 s_v}{1-\lambda^2 s_v s_w} \left( \lambda^2 s_w \left(\sum_{k=1}^{s_v} |x_k| \right)^2 + \lambda^2 s_v \left(\sum_{j=1}^{s_w} |y_j| \right)^2 - 2\lambda \left(\sum_{k=1}^{s_v} |x_k| \right)\left(\sum_{j=1}^{s_w} |y_j| \right) \right) + (1-\lambda^2 s_v s_w + \lambda^2 s_v)|y_{s_w}|\right) \\
&= -\frac{1}{2(1-\lambda^2 s_v(s_w-1))} \left((1 - \lambda^2s_v s_w) y_{s_w}^2 - 2\lambda |y_{s_w}|(1-\lambda^2 s_v s_w) \sum_{k=1}^{s_v} v_k^{(s_v, s_w)} + \frac{\lambda^2}{1-\lambda^2 s_v s_w} \left(\sum_{k=1}^{s_v} |x_k| - \lambda s_v \sum_{j=1}^{s_w} |y_j| \right)^2 \right) \\
&= -\frac{1-\lambda^2 s_v s_w}{2(1-\lambda^2 s_v(s_w-1))} \left(|y_{s_w}| - \lambda \sum_{k=1}^{s_v} v_k^{(s_v, s_w)} \right)^2.
\end{align*}

The second result is obtain directly by symmetry between $v$ and $w$.
\end{proof}

In order to derive an efficient algorithm , we will again exploit the monotone property of the feasibility criterion $v^{(s_v, s_w)} > 0$, $w^{(s_v, s_w)} > 0$ restated from Lemma \ref{lem:4}:
\begin{lemma}[Restatement of Lemma \ref{lem:4}]
Let $(k,l) \in [p] \times [m]$ be such that $kl \leq \lambda^{-2}$.
Suppose that 
$$v^{(k, l)} \geq 0 \text{ and } w^{(k, l)} \geq 0.$$ 
Then for any $i = 1, \ldots, k$ and any $ j = 1, \ldots, l$, it holds that
$$v^{(i, j)} \geq 0 \text{ and } w^{(i, j)} \geq 0.$$
\end{lemma}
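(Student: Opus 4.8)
The plan is to reduce everything to a single \emph{decrement} step, $(k,l)\to(k-1,l)$, and then induct: the analogous step $(k,l)\to(k,l-1)$ follows from the symmetry between the roles of $(x,v)$ and $(y,w)$, and the base cases $i=0$ or $j=0$ are the zero vectors, which are feasible. Throughout I would assume $kl<\lambda^{-2}$, so that every $\mu=(1-s_vs_w\lambda^2)^{-1}$ that appears is well defined (if $kl=\lambda^{-2}$ the candidate point is not defined anyway, and such pairs are never evaluated by Algorithm~\ref{alg:prox_binar_multi}). Given the step, iterating it downward over $i\le k$ and $j\le l$ yields the claim.

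First I would rewrite the stationarity relations of Lemma~\ref{lem:6} for the candidate point of sparsity $(s_v,s_w)$ in terms of the partial sums $T_v^{(s_v,s_w)}:=\sum_{k\le s_v}v^{(s_v,s_w)}_k$ and $T_w^{(s_v,s_w)}:=\sum_{j\le s_w}w^{(s_v,s_w)}_j$, namely $v^{(s_v,s_w)}_k=|x_k|-\lambda T_w^{(s_v,s_w)}$ for $k\le s_v$, $w^{(s_v,s_w)}_j=|y_j|-\lambda T_v^{(s_v,s_w)}$ for $j\le s_w$, together with $T_v^{(s_v,s_w)}=\sum_{k\le s_v}|x_k|-\lambda s_v T_w^{(s_v,s_w)}$ and $T_w^{(s_v,s_w)}=\sum_{j\le s_w}|y_j|-\lambda s_w T_v^{(s_v,s_w)}$. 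Since $x,y$ are sorted in decreasing magnitude and the partial sums are constant across the nonzero coordinates, the nonzero entries of $v^{(s_v,s_w)}$ are themselves sorted decreasingly; hence $v^{(s_v,s_w)}\ge 0$ is equivalent to $v^{(s_v,s_w)}_{s_v}\ge 0$, i.e.\ to $T_w^{(s_v,s_w)}\le|x_{s_v}|/\lambda$, and symmetrically $w^{(s_v,s_w)}\ge 0\iff T_v^{(s_v,s_w)}\le|y_{s_w}|/\lambda$.

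Now set $c:=1-kl\lambda^2>0$ and $c':=1-(k-1)l\lambda^2=c+l\lambda^2>0$. Eliminating $T_v$ from the two fixed-point relations gives $c\,T_w^{(k,l)}=\sum_{j\le l}|y_j|-\lambda l\sum_{k'\le k}|x_{k'}|$, and subtracting its $(k-1,l)$ analogue yields the key identity $c'\,T_w^{(k-1,l)}=c\,T_w^{(k,l)}+\lambda l\,|x_k|$. Feeding in feasibility of $(k,l)$ in the form $T_w^{(k,l)}\le|x_k|/\lambda$ and using $c+\lambda^2l=c'$, this gives $T_w^{(k-1,l)}\le|x_k|/\lambda\le|x_{k-1}|/\lambda$, i.e.\ $v^{(k-1,l)}\ge 0$. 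For the $w$ part it suffices to show $T_v^{(k-1,l)}\le T_v^{(k,l)}$, since then $w^{(k-1,l)}_l=|y_l|-\lambda T_v^{(k-1,l)}\ge|y_l|-\lambda T_v^{(k,l)}=w^{(k,l)}_l\ge 0$, whence $w^{(k-1,l)}\ge 0$ by the sorted-order characterization. Here I would substitute the identity above into $T_v^{(k,l)}-T_v^{(k-1,l)}=|x_k|-\lambda k T_w^{(k,l)}+\lambda(k-1)T_w^{(k-1,l)}$; invoking the two cancellations $c'+\lambda^2 l(k-1)=1$ and $(k-1)c-kc'=-1$, all cross-terms vanish and one is left with $T_v^{(k,l)}-T_v^{(k-1,l)}=\frac{1}{c'}\bigl(|x_k|-\lambda T_w^{(k,l)}\bigr)=\frac{1}{c'}v^{(k,l)}_k\ge 0$, again by feasibility of $(k,l)$. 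This closes the decrement step.

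The main obstacle is purely computational: arranging the two short manipulations so that the identities $c'+\lambda^2 l(k-1)=1$ and $(k-1)c-kc'=-1$ surface — these are exactly what make the quadratic terms in $\sum|x_k|$ and $\sum|y_j|$ cancel and leave a manifestly nonnegative remainder proportional to $v^{(k,l)}_k$. Everything else (the reduction to one decrement, the sorted-order equivalence, the symmetry, and the double induction over $i\le k,\ j\le l$) is routine, mirroring the single-output argument in the proof of Lemma~\ref{lem:3}.
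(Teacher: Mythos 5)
Your proof is correct and follows essentially the same route as the paper's: a downward induction on unit decrements of the sparsity pair, using the closed-form candidates, the sortedness of $x,y$ to reduce feasibility to the last nonzero coordinate, and the same algebraic cancellations (your identities $c'+\lambda^2 l(k-1)=1$ and $(k-1)c-kc'=-1$ are the ones implicitly driving the paper's equations for $v_{k-1}^{(k-1,l)}$ and $v_k^{(k,l-1)}$). The only difference is organizational — you compute both the $v$- and $w$-parts of the single decrement $(k,l)\to(k-1,l)$ via the partial sums $T_v,T_w$ and obtain $(k,l)\to(k,l-1)$ by the $v\leftrightarrow w$ symmetry, whereas the paper computes both decrements for $v$ and invokes symmetry for $w$ — which is immaterial.
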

\begin{proof}[Proof of Lemma \ref{lem:4}]
Since the first $k$ entries of $v^{(k, l)}$ are ordered in decreasing order, we have that $v^{(k, l)} \geq 0$ if and only if $v^{(k, l)}_k \geq 0$. Similarly, $w^{(k, l)} \geq 0$ if and only if $w^{(k, l)}_l \geq 0$.

Suppose that $v^{(k, l)} \geq 0$ and $w^{(k, l)} \geq 0$. By induction, in order to prove the result, it is sufficient to prove that $v^{(k-1, l)}_{k-1} \geq 0$, $v^{(k, l-1)}_k \geq 0$, $w^{(k-1, l)}_{l} \geq 0$ and $w^{(k, l-1)}_{l-1} \geq 0$. 
We only prove the result for $v$, as the proof for $w$ is identical.

Using equation~\eqref{eq:vsws-multi}, we have:
\begin{align}
\label{eq:feas-monotone-v}
(1 - kl\lambda^2) v_k^{(k,l)} &= (1-kl\lambda^2) |x_k| + \lambda^2 l \sum_{i=1}^k |x_i| - \lambda \sum_{j=1}^l |y_j| \\
&= (1-kl\lambda^2) |x_k| + (1-(k-1)l\lambda^2) |x_{k-1}| - (1-(k-1)l\lambda^2) |x_{k-1}| + \lambda^2 l\sum_{i=1}^{k-1} |x_i| + \lambda^2 l |x_k| \lambda \sum_{j=1}^l |y_j| \nonumber\\
&= (1 - (k-1)l\lambda^2) v_{k-1}^{(k-1,l)} + (1 - (k-1)l\lambda^2)(|x_k| - |x_{k-1}|) \nonumber.
\end{align}

Therefore:
\begin{equation*}
v_{k-1}^{(k-1,l)} = \frac{1 - (k-1)l\lambda^2}{1 - kl\lambda^2}v_k^{(k,l)} + |x_{k-1}| - |x_k| \geq 0,
\end{equation*}
since the vector $x$ is ordered in decreasing order of magnitude, and thus $|x_{k-1}| - |x_k| \geq 0$.

Using again equation~\eqref{eq:feas-monotone-v}, we have:
\begin{align*}
(1 - kl\lambda^2) v_k^{(k,l)} &= (1 - kl\lambda^2) |x_k| + (1 - k(l-1)\lambda^2) |x_k| - (1 - k(l-1)\lambda^2) |x_k| \\
& + \lambda^2(l-1)\sum_{i=1}^k |x_i| + \lambda^2 \sum_{i=1}^k |x_i| - \lambda \sum_{j=1}^{l-1} |y_j| - \lambda |y_l| \\
&= (1 - k(l-1)\lambda^2) v_{k}^{(k,l-1)} - k\lambda^2 |x_k| + \lambda^2 \sum_{i=1}^k |x_i| -\lambda |y_l|, % \text{   using equation ~\eqref{eq:feas-monotone-v} for $v_{k}^{(k,l-1)}$}
\end{align*}
where the last equality follows (again) from  equation ~\eqref{eq:feas-monotone-v} for $v_{k}^{(k,l-1)}$.
Thus,
\begin{equation}
\label{eq:29}
(1 - k(l-1)\lambda^2) v_{k}^{(k,l-1)} = (1 - kl\lambda^2) v_k^{(k,l)} + k\lambda^2 |x_k| - \lambda^2 \sum_{i=1}^k |x_i| + \lambda |y_l|.
\end{equation}

From the definition of $v_k^{(k,l)}$ (equation~\eqref{eq:vsws-multi}), we have that $v_k^{(k,l)} \geq 0$ is equivalent to the condition:
\[
|x_k| \geq \frac{\lambda \sum_{j=1}^l |y_j| - l\lambda^2 \sum_{i=1}^k |x_i| }{1 - kl \lambda^2}.
\]
Plugging this inequality in equation~\eqref{eq:29}, we obtain:

\begin{align}
(1 - k(l-1)\lambda^2) v_{k}^{(k,l-1)} &\geq (1 - kl\lambda^2) v_k^{(k,l)} + \frac{k\lambda^2}{1 - kl \lambda^2} \left(\lambda \sum_{j=1}^l |y_j| - l \lambda^2 \sum_{i=1}^k |x_i| \right) + \lambda |y_l| - \lambda^2 \sum_{i=1}^k |x_i| \nonumber \\
&= (1 - kl\lambda^2) v_k^{(k,l)} + \frac{\lambda}{1 - kl \lambda^2} \left(k\lambda^2 \sum_{j=1}^l |y_j| - kl \lambda^3 \sum_{i=1}^k |x_i| + (1 - kl \lambda^2) |y_l| - \lambda (1 - kl \lambda^2) \sum_{i=1}^k |x_i| \right) \nonumber \\
\label{eq:29a}
&= (1 - kl\lambda^2) v_k^{(k,l)} + \frac{\lambda}{1 - kl \lambda^2} \left(k\lambda^2 \sum_{j=1}^l |y_j| + (1 - kl \lambda^2) |y_l| - \lambda \sum_{i=1}^k |x_i| \right).
\end{align}

From the definition of $w_l^{(k,l)}$ (equation~\eqref{eq:vsws-multi}), we have that $w_l^{(k,l)} \geq 0$ is equivalent to the condition:
\begin{equation}
\label{eq:30}
(1 - kl \lambda^2) |y_l| + k\lambda^2 \sum_{j=1}^l |y_j| - \lambda \sum_{i=1}^k |x_i| \geq 0.
\end{equation}

Since the expression of equation~\eqref{eq:30} is exactly the same as the one inside the parentheses of equation~\eqref{eq:29a}, plugging this relation to~\eqref{eq:29} thus shows that $(1 - k(l-1)\lambda^2) v_{k}^{(k,l-1)} \geq 0$, i.e. $v_{k}^{(k,l-1)} \geq 0$.
\end{proof}

We now introduce the efficient procedure to compute the maximal feasibility boundary (MFB), and prove that it indeed delivers, as promised, all sparsity pairs in the MFB set.

\addtocounter{algorithm}{1}
\begin{algorithm}
	\caption{Finding sparsity pairs on the maximal feasibility boundary}
	\label{alg:max-feasibility-boundary}
	\textbf{Input:} $x \in \R^p$, $y \in \R^m$ ordered in decreasing magnitude order, $\lambda> 0$.
	\begin{algorithmic}[1]
	\State $s_v \leftarrow 0$, $s_w \leftarrow m$
	\State $S \leftarrow \emptyset$
	\State $maximal \leftarrow True$
	\While{$s_v \leq p$ and $s_w \geq 0$}
		\State Compute $v_{s_v}^{(s_v, s_w)}$ and $w_{s_w}^{(s_v, s_w)}$ as shown in equation~\eqref{eq:vsws-multi}
		\If{$v_{s_v}^{(s_v, s_w)} < 0$ or $w_{s_w}^{(s_v, s_w)} < 0$ or $s_v s_w \geq \lambda^{-2}$}
			\If{$maximal$} 
				\State $S \leftarrow S \cup  \{(s_v-1, s_w)\}$
				\State $maximal \leftarrow False$
			\EndIf
			\State $s_w \leftarrow s_w-1$
		\Else 
			\State $s_v \leftarrow s_v+1$ 
			\State $maximal \leftarrow True$
		\EndIf
	\EndWhile
	\If{$s_v == p+1$}
		\State $S \leftarrow S \cup \{(s_v-1, s_w)\}$
	\EndIf
	\State \Return S
	\end{algorithmic}
\end{algorithm}
	 
\begin{lemma}
The set $S$ returned by Algorithm~\ref{alg:max-feasibility-boundary} contains all, and only, the sparsity pairs that are on the maximal feasibility boundary.
\end{lemma}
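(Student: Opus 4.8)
The plan is to reduce everything to combinatorics on the integer grid $\{0,\dots,p\}\times\{0,\dots,m\}$. Call a pair $(s_v,s_w)$ \emph{admissible} if the candidate $(v^{(s_v,s_w)},w^{(s_v,s_w)})$ of \eqref{eq:vsws-multi} is a feasible stationary point of \eqref{eq:5}, i.e. $v^{(s_v,s_w)}\ge 0$, $w^{(s_v,s_w)}\ge 0$ and $s_v s_w\le\lambda^{-2}$; since $v^{(s_v,s_w)}$ and $w^{(s_v,s_w)}$ are sorted in decreasing order on their supports, admissibility is exactly the test $v^{(s_v,s_w)}_{s_v}\ge 0$, $w^{(s_v,s_w)}_{s_w}\ge 0$, $s_v s_w\le\lambda^{-2}$ used in the algorithm. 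Write $\mathcal{A}$ for the set of admissible pairs. I would first record three structural facts: (i) every pair on either axis, $(s_v,0)$ and $(0,s_w)$, lies in $\mathcal{A}$, so $\mathcal{A}$ is non-empty and, in particular, $r(0)=p$ below; (ii) $\mathcal{A}$ is \emph{downward closed} — if $(k,l)\in\mathcal{A}$ and $0\le i\le k$, $0\le j\le l$, then $(i,j)\in\mathcal{A}$ — which is precisely Lemma~\ref{lem:4} for the sign conditions together with the trivial monotonicity of $s_v s_w\le\lambda^{-2}$; hence (iii) the feasible columns of each row form a prefix $\{0,\dots,r(s_w)\}$ with $r(s_w)=\max\{k:(k,s_w)\in\mathcal{A}\}$, and $r$ is non-increasing in $s_w$. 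Reading Definition~\ref{def:1} with the understood requirement that $(s_v,s_w)$ itself be a stationary pair (i.e. $(s_v,s_w)\in\mathcal{A}$, as the name ``feasibility boundary'' indicates), its two bullets say exactly $(s_v+1,s_w)\notin\mathcal{A}$ and $(s_v,s_w+1)\notin\mathcal{A}$, out-of-grid pairs counting as non-admissible (which matches the algorithm's edge handling). Thus $\mathrm{MFB}=\{(r(s_w),s_w):\ s_w=m\ \text{or}\ r(s_w)>r(s_w+1)\}$, the outer corners of the staircase region $\mathcal{A}$.

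Next I would view Algorithm~\ref{alg:max-feasibility-boundary} as tracing the deterministic monotone lattice path starting at $(0,m)$ which, at each point, steps right (increments $s_v$) if the point is admissible and steps down (decrements $s_w$) otherwise, and which terminates as soon as $s_v$ exceeds $p$ (it never drives $s_w$ below $0$ because $(s_v,0)\in\mathcal{A}$). The bookkeeping variable $maximal$ obeys the invariant: $maximal$ is True iff the last step was a right-step; since a right-to-down transition occurs only when the next point is non-admissible and therefore triggers an append, every down-step is taken with $maximal=\mathrm{False}$. Consequently a pair $(a,b)$ is appended to $S$ exactly when the path steps right from an admissible $(a,b)$ into a non-admissible $(a+1,b)$, the final ``if $s_v==p+1$'' clause handling the variant $a=p$ where the path leaves through the right edge. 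The technical heart is a downward induction on $s_w$ showing that the path enters row $s_w$ at column $c(s_w)=r(s_w+1)+1$ for $s_w<m$ (and $c(m)=0$), which is always $\le r(s_w)+1$ because $r$ is non-increasing; hence in each row it reaches, the path walks the admissible prefix and appends $(r(s_w),s_w)$ precisely when it actually moves rightward there, namely when $s_w=m$ or $c(s_w)=r(s_w+1)+1\le r(s_w)$, i.e. $r(s_w)>r(s_w+1)$.

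From this both inclusions follow. For $S\subseteq\mathrm{MFB}$: an appended $(a,b)$ has $(a,b)\in\mathcal{A}$ and ($(a+1,b)\notin\mathcal{A}$ or $a=p$) by construction, while $(a,b+1)\notin\mathcal{A}$ because the path entered row $b$ at a column $c(b)\le a$ at which row $b+1$ was already non-admissible, so downward closure forces $(a,b+1)\notin\mathcal{A}$ too (the case $b=m$ being vacuous). For $\mathrm{MFB}\subseteq S$: given $(r(b),b)$ on the MFB, if $b=m$ the path walks the whole admissible prefix of row $m$ from $(0,m)$ and appends $(r(m),m)$; if $b<m$ then $r(b)>r(b+1)$, and since $r$ is non-increasing this forces $r(b')<p$ for all $b'>b$, so the path cannot exit through the right edge above row $b$ and therefore descends into row $b$, entering at $c(b)=r(b+1)+1\le r(b)$, walking to $(r(b),b)$, and appending it (or exiting and appending $(p,b)$ via the final clause when $r(b)=p$).

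The main obstacle I expect is exactly the bookkeeping invariant on $maximal$ together with the behaviour of the path on maximal runs of non-admissible rows: one must argue that falling straight through such a block does not spuriously append anything (so $maximal$ stays $\mathrm{False}$ throughout a down-run, which in turn uses that the first down-step of the run is forced by a non-admissible point and hence preceded by an append) and that $maximal$ is True at precisely the instant the path first hits a non-admissible point after turning right. Once that invariant and the entry-column identity $c(s_w)=r(s_w+1)+1$ are established, the remainder is the staircase combinatorics above, resting only on the already-proven monotonicity Lemma~\ref{lem:4} and the definition of $r$.
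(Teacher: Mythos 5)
Your proof is correct and follows essentially the same route as the paper's: both establish the two inclusions by tracking the algorithm's monotone lattice path through the grid and invoking the downward-closure of the feasibility criterion (Lemma~\ref{lem:4}). Your explicit staircase bookkeeping via $r(s_w)$ and the entry-column identity $c(s_w)=r(s_w+1)+1$ is a cleaner packaging of the paper's more informal case analysis (which, incidentally, repeatedly miscites the feasibility-monotonicity lemma as Lemma~\ref{lem:8}), but the underlying argument is the same.
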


\begin{proof}
	First recall that the MFB is defined as all pairs $(s_v,s_w) \in \{0,\ldots,p\} \times \{0,\ldots,m\}$ satisfying the conditions:
\begin{enumerate}
	\item $v^{(s_v, s_w)}_{s_v} > 0$ and $w^{(s_v, s_w)}_{s_w} > 0$ and $s_v s_w \leq \lambda^{-2}$,
	
	\item $v^{(s_v + 1, s_w)}_{s_v + 1} \leq 0$ or $w^{(s_v + 1, s_w)}_{s_w} \leq 0$ or $(s_v+1)s_w > \lambda^{-2}$ or $s_v = p$,
	
	\item $v^{(s_v, s_w+1)}_{s_v} \leq 0$ or $w^{(s_v, s_w+1)}_{s_w+1} \leq 0$ or $s_v(s_w+1) > \lambda^{-2}$ or $s_w = m$.
\end{enumerate}

Algorithm \ref{alg:max-feasibility-boundary} plays on the properties of  \textit{feasibility}-\textit{infeasibility} of the sparsity levels to build the MFB.
We say that a pair of the sparsity levels of $v$ and $w$ $(s_v, s_w)$ is \textit{feasible} if  $v_{s_v}^{(s_v, s_w)} \geq 0$, $w_{s_w}^{(s_v, s_w)} \geq 0$ and $s_v s_w < \lambda^{-2}$, and denote this by the property $P(i, j)$, i.e. 
$$(i,j) \text{ is feasible } \Leftrightarrow  P(i, j).$$ 

Our claim can be read as:  Let $(i, j)\in \{0,\ldots,p\} \times \{0,\ldots,m\}$, then $(i, j)$ is added to $S$ by Algorithm~\ref{alg:max-feasibility-boundary} if and only if $(i, j)$ belongs to the MFB, i.e.,
\[
(i, j) \in \text{MFB} \Leftrightarrow (i, j) \in S.
\]
Obviously, only feasible sparsity pairs belong to the MFB, and it is quite easy to see that only feasible sparsity pairs will belong to an output $S$ of Algorithm~\ref{alg:max-feasibility-boundary}.
Indeed, Algorithm~\ref{alg:max-feasibility-boundary} monotonically decrements $s_w$ starting from $s_w = m$ and increments $s_v$ starting from $s_v = 0$. For each value of $s_w$, it increases $s_v$ while the current pair $(s_v, s_w)$ is feasible (lines $12-15$). Once it reaches an infeasible point $(i,s_w)$, and in the case where $s_v$ has been increased at least once for this particular value of $s_w$, it adds to $S$ the pair encountered just before, i.e., $(i-1, s_w)$, and then decrements $s_w$ (lines $6-11$).

We first prove the $\Rightarrow$ statement. Suppose that some pair $(i, j)$ belongs to the MFB. Let us first leave aside the corner cases, and assume that $i < p$ and $j < m$.

Suppose first that $s_w$ reaches $j$ before $s_v$ reaches $i$, i.e., $s_v < i$. Since the pair $(i,j)$ is feasible, and due to the monotonicity property of the feasibility condition (Lemma~\ref{lem:8}), all pairs $(k,s_w)$ with $k \leq i$ must be feasible. Therefore, $s_v$ will be increased until reaching $i+1$. By definition of the MFB, the pair $(i+1, j)$ must be infeasible. Since $s_v$ has necessarily been increased at least once for this value of $s_w=j$, and so the pair $(i+1-1, j) = (i,j)$ will be added to $S$ before decrementing $s_w$.

In the special case where $i = p$, no infeasible point will be found. The loop will thus finish with $s_w = j$ and $s_v = p+1$. The condition at line $17$ will thus hold, and the pair $(p,j)$ will be added to $S$.

Suppose now that $s_v$ reaches $i$ before $s_w$ reaches $j$, i.e., $s_w > j$. Since $(i,j)$ is in the MFB, then the pair $(i,j+1)$ must be infeasible. Thanks to the monotonicity property of the feasibility condition (Lemma~\ref{lem:8}), all pairs $(s_v,k)$ with $k \geq i$ must also be infeasible. Therefore, $s_w$ will be decreased until reaching $s_w = j$. Then, similarly as in the previous case, since $(i,j)$ is feasible, $s_v$ will be increased, and the pair $(i,j)$ added to $S$.

We now prove the $\Leftarrow$ statement. We show that if $(i,j)$ is added to $S$, then it must belong to the MFB, i.e., it satisfies all three properties recalled in the beginning of the proof.

Let us first show that for each pair $(s_v, s_w)$ encountered during the algorithm, the pair $(s_v-1, s_w)$ is always feasible (or $s_v = 0$). 
%We call $P(s_v, s_w)$ the property: "The pair $(s_v-1, s_w)$ is feasible". 
We can show that this property is conserved each time the algorithm either increases $s_v$ or decreases $s_w$. First note that the pair $(0, m)$ is always feasible. The algorithm will then necessarily first goes to the pair $(1, m)$ and $P(1,m)$ is true. Then suppose that $P(s_v, s_w)$ is true for some pair $(s_v, s_w)$ encountered during the algorithm. Then, if $s_v$ is increases, it means that the pair $(s_v, s_w)$ is feasible. The next encountered pair is then $(s_v+1, s_w)$ and $P(s_v+1, s_w)$ is true. On the other hand, suppose that $s_w$ is decreased. The next encountered pair is thus $(s_v, s_w-1)$. Since $P(s_v, s_w)$ is true, it means that $(s_v-1, s_w)$ is feasible. By Lemma~\ref{lem:8}, it implies that $(s_v-1, s_w-1)$ is also feasible, and thus $P(s_v, s_w-1)$ is true. We thus proved that $P(s_v, s_w)$ is true for any pair $(s_v, s_w)$ encountered during the algorithm. Therefore, since any pair added to $S$ is of the form $(s_v-1, s_w)$ for some pair $(s_v, s_w)$ encountered during the algorithm, then any pair added to $S$ must be feasible.

The second property of the MFB is straightforward to show. Indeed, if $(i-1, j)$ is added to $S$, it means that the pair $(i,j)$ is infeasible due to condition on line $6$. 

Finally, the third property follows from the fact that, when reaching $s_w=j$, $s_v$ must be increased at least once for adding a pair of the form $(i,j)$ to $S$. Let $s_v^{(j)}$ be the value of $s_v$ when the algorithm reaches $s_w = j$. We necessarily have $s_v^{(j)} \leq i$. This implies that the pair $(s_v^{(j)}, j+1)$ is infeasible, otherwise $s_v$ would have been increased to a greater value at the previous value $s_w = j+1$. By Lemma~\ref{lem:8}, and since $s_v^{(j)} \leq i$ this implies that the pair $(i,j+1)$ is also infeasible, hence the result.

%Now, from the monotone property of the feasibility criterion (Lemma~\ref{lem:4}), it is clear that for each value of $s_w$, there exists at most one pair $(s_v, s_w)$ for some $s_v \in \{0, \ldots, p\}$ that belongs to the MFB.\\

\if 0
The algorithm monotonically decrements $s_w$ starting from $s_w = m$ and increments $s_v$ starting from $s_v = 0$. For each value of $s_w$, the algorithm increments $s_v$ either $0$ or multiple times. Let $s_v^{(k)}$ be the first value of $s_v$ immediately after $s_w$ has been decremented to $k$. Since $s_v$ is never decreased and $s_w$ is never increased, we have that $0 = s_v^{(m)} \leq s_v^{(m-1)} \leq \ldots$. We also say that a pair $(s_v, s_w)$ is \textit{feasible} if and only if $v_{s_v}^{(s_v, s_w)} \geq 0$, $w_{s_w}^{(s_v, s_w)} \geq 0$ and $s_v s_w < \lambda^{-2}$.

For each $k \in \{0,\ldots, m\}$, we will show by standard induction the property $P(k)$: "For $s_w = k$, then, if and only if there exists a pair $(s_v, k)$ for some $s_v \in \{0, \ldots, p\}$ that belongs to the MFB, the algorithm will add this pair to $S$".

\paragraph{Initialization} Let us prove the result for $k = m$. 
First note that for each value $s_w$,  $(0, s_w)$  is a feasible pair. 
Also note that there always exists a pair $(s_v, m)$ which belongs to the MFB for some value of $s_v \in \{0, \ldots, p\}$. Indeed, there necessarily exists a feasible pair $(s_v, m)$ such that either $(s_v+1, m)$ is infeasible or $s_v = p$, making such a pair belong to the MFB since the three conditions above would be satisfied. 
The algorithm starts from $s_w=m$ and increments $s_v$ until it finds a non feasible pair $(s_v, m)$. 
If it finds one, since the boolean $maximal$ is initially set to $True$, it adds to $S$ the pair encountered just before, i.e., $(s_v-1, m)$, which is in the MFB by definition. If it does not find one, then the loop ends with $s_v = p+1$, so that the condition of line $15$ of the algorithm holds, and we thus add the pair $(p,m)$ (line $16$ of the algorithm) which must then be in the MFB. Therefore, $P(m)$ is true.

\paragraph{Recursion}
Suppose that $\exists k \in \{1,\ldots,m\}$ such that the property $P(i)$ holds true for all $i \in \{k,\ldots,m\}$, and let us prove that $P(k-1)$ holds true as-well. 
First observe that each time $s_w$ is decreased to $k$, the boolean $maximal$ is necessarily $False$ because of the condition of line $7$ of the algorithm which sets $maximal$ to $False$ (line $10$) if it was $True$ before. Moreover, since each iteration ends at an infeasible pair, the pair $(s_v^{(k)}, k+1)$ is infeasible and the pair $(s_v^{(k)} -1,k+1)$ is feasible. The algorithm then again increments $s_v$ until it finds a non feasible pair, giving three possible scenarios:
\begin{enumerate}
\item The current pair $(s_v^{k}, k)$ is already infeasible. In this case, $s_v$ is not incremented and the boolean $maximal$ remains $False$. Thus, $s_w$ is decremented, and no pair is added to $S$. Moreover, since $(s_v^{(k)} -1,k+1)$ is feasible, and due to the monotonicity of the feasibility criterion, the pairs $(s,k)$ for $s < s_v^{(k)} -1$ are not in the MFB. Moreover, the pairs $(s,k)$ for $s \geq s_v^{(k)} -1$ are infeasible and thus not in the MFB. Thus, there exists no value of $s_v \in \{0, \ldots, p\}$ such that the pair $(s_v, k)$ belongs to the MFB. So $P(k-1)$ is true.
\item $s_v$ is increased at least once and the algorithm finds a non feasible pair $(s_v^{(k+1)}, k)$ with $s_v^{(k+1)} > s_v^{(k)}$. Since $s_v$ has been increased, the boolean $maximal$ is set to $True$, and the pair $(s_v^{(k+1)}-1, k)$ is added to $S$. Thanks to the monotonicity of the feasibility criterion, since $(s_v^{(k)}, k+1)$ is infeasible, then $(s_v^{(k+1)}-1, k+1)$ must also be infeasible. Therefore, since both $(s_v^{(k+1)}, k+1)$ and $(s_v^{(k+1)}, k)$, then $(s_v^{(k+1)}-1, k)$ belongs to the MFB, and this is indeed the pair that is added to $S$. So $P(k-1)$ is true.
\item The algorithm does not find any feasible point. Then again, the loop ends with $s_v = p+1$, so that the condition of line $17$ of the algorithm holds, and we thus add the pair $(p,m)$ (line $18$ of the algorithm) which belongs to the MFB. Again, this implies that $P(k-1)$ is true.
\end{enumerate}

We thus proved that for each value of $s_w$, the algorithm will add to $S$ the unique pair $(s_v, s_w)$ for some $s_v \in \{0, \ldots, p\}$ belonging to the MFB in the case where such a pair exists. Since the algorithm tests all values of $s_w \in \{0, \ldots, m\}$, this proves the result.
\fi
\end{proof}

\paragraph{Time complexity of Algorithm~\ref{alg:max-feasibility-boundary}} At each iteration of the loop, either $s_v$ is incremented by $1$ or $s_w$ is decremented by $1$. Since $s_v$ starts from $0$ and $s_w$ from $m$, and that the stopping criterion is $s_v > p$ or $s_w < 0$, it follows that the maximal number of iterations inside the loop is $m+p$. At each iteration, we must compute $v_{s_v}^{(s_v, s_w)}$ and $w_{s_w}^{(s_v, s_w)}$, which requires in particular to compute $\sum_{k=1}^{s_v} |x_k|$ and $\sum_{j=1}^{s_w} |y_j|$. However, these cumulative sums can be efficiently computed before the loop in time $\mathcal{O}(m+p)$, so that computing $v_{s_v}^{(s_v, s_w)}$ and $w_{s_w}^{(s_v, s_w)}$ inside the loop can be done in constant time. The overall complexity of this algorithm is thus $\mathcal{O}(m+p)$.

Moreover, we can see that each time we add a pair to $S$, we must both decrement $s_w$ by $1$ (just after adding the element in the algorithm), and increment $s_v$ by $1$ (in order for the boolean $maximal$ to become true again). Therefore, there can be at most $\min(m,p)$ pairs in the final set $s$ at the end of the algorithm.

Merging all previous results, we can finally prove Theorem~\ref{thm:1m}.

\begin{proof}[Proof of Theorem~\ref{thm:1m}]
Thanks to the separability argument, it is sufficient to prove that Algorithm~\ref{alg:prox_binar_multi} returns a solution of problem~\eqref{eq:5}.

Lemma~\ref{lem:13} states that given the number of nonzero elements $s_v = |\{k:v_k^* > 0\}|$, $s_w = |\{j:w_j^* > 0\}|$, the optimal solution $(v^*, w^*)$ can be obtained in close form (equations~\eqref{eq:vopt-multi},~\eqref{eq:wopt-multi}).

Due the monotonicity property of the objective function $h_\lambda$ (Lemma~\ref{lem:8}), it follows that the sparsity pair $(s_v, s_w)$ of the optimal solution must lie on the MFB. Indeed, if it does not lie on the MFB, then it means that the candidate solution associated with either the sparsity pair $(s_v+1, s_w)$ or $(s_v, s_w+1)$ must be feasible. According to Lemma~\ref{lem:8}, this pair would then yield a lower value of $h_\lambda$, and would then be a better solution.

Algorithm~\ref{alg:prox_binar_multi} computes the candidate solution associated with all sparsity pair lying on the MFB, and returns the one achieving the lowest value of $h_\lambda$. Therefore, the returned solution must necessarily be the optimal solution.
\end{proof}

\section{Experimental details and other plots}
\label{app:exp_details}
We consider the following values for the parameters that determine the training loop:
\begin{itemize}
    \item batch size: \texttt{100}
    \item epochs: \texttt{20}
    \item learning rate: \texttt{1e-1, 1e-2, 1e-3, 1e-4, 5e-1, 5e-2, 5e-3, 5e-4}
    \item dataset: \texttt{mnist, fmnist, kmnist}
    \item hidden neurons: \texttt{200}
    \item lambda ($\lambda$): \texttt{0., 1e-5, 1e-4, 1e-3, 1e-2, 1e-1, 1e0, 1e1, 1e2,
               2e-5, 2e-4, 2e-3, 2e-2, 2e-1, 2e0, 2e1, 2e2,
               3e-5, 3e-4, 3e-3, 3e-2, 3e-1, 3e0, 3e1, 3e2,
               4e-5, 4e-4, 4e-3, 4e-2, 4e-1, 4e0, 4e1, 4e2,
               5e-5, 5e-4, 5e-3, 5e-2, 5e-1, 5e0, 5e1, 5e2}
\end{itemize}

The $\ell_\infty$-bounded adversarial examples used to evaluate the robustness
of the networks were generated using the PGD method described in
\citep{Madry2018} and implemented in the \textit{advertorch} toolbox
(\url{https://github.com/BorealisAI/advertorch}) using the following
parameters:
\begin{itemize}
    \item epsilon: \texttt{0.05, 0.1, 0.15, 0.2, 0.25, 0.3}
    \item iterations: 40
    \item step size: epsilon / 20
    \item random initialization: \texttt{True}
\end{itemize}

\subsection{sparsity per iteration}
One advantage of the proximal mapping of the 1-path-norm and the $\ell_1$-norm
is that they can set many weights to exactly zero. This has the effect of
providing sparse networks from early iterations. This is in contrast to SGD
with a constant stepsize which does not generate sparse iterates. In Figures 4, 5, 6 and 7
we plot the percentage of nonzero weights as a
function of the iteration count, for both plain SGD and proximal SGD. We observe that in fact
this is the case, and that the sparsity of the $\ell_1$ and 1-path-norm regularized network
can be controlled with the regularization parameter $\lambda$.
\begin{figure*}
    \centering
    \label{fig:sparse_vs_iter_path_fmnist}
    \includegraphics[width=0.95\textwidth]{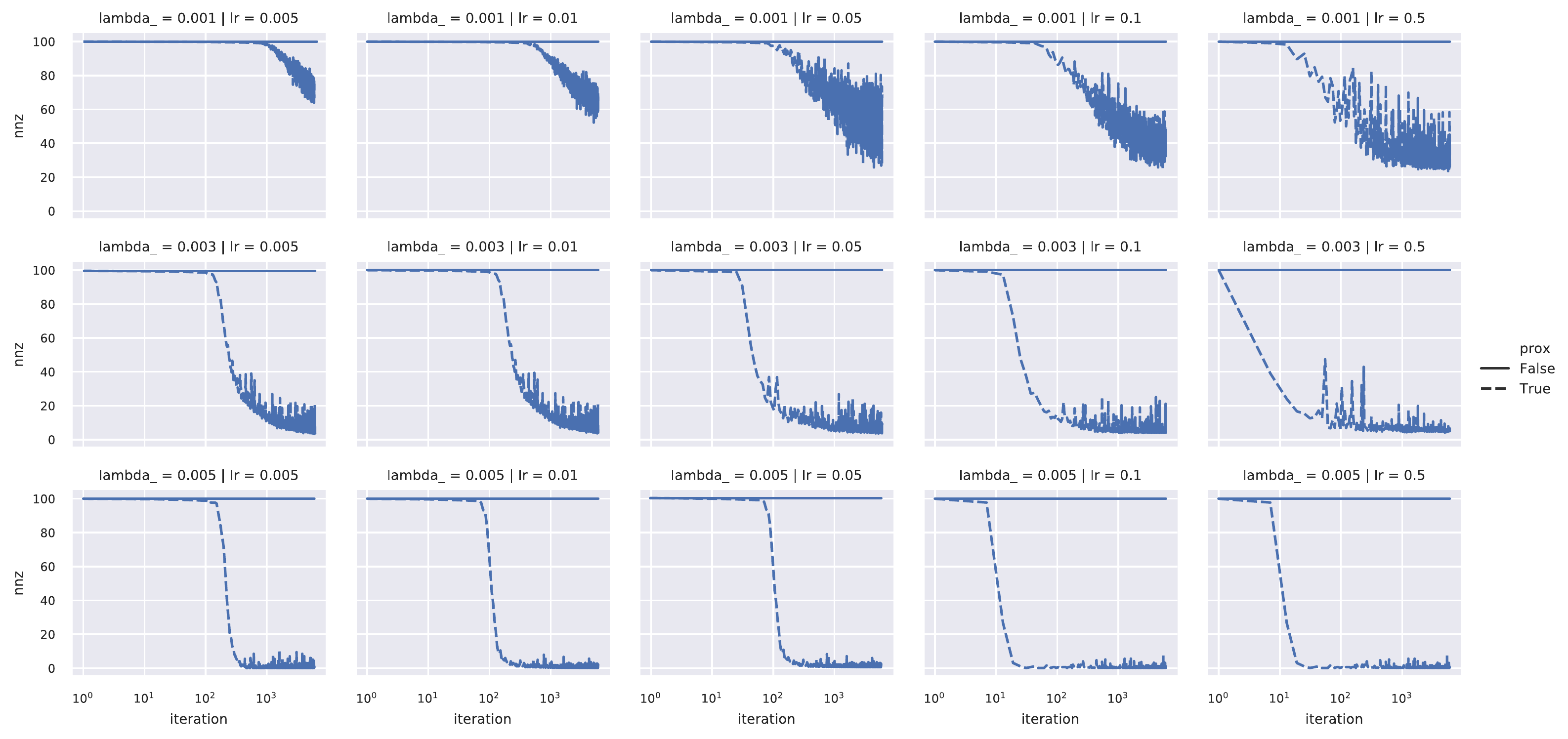}
    \caption{percentage of nonzero weights in the network, as a function of iteration count (path regularization - fmnist dataset).}
\end{figure*}
\begin{figure*}
    \centering
    \label{fig:sparse_vs_iter_path_kmnist}
    \includegraphics[width=0.95\textwidth]{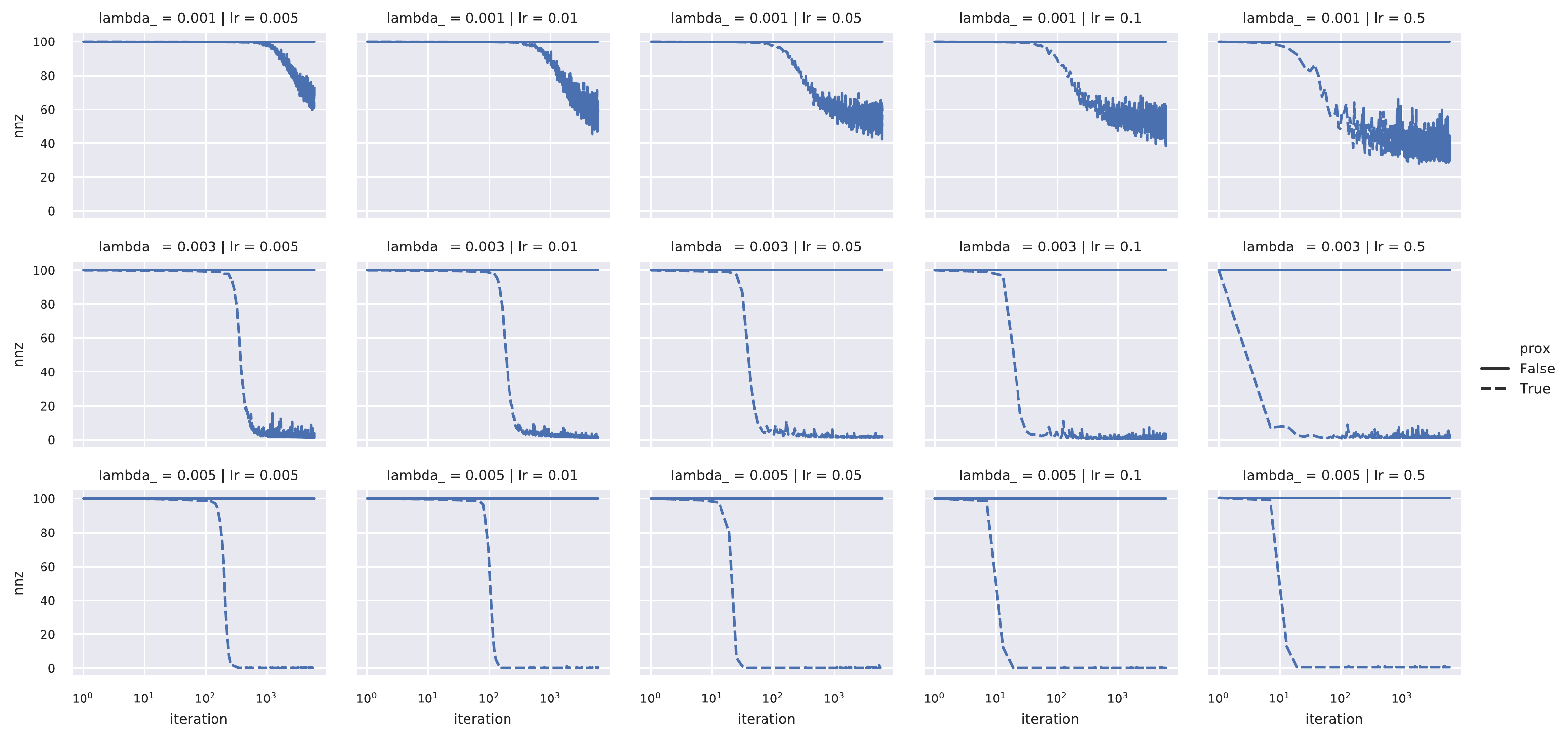}
    \caption{percentage of nonzero weights in the network, as a function of iteration count (path regularization - kmnist dataset).}
\end{figure*}
\begin{figure*}
    \centering
    \label{fig:sparse_vs_iter_l1_fmnist}
    \includegraphics[width=0.95\textwidth]{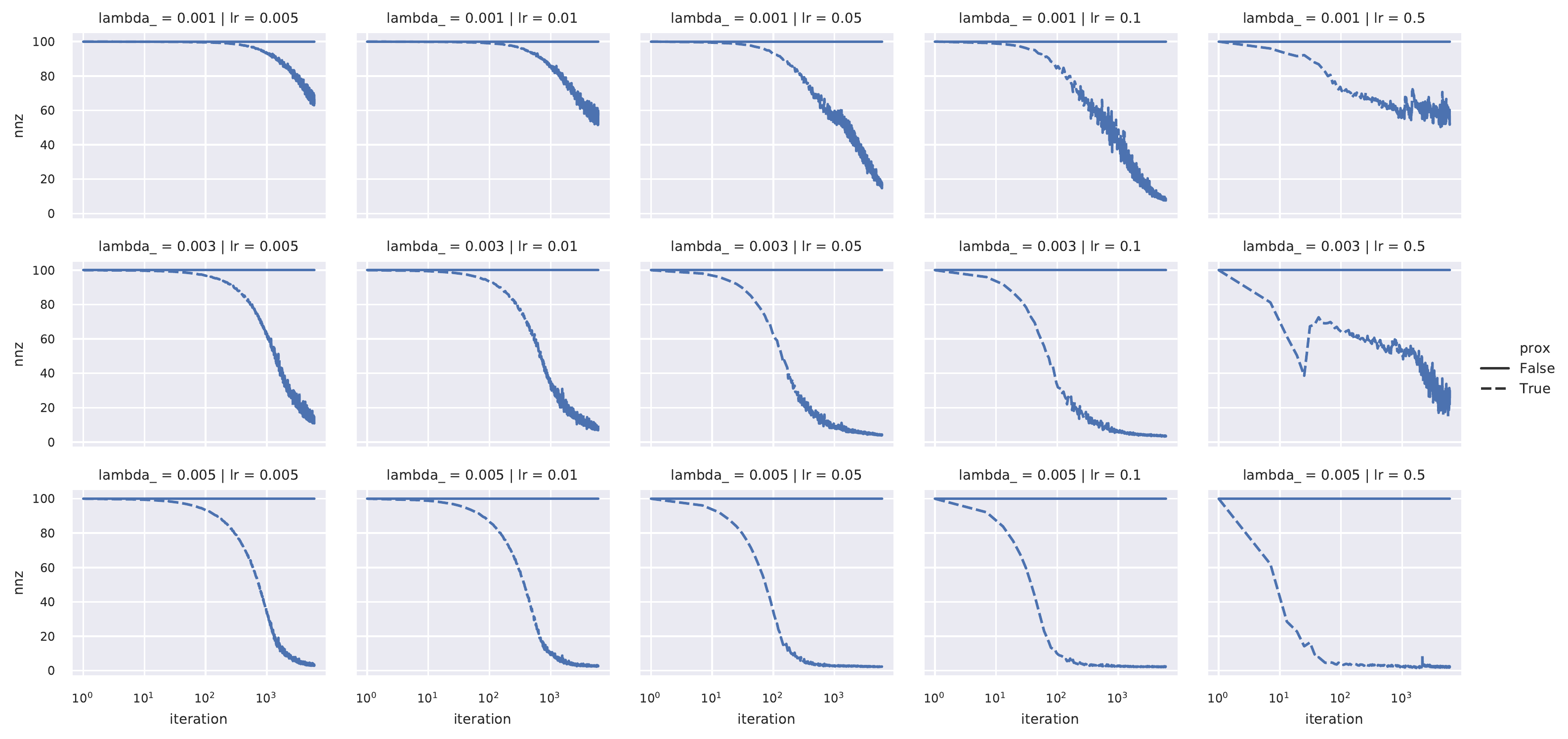}
    \caption{percentage of nonzero weights in the network, as a function of iteration count ($\ell_1$ regularization - fmnist dataset).}
\end{figure*}
\begin{figure*}
    \centering
    \label{fig:sparse_vs_iter_l1_kmnist}
    \includegraphics[width=0.95\textwidth]{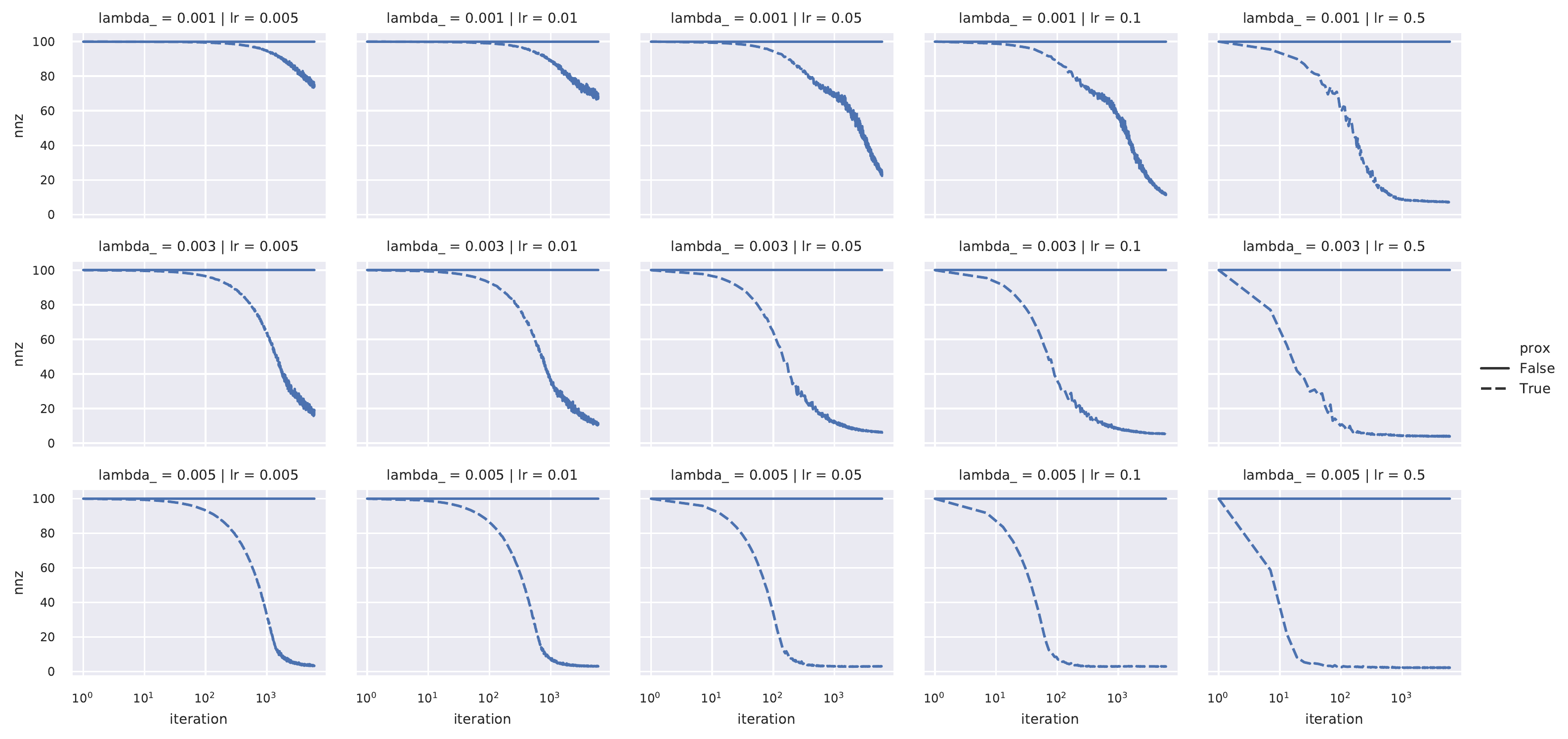}
    \caption{percentage of nonzero weights in the network, as a function of iteration count ($\ell_1$ regularization - kmnist dataset).}
\end{figure*}

\subsection{Robustness vs accuracy tradeoff}
For all possible values of $\lambda$, in Figure 8 we plot
the data corresponding to the lerning rate with least error. We plot the value
of the error on clean samples and the error on adversarial examples. This
allows us to understand the tradeoff between accuracy and robustness that is
controlled by the regularization paramter $\lambda$.
\begin{figure*}
    \centering
    \label{fig:error_vs_robust}
    \includegraphics[width=0.95\textwidth]{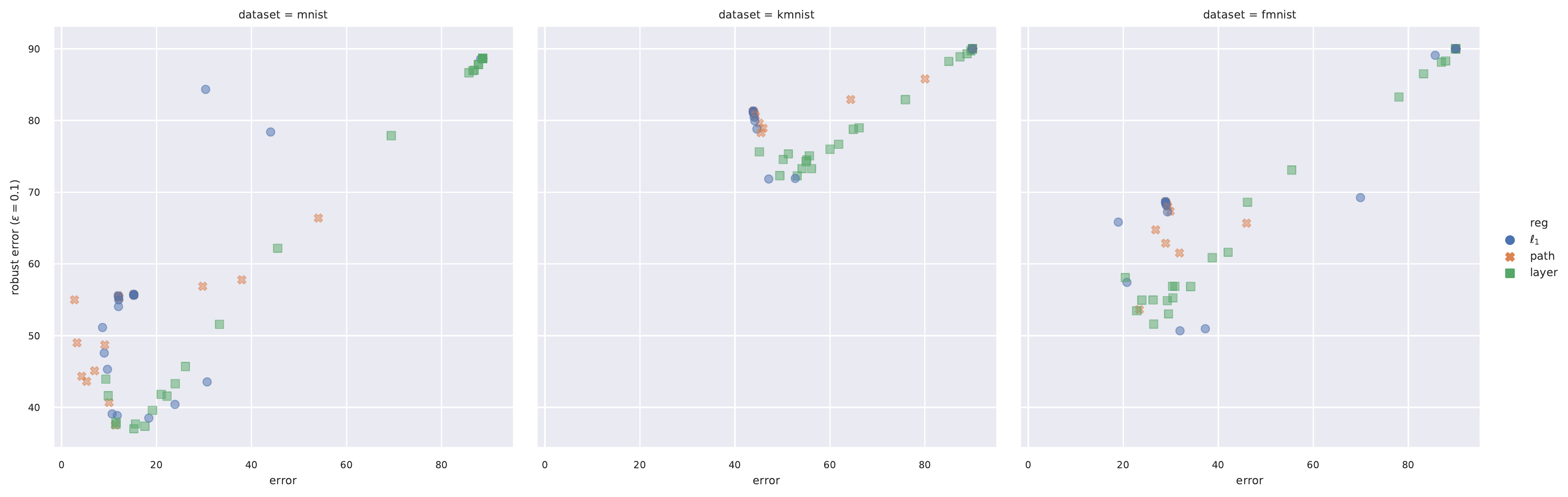}
    \caption{Robustness vs accuracy tradeoff for the different regularizers studied.}
\end{figure*}

%\section{Various}
%\label{app:various}
%\input{sections/appendix_various.tex}

\end{document}